\tikzset{latent/.style={circle,fill=white,draw=black,inner sep=1pt,
minimum size=20pt, font=\fontsize{10}{10}\selectfont},
obs/.style={latent,fill=gray!25},
const/.style={rectangle, inner sep=0pt},
factor/.style={rectangle, fill=black,minimum size=5pt, inner sep=0pt},
>={latex},
color={blue},
draw={violet},
}
\def\[#1\]{\begin{align}#1\end{align}}
\newcommand{\defas}{\vcentcolon=}  
\newcommand{\given}{\mid}
\newcommand{\AND}{\wedge}
\newcommand{\dee}{\mathrm{d}}
\newcommand{\Nats}{\mathbb{N}}
\newcommand{\Ints}{\mathbb{Z}}
\newcommand{\NNInts}{\Ints_+}
\newcommand{\equaldist}{\overset{d}{=}}
\renewcommand{\det}[1]{\vert #1 \vert}
\newcommand{\dist}{\ \sim\ }
\newcommand{\Reals}{\mathbb{R}}
\newcommand{\iid}{i.i.d.}
\newcommand{\Tcal}{\mathcal{T}}
\newcommand{\Scal}{\mathcal{S}}
\newcommand{\Ncal}{\mathcal{N}}
\newcommand{\Rcal}{\mathcal{R}}
\newcommand{\Vcal}{\mathcal{V}}
\newcommand{\Bcal}{\mathcal{B}}
\newcommand{\EE}{\mathbb{E}}
\renewcommand{\Pr}{\mathbb{P}}
\newcommand{\tr}{\text{tr}}
\newcommand{\bfn}{\lambda_r}
\newcommand{\sfn}{\lambda_s}
\newcommand{\betadist}{\text{beta}}
\newcommand{\gammadist}{\text{gamma}}
\newcommand{\betabinom}{\text{beta-binomial}}
\newcommand{\soffset}{\theta_s}
\newcommand{\boffset}{\theta_r}
\DeclarePairedDelimiter{\ceil}{\lceil}{\rceil}
\newcommand{\numstop}{n_s}
\newcommand{\numbranch}{n_r}
\newcommand{\cardsharp}{\#}
\newcommand{\numfeat}[1]{K_{#1}}
\newenvironment{noindlist}
 {\begin{list}{\labelitemi}{\leftmargin=0.5cm \itemindent=0em}}
 {\end{list}}
\newtheorem{thm}{Theorem}
\begin{document}

\title{Beta diffusion trees and hierarchical feature allocations}

\author{\name Creighton Heaukulani{$^\dagger$} \email ckh28@cam.ac.uk \\
       \name David A.\ Knowles{$^*$} \email davidknowles@cs.stanford.edu \\
       \name Zoubin Ghahramani$^\dagger$ \email zoubin@eng.cam.ac.uk
       \AND
       \addr Department of Engineering{$^\dagger$} \\
       University of Cambridge\\
       Cambridge, CB2 1PZ, UK
       \AND
       \addr Department of Computer Science$^*$ \\
       Stanford University\\
       Stanford, CA, 94305-9025, USA}

\editor{Somebody}

\maketitle

\begin{abstract} 
We define the \emph{beta diffusion tree}, a random tree structure with a set of leaves that defines a collection of overlapping subsets of objects, known as a \emph{feature allocation}.
A generative process for the tree structure is defined in terms of particles (representing the objects) diffusing in some continuous space, analogously to the \emph{Dirichlet diffusion tree} \citep{NealDDT2003}, which defines a tree structure over partitions (i.e., non-overlapping subsets) of the objects.
Unlike in the Dirichlet diffusion tree, multiple copies of a particle may exist and diffuse along multiple branches in the beta diffusion tree, and an object may therefore belong to multiple subsets of particles.
We demonstrate how to build a hierarchically-clustered factor analysis model with the beta diffusion tree and how to perform inference over the random tree structures with a Markov chain Monte Carlo algorithm.
We conclude with several numerical experiments on missing data problems with data sets of gene expression microarrays, international development statistics, and intranational socioeconomic measurements.
\end{abstract} 

\begin{keywords}
latent feature models, hierarchical clustering, branching process, Bayesian nonparametrics, Markov chain Monte Carlo
\end{keywords}

%

\section{Introduction}

Latent feature models assume that each object (from some collection of objects) can be assigned to zero or more overlapping sets, called \emph{features}.
These objects could be the data measurements themselves, or some other unobserved variables in a model.
The overlapping sets assumption in feature allocation models is appropriate for a variety of statistical tasks.
For example, in visual scene analyses, images could be assigned to the following features: ``image contains a chair'', ``image contains a table'', ``image is of a kitchen'', etc.
Clearly these features are not mutually exclusive.
The \emph{Indian buffet process} \citep[IBP;][]{GG2006, GGS2007} defines a generative process for a collection of features (called a \emph{feature allocation}), the number of which is unbounded.
With the IBP, objects are (or are not) assigned to a feature with a feature-specific probability that is independent of the other features.  In the scene example, however, the features are structured into a hierarchy: tables and chairs are likely to appear in scenes together, and if the scene is of a kitchen, then possessing both tables and chairs is highly probable.
In order to model hierarchically related feature allocations, we define the \emph{beta diffusion tree}, a random tree structure with a set of leaves that defines a feature allocation for a collection of objects.  As with the IBP, the number of leaves (features) is random and unbounded, but will be almost surely finite for a finite set of objects.

\subsection{Diffusion trees, partitions, and feature allocations}
\label{sec:diffusion_trees}

Models for hierarchically structured \emph{partitions}, i.e., non-overlapping subsets, of a collection of objects can be accomplished with the \emph{Dirichlet diffusion tree} \citep{NealDDT2003}, in which a collection of $N$ particles (each labeled with an object) diffuse in a continuous space $\mathcal X$ over a fictitious unit time interval $t\in [0,1]$.
In this work, we take $\mathcal X = \Reals^D$ for some $D\ge 1$, and we denote the locations of the particles in $\mathcal X$ at time $t$ by $\bm x_1(t), \dotsc, \bm x_N(t)$.
The particles may follow any diffusion process, though we typically choose one that allows us to analytically integrate out the particular paths taken by the particles.
For example, if the diffusion paths are distributed as Brownian motions with variance $\sigma_X^2$, then a particle at position $\bm x (t)$ in $\Reals^D$ at time $t$ will reach position $\bm x(t+\dee t)$ at time $t+\dee t$, with
\[
\bm x ( t+\dee t) \given \bm x (t)
	 \dist \Ncal ( \bm x (t) , \sigma_X^2 \bm I_D \dee t )
	 ,
	 \label{eq:brownian_motion}
\]
where $\bm I_D$ denotes the $D\times D$ identity matrix and $\Ncal( \bm \mu, \bm \Sigma )$ denotes a Gaussian distribution with mean $\bm \mu$ and covariance matrix $\bm \Sigma$.
Note that the ``time'' index $t$ need not refer to actual time, but is merely an index used to generate a hierarchically structured partition of the objects.
It is important to note that the tree structure does not depend on the choice of diffusion process.

In the Dirichlet diffusion tree, each particle starts at the origin, follows the path of previous particles, and \emph{diverges} from the path at randomly distributed times.
Once the particle has diverged, it diffuses (independently of all previous particles) until time $t=1$.
In particular, let $\alpha > 0$, which we call the \emph{divergence parameter}.  The first particle $\bm x_1(t)$ starts at the origin $\bm x_1(0) = \bm 0$ and diffuses until time $t=1$.
Each subsequent particle $\bm x_2(t), \dotsc, \bm x_N(t)$ starts at the origin and follows the path of the previous particles.
If at time $t>0$ the particle has not diverged from the path, it diverges in $[t, t+\dee t]$ with probability
\[
\frac{ \alpha }{ m }\, \dee t
	,
	\label{eq:DDTrate}
\]
where $m$ is the number of particles previously along this path.
In other words, the time until diverging is exponentially distributed with rate $\alpha/m$.
If the particle does not diverge before reaching a previous divergence point on the path (where at least one previous particle diverged), then at the divergence point it follows one of the two paths with probability proportional to the number of particles that have followed that path previously.
We stop this process at $t=1$ and record the positions of the particles in $\mathcal X$.
\begin{figure}
\centering
\subfigure[A simulated Dirichlet diffusion tree]{
                \includegraphics[scale=.3]{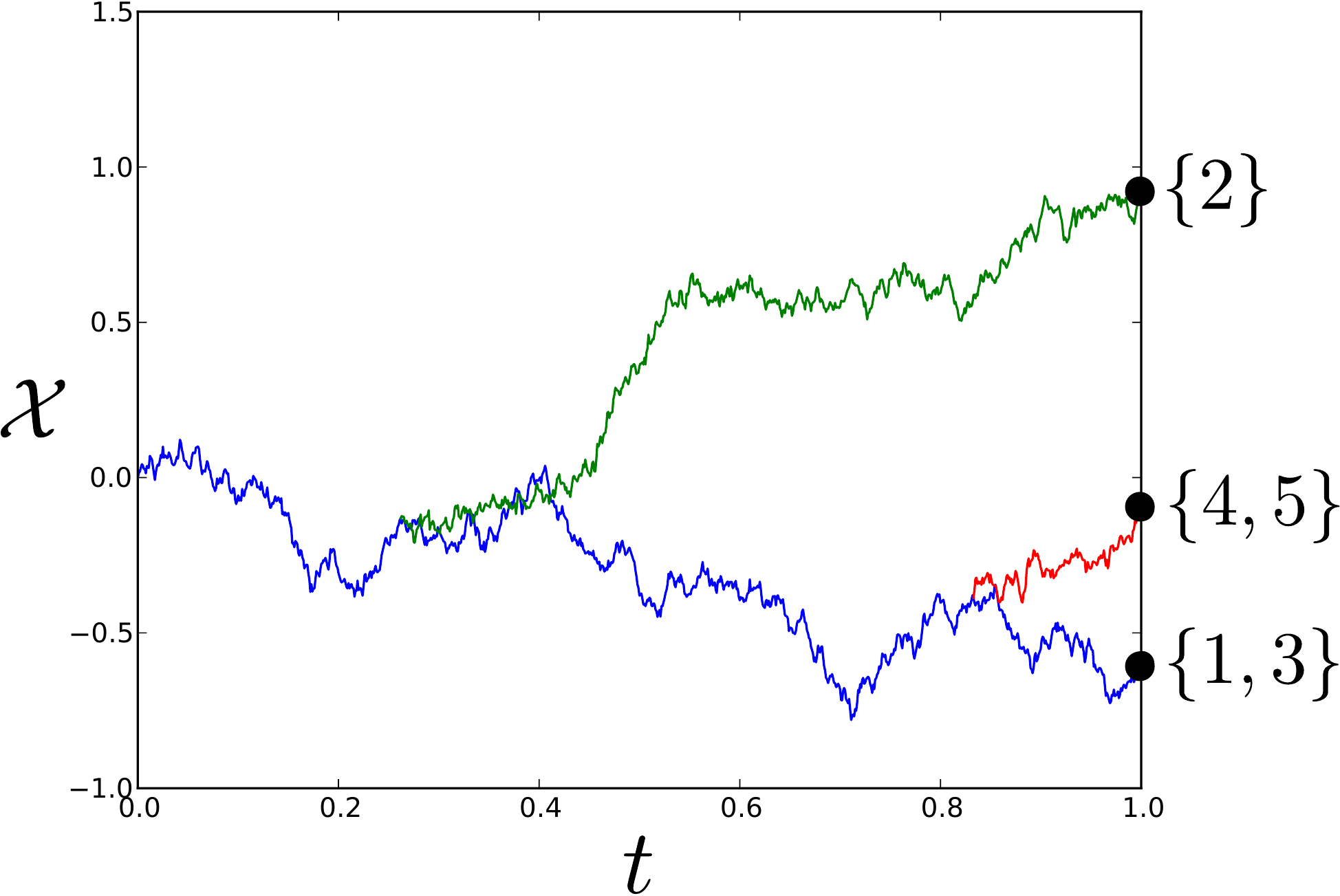}
                \label{fig:DDT_demo_paths}
                }
        \quad \quad 
\subfigure[Corresponding tree structure]{
		\raisebox{15mm}{
               \includegraphics[scale=.3]{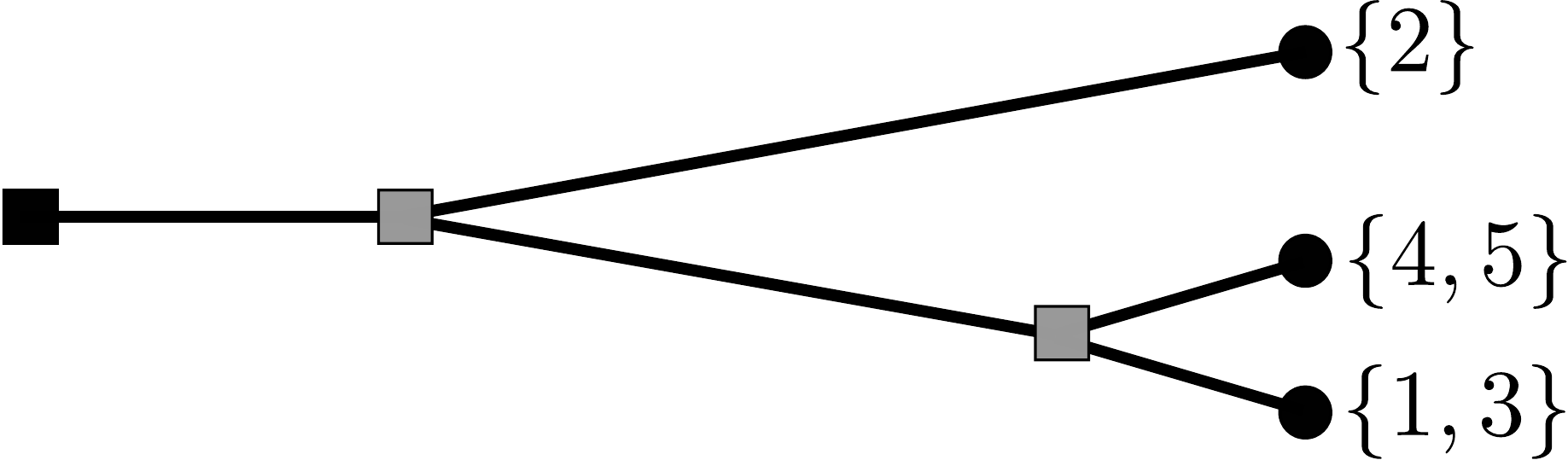}
               }
               \label{fig:DDT_demo_tree}
        }
       	\caption{(a) A simulated Dirichlet diffusion tree with $N=5$ objects and divergence parameter $\alpha=1$.  (b) The corresponding tree structure is shown where the divergence locations are the branch nodes in the tree.}
	\label{fig:DDT_demo}
\end{figure}
A simulated Dirichlet diffusion tree with $N=5$ objects, divergence parameter $\alpha = 1$, and Brownian motion paths is shown in \cref{fig:DDT_demo}.
The corresponding tree structure is also shown, where the origin is the root node, the divergence points define the internal nodes, the locations of the particles (in $\mathcal X$) at time $t=1$ define the leaves, and the segments between nodes are the branches in the tree structure.
The leaves of the tree structure define a partition of the $N$ objects, because each object is represented by only one particle that diffuses to a leaf node in the tree.

Note that if a particle does not diverge before $t=1$, then it joins an existing partition.
However, in many applications, e.g., density modeling or phylogenetic clustering, an agglomerative clustering of the objects is required (i.e., each subset in the partition is a singleton).  In such a case, a more general, positive-valued divergence function $\alpha(t)$ is typically used that ensures every particle will diverge before $t=1$.
The waiting time until divergence is then the time until the first jump in an inhomogeneous Poisson process with rate function $\alpha(t)/m$ (see for example \citet{NealDDT2003} and \citet{knowlesTPAMIPYDT}).

While non-parametric models for random partitions have been popular in probability and statistics for the last few decades \citep{pitman2006combinatorial,kingman1982genealogy,kingman1982coalescent,teh2006hierarchical,TehEA2006}, it is desirable in many applications to allow objects to belong to more than one cluster, i.e., to allow the clusters to overlap.
We call a set of (potentially) overlapping subsets of $[N] \defas (1,\dotsc,N)$ a \emph{feature allocation} (as termed by \citet{broderick2013feature} following the work of \citet{GG2006}), where we take $[N]$ to represent the sequence of $N$ objects.  
For example, a feature allocation of five objects, represented by their labels $\{1,\dotsc,5\}$, can be denoted by $\{ \{1,2\}, \{1,3\}, \{4,5\} \}$.
Here there are three features and we say that the first object is allocated to the features $\{1,2\}$ and $\{1,3\}$, etc.
Note, however, that this structure could never be captured by a Dirichlet diffusion tree like the one depicted in \cref{fig:DDT_demo}, because the particle representing the first object cannot travel along both branches at the first divergence point and end up at two different leaves in the tree.
In order to model hierarchical feature allocations, the beta diffusion tree overcomes this restriction.  In particular, the beta diffusion tree proceeds analogously to the Dirichlet diffusion tree, except that multiple copies of a particle (corresponding to multiple copies of an object) may be created (or destroyed) at random times.
Objects may therefore be represented by particles following multiple branches to multiple leaves in the tree.
The leaves then represent the subsets of a feature allocation that are hierarchically related through the tree structure.

\subsection{Related work}
\label{sec:related_work}

Most contemporary work on non-parametric feature allocation models (where the number of subsets in the feature allocation is unbounded) trace their origins to the IBP.
The IBP was originally described in terms of a random binary matrix $\bm Z$ with an unbounded number of \emph{independent} columns, where the entry $z_{n,k} = 1$ denotes that object $n$ is allocated to feature $k$ and $z_{n,k} = 0$ denotes that object $n$ is not allocated to feature $k$.
Similarly, we may view the beta diffusion tree as defining a random, infinite binary matrix, except that the columns are no longer independent; instead, they are hierarchically related through the tree structure.
A class of \emph{correlated IBP} models was introduced by \citet{DG2009}, which cluster the columns of an IBP-distributed matrix in order to induce (sparse) dependencies between the features.
For example, let $\bm Z^{(1)}$ be an IBP-distributed matrix, and conditioned on $\bm Z^{(1)}$, let $\bm Z^{(2)}$ be another IBP-distributed matrix whose rows correspond to the columns of $\bm Z^{(1)}$.
This scheme is extended to an arbitrary number of iterations by the \emph{cascading Indian buffet process} \citep{AWG2010}, in which the rows in an IBP-distributed matrix $\bm Z^{(m)}$ at iteration $m$ correspond to the columns in the IBP-distributed matrix $\bm Z^{(m-1)}$ at iteration $m-1$.
While the beta diffusion tree generalizes the``flat clustering'' of the correlated IBP to a hierarchical clustering, it does not obtain the general network structure obtained with the cascading IBP.
The beta diffusion tree is therefore an appropriate model for hierarchically clustering features, while the cascading IBP is an appropriate model for (random) neural network architectures.

Models based on the beta diffusion tree are not to be confused with the \emph{phylogenetic Indian buffet process} \citep{MGJ2008}, which hierarchically clusters the \emph{rows} (objects) in an IBP-distributed matrix.
This clustering is done in a non-probabilistic manner (the tree structures are not random), and the (random) allocations of objects to the features depend on the constructed tree structure over the objects.
A similar approach is taken by the \emph{distance-dependent IBP} \citep{GFB2011}, which assumes that there is an observed distance metric between objects.  If two objects are close, then they tend to share the same features.
Both of these models, unlike models based on the beta diffusion tree and the correlated IBP, assume that the features themselves are {\it a priori} independent.
Additionally, both models assume the objects are not exchangeable.
This is a distinctive difference to the rest of the models so far discussed, which are all exchangeable with respect to the ordering of the objects, a property of the beta diffusion tree that is established in \cref{sec:exchangeability}.

Previous hierarchical clustering models have been limited to partitions, the most relevant example being the Dirichlet diffusion tree, as discussed earlier in \cref{sec:diffusion_trees}.
A generalization of the Dirichlet diffusion tree to arbitrary branching degrees is achieved by the \emph{Pitman--Yor diffusion tree} \citep{KnowlesGhahramani2011}.  
While these stochastic processes do not have an immediate relation to feature allocation models, they can both be derived as the infinite limits of finite models defined by nested partitioning schemes \citep{bertoin2006random, teh2011modelling, KnowlesThesis2012, knowlesTPAMIPYDT}, motivating the derivation of the beta diffusion tree as the infinite limit of a finite model defined by nested feature allocation schemes in \cref{sec:continuum_limit}.
These ``diffusion tree'' processes so far discussed are all types of \emph{diffusion} (or \emph{fragmentation}) processes, which are continuous-time stochastic processes with a close relationship to \emph{coalescent} (or \emph{coagulation}) processes (see \citet{bertoin2006random} and \citet[Ch.~5]{pitman2006combinatorial}).
The prototypical example from the latter class of models is \emph{Kingman's coalescent} \citep{kingman1982coalescent}, and the reader may refer to \citet{teh2007treecoalescent} for an example of its use as a model for hierarchical clustering.
An extension beyond binary branching is obtained with the \emph{$\Lambda$-coalescent} \citep{pitman1999coalescents}, and the reader may refer to \citet{hu2013binary} for details on practical inference and application.
These stochastic processes all model continuous tree structures, which are most useful when modeling continuous variables associated with the hierarchy.  We will see examples using the beta diffusion tree in \cref{sec:LG}.
Probabilistic models for non-parametric, \emph{discrete} tree structures have also been studied \citep{blei2010nested, rodriguez2008nested, PWBJ2012, adams2010tree, steinhardt2012flexible}, which would be more appropriate for modeling discrete variables associated with the tree structure.
%

\subsection{Outline of the article}

In \cref{sec:BDTmain}, we define the beta diffusion tree and study its properties in depth, including a characterization as a generative (sequential) process and as the infinite limit of a finite model.
This latter characterization reveals several properties of the model that are useful for applications.
In \cref{sec:LG}, we demonstrate how to build a hierarchically clustered factor analysis model with the beta diffusion tree, and in \cref{sec:inference} we present a Markov chain Monte Carlo procedure for integrating over the tree structures.
These are applied in \cref{sec:experiments} to numerical experiments on missing data problems with data sets of gene expression microarrays, international development statistics, and intranational socioeconomic measurements.
In \cref{sec:conclusion}, we conclude with a discussion of future research directions.
%

\section{The beta diffusion tree}
\label{sec:BDTmain}

We describe a generative (sequential) process for a collection of diffusing particles that proceeds analogously to the generative process for the Dirichlet diffusion tree described in \cref{sec:diffusion_trees}.
In particular, each particle is labeled with one of $N$ objects and diffuses in $\mathcal X$ over a (fictitious) unit time interval $t \in [0,1]$.
We recall that the random tree structure being described does not depend on the choice of diffusion process.
If a particle is labeled with object $n$, then we call it an \emph{$n$-particle}, multiple of which may exist at time $t>0$.
Sequentially for every object $n=1,\dotsc,N$, we begin with one $n$-particle at the origin, which follows the paths of previous particles.  At random times $t^*$ throughout the process, any $n$-particle travelling along any path may perform one of two actions:
\begin{enumerate}
\item {\bf stop:} The particle stops diffusing at time $t^*$.

\item {\bf replicate:} A copy of the $n$-particle is created at time $t^*$.  One copy continues along the original path and the other diverges from the path and diffuses independently of all other particles.
\end{enumerate}

For simplicity, we describe the generative process here in terms of infinitesimal probabilities, though an equivalent (and more practical) representation with exponential random variables is provided in \cref{sec:prior_sampler}.
Precisely, let $\sfn, \bfn, \soffset$, and $\boffset$ be positive, finite constants that parameterize the generative process, which proceeds as follows:

\newpage

\begin{noindlist}
\item $\bm{n=1}$: A 1-particle starts at the origin and diffuses for $t >0$.  At time $0<t<1$:

\begin{noindlist}
\item[--] The particle \emph{stops} in the next infinitesimal time interval $[t, t+\dee t]$ with probability
\[
\sfn\, \dee t .
	\label{eq:stop_prob}
\]

\item[--] Conditioned on not stopping in $[t,t+\dee t]$, the particle \emph{replicates} in $[t,t+\dee t]$ with probability
\[
\bfn\, \dee t ,
	\label{eq:branch_prob}
\]
creating a copy of the $1$-particle.
Both particles diffuse (independently of each other) for $t> 0$, each stopping or replicating with the probabilities given by \cref{eq:stop_prob,eq:branch_prob}, respectively.
Arbitrarily label one of the paths as the ``original path'' and the other as the ``divergent path''.

\end{noindlist}

\item $\bm{n \ge 2}$: For every $n \ge 2$, a single $n$-particle starts at the origin and follows the path initially taken by the previous particles.
At time $0<t<1$, for a particle travelling on a path along which $m$ particles have previously travelled:

\begin{noindlist}

\item[--] The particle stops in $[t, t+\dee t]$ with probability
\[
\frac \soffset {\soffset+m} \sfn\, \dee t .
	\label{eq:stop_prob_general}
\]

\item[--] Conditioned on not stopping in $[t,t+\dee t]$, the particle replicates in $[t, t+\dee t]$ with probability
\[
\frac \boffset {\boffset+m} \bfn\, \dee t 
	,
	\label{eq:branch_prob_general}
\]
creating a copy of the $n$-particle.  One copy follows the original path, and the other copy diverges from the path and diffuses independently of all other particles, stopping or replicating with the probabilities given by \cref{eq:stop_prob,eq:branch_prob}, respectively.  The newly created path is labeled as the ``divergent path''.

\item[--] If the particle reaches an existing stop point (a point on the path where at least one previous particle has stopped), it also stops at this point with probability
\[
\frac{\numstop}{\soffset+m}
	,
	\label{eq:stopprob}
\]
where $\numstop$ is the number of particles that have previously stopped at this location.

\item[--] If the particle reaches an existing replicate point (a point on the path where a particle has previously replicated), the particle also replicates at this point with probability
\[
\frac{\numbranch}{\boffset + m}
	,
	\label{eq:branchprob}
\]
where $\numbranch$ is the number of particles that have previously replicated at this point (and taken the divergent path).
In this case, one copy of the particle follows the original path and the other follows the divergent path.
If the particle does not replicate, then it continues along the original path.

\end{noindlist}

\end{noindlist}

\begin{figure}[ht!]
\centering
\subfigure[A beta diffusion tree with $N=3$ objects]{
                \includegraphics[scale=.35]{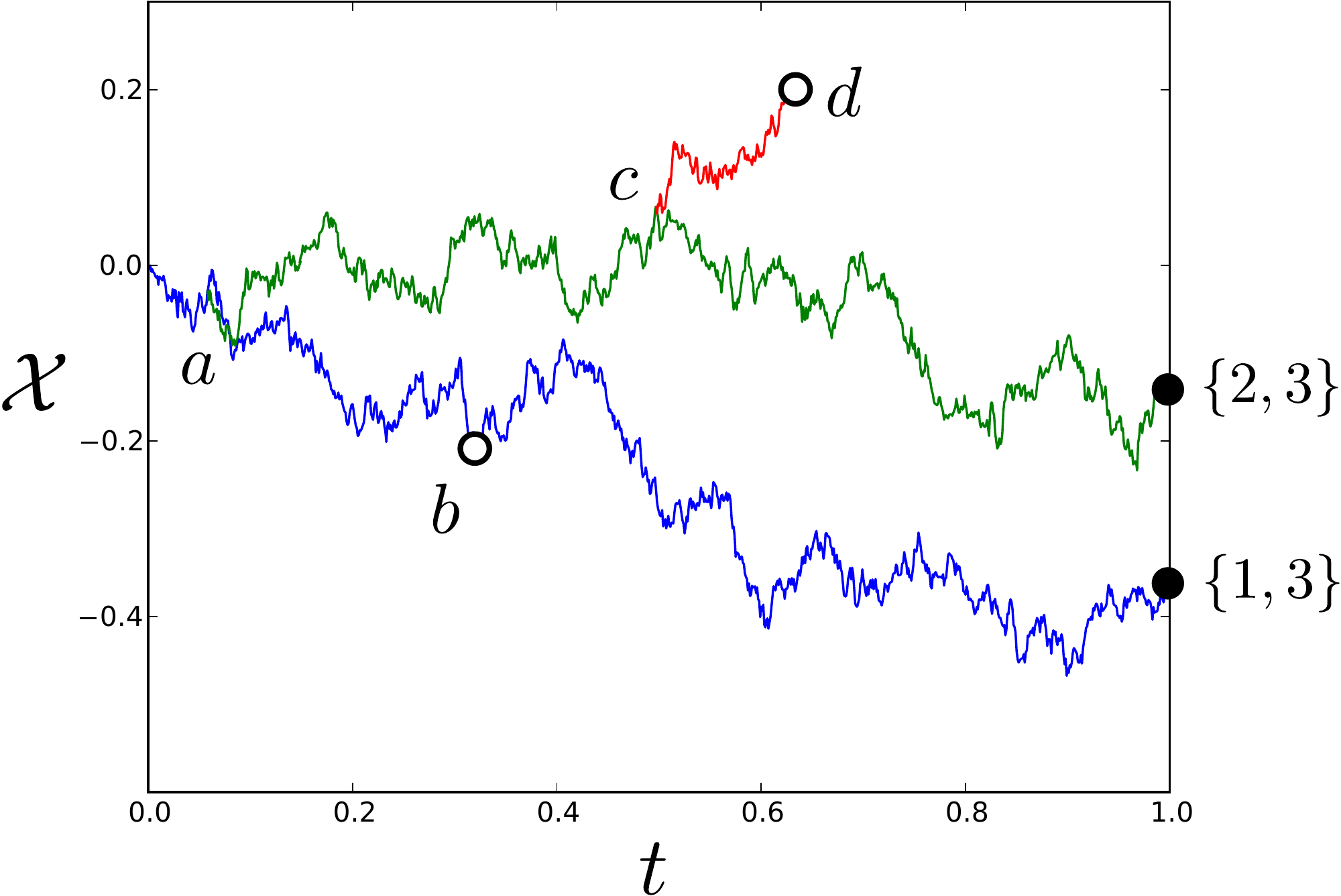}
                \label{fig:demo_tree_paths}
                }
        \quad \quad
\subfigure[Corresponding tree structure]{
            	\includegraphics[scale=.3]{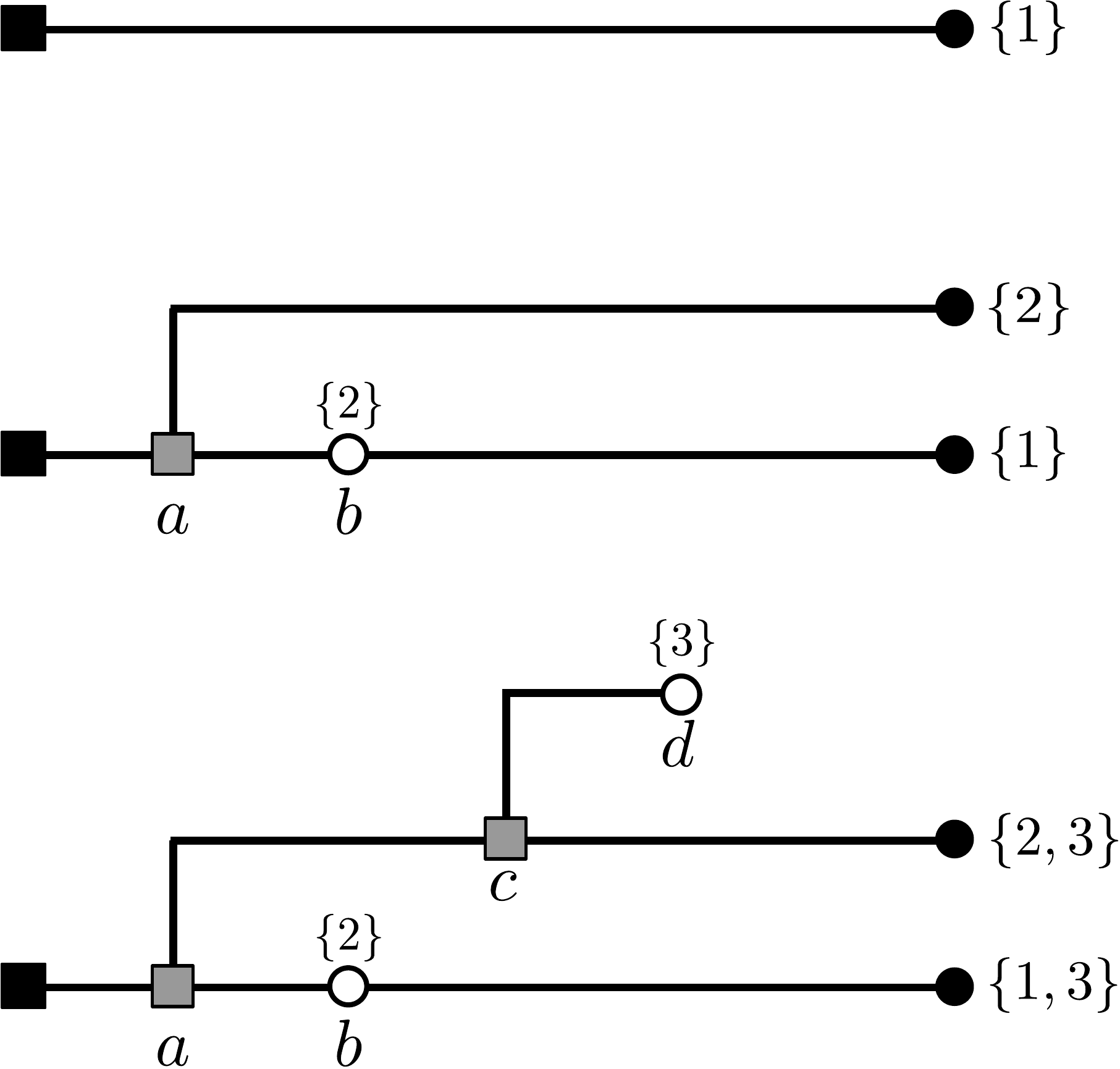}
		\label{fig:demo_tree_steps}
        	}
       	\caption{(a) A simulated beta diffusion tree with $N=3$ objects, where $b$ and $d$ are stop points and $a$ and $c$ are replicate points, resulting in the feature allocation $\{\{1,3\}, \{2,3\}\}$.  (b) The corresponding tree structures as each object is included in the generative process, where the origin is the root node, the replicate and stop points (grey blocks and open circles, respectively) are internal nodes, the features are leaf nodes, and segments between nodes are branches.  Here, the stop nodes have been additionally annotated with the (labels of) the particles that have stopped at the node.  The ``divergent'' branches are depicted as offshoots of the ``original'' branch.}
	\label{fig:demo_tree}
\end{figure}
The process terminates at $t=1$, at which point all particles stop diffusing.
Again, an equivalent representation in terms of exponential random variables is provided in \cref{sec:prior_sampler}.
In particular, note that the times until stopping or replicating on a path along which $m$ particles have previously travelled (described by \cref{eq:stop_prob_general,eq:branch_prob_general}, respectively) are exponentially distributed with rates $\sfn \soffset / (\soffset +m )$ and $\bfn \boffset / (\boffset+m)$, respectively.
In \cref{fig:demo_tree}, we show an example of a simulated beta diffusion tree with $N=3$ objects in $\mathcal X = \Reals$, where the paths are Brownian motion.
We also show the corresponding tree structure as each object is sequentially included in the generative process.  As with the Dirichlet diffusion tree, the root of the tree is the origin and the locations of the (clusters of) particles in $\mathcal X$ at time $t=1$ are the leaves.  With the beta diffusion tree, however, both replicate and stop points comprise the internal nodes of the tree, displayed as grey blocks and open circles, respectively.
We will refer to the origin as the \emph{root node}, stop points as \emph{stop nodes}, replicate points as \emph{replicate nodes}, and the points at $t=1$ as \emph{leaf nodes}.  We call segments between nodes \emph{branches}.
At a replicate point, we must keep track of which branch is the ``divergent'' branch and which is the ``original'' branch.  In \cref{fig:demo_tree_steps}, we therefore depict divergent branches as offshoots of the original branch.
We also annotate each stop node with the labels of the particle(s) that have stopped at that point.
Note that, given this information and the labels of the particles at the leaves, we may uniquely determine the path that every particle has taken throughout the tree.
Because multiple copies of a particle (all corresponding to the same object) can follow multiple branches to multiple leaves in the tree, the leaves define a feature allocation of the $N$ objects.
For example, the beta diffusion tree in \cref{fig:demo_tree} determines a feature allocation with two features $\{1,3\}$ and $\{2,3\}$.
The number of (non-empty) features is therefore the number of leaves in the tree structure, which is unbounded, however, we will see in the following section that this number is (almost surely) finite for any finite number of objects.

We now fix some notation.
We call $\sfn$ and $\bfn$ the \emph{stop} and \emph{replicate rate parameters}, respectively, which control the probabilities of instantaneously stopping and replicating.
We call $\soffset$ and $\boffset$ the \emph{stop} and \emph{replicate concentration parameters}, respectively, which control the probabilities of stopping at existing stop points and following existing divergent paths, respectively.
Denote by $\Tcal_{[N]}$ the collection of nodes, associated node times, and branches in a beta diffusion tree generated by the ordered sequence $[N] \defas (1, \dotsc, N)$, which we call the \emph{tree structure} of the beta diffusion tree.
Let $\Vcal(\Tcal_{[N]})$ denote the set of non-root nodes, and let $\bm x_{\Tcal_{[N]}} \defas \{ \bm x_v \}_{v\in \Vcal(\Tcal_{[N]})}$ denote the locations of the nodes in $\mathcal X$. 
While the index set of the stochastic process $(\Tcal_{[N]}, \bm x_{\Tcal_{[N]}})$ is the (ordered) set $[N]$, we will show in the next section that its distribution does not depend on this ordering.

\subsection{The generative process is exchangeable}
\label{sec:exchangeability}

The distribution of $(\Tcal_{[N]}, \bm x_{\Tcal_{[N]}})$ is characterized by the generative process, which depends on the ordering of the $N$ objects.
However, we typically do not want our models for feature allocations to depend on this ordering, and indeed we will now show that the density $p(\Tcal_{[N]}, \bm x_{\Tcal_{[N]}})$ is invariant to permutations of $[N]$.
Consider sequentially computing the factors that each object in the generative process contributes to the density function:
For each particle traversing the tree, the times until stopping or replicating on a branch are exponentially-distributed with the rates in \cref{eq:stop_prob_general,eq:branch_prob_general}, respectively.
Then the probability of neither replicating nor stopping between times $t$ and $t'$ (with $t<t'$) on a branch along which $m$ previous particles have traversed is given by
\[
\label{eq:prob_no_event}
\begin{split}
\Psi_m (t,t')
&\defas
\Pr \{ \text{not replicating and not stopping in } [t,t'] \}
	\\
	&
	\: = \exp \Bigl \{
			- \frac {\boffset} {\boffset + m} \bfn\, (t' - t) 
			- \frac {\soffset} {\soffset+m} \sfn\, (t' - t)
		\Bigr \}
	.
\end{split}
\]
Next note that the conditional probabilities that the particle replicates or stops at time $t'$, given that the particle did not replicate or stop in $[t,t']$, are given by $\boffset \bfn /(\boffset+m)$ and $\soffset \sfn / (\soffset+m)$, respectively.
Then the probability contributed by a branch in the tree structure spanning the interval $[t,t']$ is characterized by either
\[
\label{eq:conditionalreplicate}
\Pr \{ \text{replicate at } t' \wedge \text{did not replicate or stop in } [t,t'] \}
	= \frac {\boffset} {\boffset + m} \bfn \Psi_m(t,t'),
\]
or
\[
\Pr \{ \text{stop at } t' \wedge \text{did not replicate or stop in } [t,t'] \}
	= \frac {\soffset} {\soffset + m} \sfn \Psi_m(t,t')
	.
\]

For example, consider the tree with $N=3$ objects depicted in \cref{fig:demo_tree}, consisting of nodes $a, b, c, d, \{1,3\}$, and $\{2,3\}$, with corresponding times $t_a$, $t_b$, etc.
From the tree structure, we can determine that there is one 1-particle, two 2-particles, and three 3-particles.
The 1-particle does not replicate or branch before $t=1$ and therefore contributes $\Psi_0 (0,1)$ to the density.
The 2-particles contribute:

\begin{enumerate}[noitemsep]
\item $\frac{\boffset}{\boffset+1} \bfn \Psi_1 (0,t_a)$ for replicating at $t=t_a$, conditioned on not replicating or stopping in $t\in(0,t_a)$,
\item $\frac{\soffset}{\soffset+1} \sfn \Psi_1 (t_a,t_b)$ for stopping at $t=t_b$, conditioned on not replicating or stopping in $t\in(t_a,t_b)$,
\item $\Psi_0 (t_a,1)$ for not stopping or replicating in $t\in (t_a,1)$.
\end{enumerate}
The 3-particles contribute:
\begin{enumerate}[noitemsep]
\item $\Psi_2 (0,t_a)$ for not replicating or stopping in $t\in(0,t_a)$,
\item $\frac{1}{\boffset + 2}$ for taking the divergent path at $t=t_a$,
\item $\Psi_2 (t_a,t_b)$ for not replicating or stopping in $t\in(t_a,t_b)$,
\item $1-\frac{1}{\soffset + 2}$ for not stopping at $t=t_b$,
\item $\Psi_1 (t_b,1)$ for not replicating or stopping in $t\in(t_b,1)$,
\item $\frac{\boffset}{\boffset + 1} \bfn \Psi_1 (t_a,t_c)$ for replicating at $t=t_c$, conditioned on not replicating or stopping in $t\in(t_a,t_c)$,
\item $\Psi_1 (t_c,1)$ for not replicating or stopping in $t\in(t_c,1)$,
\item $\sfn \Psi_0 (t_c,t_d)$ for stopping at $t=t_d$, conditioned on not replicating or stopping in $t\in(t_c,t_d)$.
\end{enumerate}
Finally, we must account for the terms in the density contributed by the random diffusion paths.
Again, the diffusion process does not affect the tree structure.
For example, if the diffusion paths are Brownian motion with variance $\sigma_X^2$, then the components of the density resulting from the node locations $x_a$, $x_b$, etc., are
\[ 
&\Ncal( x_a; 0, \sigma_X^2 t_a )
	\:
	\Ncal( x_b ; x_a, \sigma_X^2 (t_b - t_a) )
	\:
	\Ncal( x_c ; x_a, \sigma_X^2 (t_c - t_a) )
	\:
	\Ncal( x_d ; x_c, \sigma_X^2 (t_d - t_c) )
	\label{eq:prob_tree}
	\\
	&\qquad \qquad \times
	\Ncal( x_{\{1,3\}} ; x_b, \sigma_X^2 ( 1-t_b) )
	\:
	\Ncal( x_{\{2,3\}} ; x_c, \sigma_X^2 (1-t_c) )
	.
	\nonumber
\]
There is one Gaussian term for each branch.

Because the behavior of (particles representing) objects in the generative process depends on the behavior of (particles representing) previous objects, the terms above depend on the ordering of the objects.
However, this dependence is superficial; if we were to multiply these terms together, we would find that the resulting expression for $p(\Tcal_{[3]}, \bm x_{\Tcal_{[3]}})$ does not depend on the ordering of the objects.
We generalize to an arbitrary number of objects in the following result, the proof of which parallels those by \citet{KnowlesThesis2012} and \citet{knowlesTPAMIPYDT}:

\begin{thm}[Exchangeability] \label{thm:exchangeability}
The beta diffusion tree is exchangeable with respect to the ordering of the objects.
\end{thm}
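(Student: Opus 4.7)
The plan is to show that $p(\Tcal_{[N]}, \bm x_{\Tcal_{[N]}})$ admits a closed-form expression depending only on the unordered tree data (topology, node times and locations, per-branch traversal counts, and per-node event counts), and hence is invariant under permutations of $[N]$. I would decompose the sequentially constructed density into contributions from three sources -- no-event intervals on branches, discrete events at internal nodes, and diffusion-path increments -- and argue that after suitable regrouping each kind of contribution is manifestly symmetric in the particle labels.

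For branch contributions, I would generalize the enumeration from the $N=3$ example to arbitrary $N$. Fix a branch $B$ spanning times $(t_u, t_v)$ and traversed by $M_B$ particles in total. The $m$-th particle to enter $B$ (for $m = 0, 1, \ldots, M_B - 1$) contributes a no-event factor $\Psi_m(t_u, t_v)$ as in \cref{eq:prob_no_event}. Because $\Psi_m$ is exponential in the time interval, a long no-event factor originally contributed by a single particle splits multiplicatively when a later arrival creates an intermediate node within the segment. Collecting, the branch-level contribution is
\[
\prod_{m=0}^{M_B - 1} \Psi_m(t_u, t_v)
	,
\]
which depends only on $M_B$, the endpoint times, and the global parameters.

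For node contributions, I would use a P\'olya-urn telescoping. Consider a replicate node $v$, with $M$ arrivals along its incoming branch, of which $R$ take the divergent outgoing branch, and let arrival $i^*$ be the first replicator. Her contribution is the rate factor $\bfn \boffset / (\boffset + i^* - 1)$; each arrival $i > i^*$ contributes either $\numbranch / (\boffset + i - 1)$ if she joins the divergent path or $(\boffset + i - 1 - \numbranch) / (\boffset + i - 1)$ if she stays on the original, with $\numbranch$ the current divergent count. The product of denominators over $i = i^*, \ldots, M$ is $\Gamma(\boffset + M) / \Gamma(\boffset + i^* - 1)$; the numerators telescope into $\bfn \boffset \cdot (R-1)! \cdot \Gamma(\boffset + M - R) / \Gamma(\boffset + i^* - 1)$, so the overall contribution at $v$ simplifies to
\[
\bfn \boffset \, \frac{\Gamma(R) \, \Gamma(\boffset + M - R)}{\Gamma(\boffset + M)}
	,
\]
which depends only on $(M, R, \bfn, \boffset)$. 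An analogous calculation for stop nodes (with $\sfn, \soffset, \numstop$ in place of $\bfn, \boffset, \numbranch$) yields a symmetric factor depending only on the analogous counts and parameters. This is the classical P\'olya-urn / Chinese-restaurant-process exchangeability identity.

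Finally, the diffusion factors are trivially symmetric: because the diffusion in $\mathcal X$ is Markovian (e.g.\ the Gaussian transitions from \cref{eq:brownian_motion}), each branch contributes a single transition density depending only on its endpoint locations and length. Assembling these three ingredients yields a factorization of $p(\Tcal_{[N]}, \bm x_{\Tcal_{[N]}})$ over branches and nodes in terms only of the unordered tree data, so running the generative process under any permutation $\pi$ of $[N]$ produces the same symbolic expression. The main obstacle is the bookkeeping in the node telescoping step: one must simultaneously track the rate factor of the first event and the P\'olya-urn choice factors of all subsequent arrivals, verifying that the product is independent of the position of $i^*$ and of the interleaving of joiners with stayers. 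The algebra is straightforward but notationally intricate, and mirrors the exchangeability proofs in \citet{KnowlesThesis2012} and \citet{knowlesTPAMIPYDT} for the Dirichlet and Pitman--Yor diffusion trees.
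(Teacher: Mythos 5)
Your proposal is correct and follows essentially the same route as the paper's proof: the same three-way decomposition into per-branch no-event factors $\prod_{m=0}^{M_B-1}\Psi_m(t_u,t_v)$, a P\'olya-urn telescoping at each replicate/stop node collapsing to $\bfn\boffset\,B(\boffset+M-R,\,R)$ (and its $\sfn,\soffset$ analogue), and one Gaussian transition density per branch. Since each factor depends only on unordered tree data, the argument matches the paper's step for step.
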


\begin{proof}
We compute the density $p(\Tcal_{[N]}, \bm x_{\Tcal_{[N]}})$ and show that it is invariant to permutations of $[N]$.
The density may be decomposed into the following parts:
\begin{enumerate}[noitemsep]
\item[(i)] the probability of $\Tcal_{[N]}$, the tree structure and associated node times, 
\item[(ii)] the probability of the node locations $\bm x_{\Tcal_{[N]}}$, given $\Tcal_{[N]}$.
\end{enumerate}
We will enumerate the factors contributing to each of these components as they are defined by the generative process, and show that neither depends on the ordering of the objects.
Let $\Bcal(\Tcal_{[N]})$ denote the set of branches in $\Tcal_{[N]}$.
For every branch $[uv] \in \Bcal(\Tcal_{[N]})$, going from node $u$ to node $v$ with corresponding locations $\bm x_u$ and $\bm x_v$ and times $t_u$ and $t_v$, let $m(v)$ denote the total number of particles that have traversed the branch.

\vspace{1em}
\noindent {\bf (i) Probability of the tree structure and node times.}
For branch $[uv]$, first consider the case when $v$ is a replicate node.
Let $\numbranch(v)$ denote the total number of particles that followed the divergent path at $v$.
Sequentially for each particle down the branch, let $c_r^v$ denote the number of particles that previously followed the divergent path at $v$, which ranges from 1 to $\numbranch(v)-1$.
Then the $\numbranch(v)$ particles that followed the divergent path contribute a term
\[
\prod_{c_r^v = 1}^{\numbranch(v) - 1} c_r^v 
	= (\numbranch(v) - 1)!
\]
to the numerator of the probability expression for the configuration at this replicate node.
Let $i_v$ be the index of the particle {\it out of the particles down this branch} that created the replicate point at $v$.
The $j$-th particle, for $j = i_v+1, \dotsc, m(v)$, contributes the term $\boffset + j -1$ to the denominator of the probability expression, regardless of whether or not it chose to replicate at $v$ and follow the divergent path.
If $\numbranch(v) = m(v)$, i.e., all particles replicated at $v$, then we don't include any more terms.
Otherwise, there are $m(v) - \numbranch(v)$ particles that chose not to replicate and follow the divergent path.
Sequentially for each one of these particles, let $d_r^v$ denote the number of particles that previously did \emph{not} replicate and follow the divergent path, which we note ranges from $i_v-1$ to $m(v)-\numbranch(v)-1$.
The particles that did not replicate at $v$ therefore contribute the following term to the numerator of the probability expression for the configuration at this node
\[
\prod_{d_r^v = i_v-1}^{m(v) - \numbranch(v) -1} (\boffset + d_r^v)
	= \frac{ \Gamma(\boffset + m(v) - \numbranch(v)) }{\Gamma(\boffset+i_v-1)}
	.
\]

Next consider the probability of the node time $t_v$:
All previous particles traversing this branch only contribute terms for not stopping or replicating, which will be considered later.
From \cref{eq:conditionalreplicate}, the conditional probability that the $i_v$-th particle replicates at $t_v$, given that the particle does not replicate or stop before $t_v$, is
\[
\frac{\boffset}{\boffset + i_v -1} \bfn
	.
	\label{eq:exch_1}
\]
Multiplying, the factor of the density function contributed by the replicate node $v$ is
\[
&\frac {\boffset \bfn} {\boffset + i_v -1} 
	\frac{ (\numbranch(v) - 1)! }{ \prod_{j=i_v+1}^{m(v)} (\boffset + j - 1) }
	\frac{ \Gamma(\boffset + m(v) - \numbranch(v)) }{\Gamma(\boffset+i_v-1)}
	\label{eq:exch_2}
	\\
	&\qquad \quad
	= \boffset \bfn\, 
		B( \boffset+m(v)-\numbranch(v), \numbranch(v) )
	,
	\nonumber
\]
where $B(a,b) = \Gamma(a)\Gamma(b)/\Gamma(a+b)$ denotes the beta function.
Note that this expression does not depend on the label $i_v$, and therefore changing the order of the particles down this branch does not affect the probability of the configuration at node $v$.
Now consider if $v$ is a stop node.
Let $\numstop(v)$ denote the number of particles that stopped at node $v$.
Then it follows analogously to the derivation above that the factor of the density function contributed by node $v$ is
\[
\soffset \sfn\, 
	B(\soffset + m(v) - \numstop(v), \numstop(v) )
	,
	\label{eq:exch_3}
\]
which likewise does not depend on the ordering of the particles down the branch.
Each replicate and stop node contributes such a term, for a total contribution of
\[
\label{eq:prob_nodes}
\begin{split}
&\prod_{u \in \Rcal (\Tcal_{[N]})} \Bigl [
	\boffset \bfn\, 
	B( \boffset + m(u) - \numbranch(u)), \numbranch(u) )
	\Bigr ]
	\\
	&\hspace{1in}
	\times
\prod_{v \in \Scal(\Tcal_{[N]})} \Bigr [
	\soffset \sfn\, 
	B( \soffset + m(v) - \numbranch(v), \numbranch(v) )
	\Bigr ]
	,
\end{split}
\]
where $\Rcal (\Tcal_{[N]})$ and $\Scal(\Tcal_{[N]})$ denote the sets of replicate and stop nodes, respectively.
The $m(v)-1$ particles that followed the first particle down the branch did not stop or replicate before $t_v$, which by \cref{eq:prob_no_event} contributes
\[
\begin{split}
\prod_{i=1}^{m(v)-1} \exp \Bigl \{
	- \frac {\boffset}{\boffset + i -1} \bfn (t_v - t_u)
	- \frac {\soffset}{\soffset+ i-1} \sfn (t_v - t_u)
	\Bigr \}
	\\
	= \exp \Bigl \{ 
		- (t_v - t_u) (\bfn \boffset H_{m(v)}^{\boffset} + \sfn \soffset H_{m(v)}^{\soffset})
	 \Bigr \}
		, 
\end{split}
\]
where $H_n^{\alpha} \defas \sum_{i=0}^{n-1} (\alpha + i)^{-1}$.
Each branch contributes such a term for a final factor of
\[
\prod_{[uv]\in\mathcal B (\Tcal_N)}  
	\exp \Bigl \{ 
		- (t_v - t_u) (\bfn \boffset H_{m(v)}^{\boffset} + \sfn \soffset H_{m(v)}^{\soffset})
	 \Bigr \}
		,
		\label{eq:prob_branches}
\]
which again does not depend on the ordering of the particles down the branches.
In practice, it may be convenient to note that $H_n^{\alpha} = \psi(\alpha+n) - \psi(\alpha)$, where $\psi(\alpha) \defas \Gamma'(\alpha)/\Gamma(\alpha)$ denotes the digamma function.

\vspace{1em}
\noindent {\bf (ii) Probability of the node locations.}
Again, the tree structure does not depend on the choice of diffusion process.
By choosing any diffusion process that does not depend on the ordering of the particles, the result holds.
Consider the example when the diffusion paths are Brownian motion, which will be of use in our experiments later.
We compute the contributions to the density function from the locations of the nodes $\bm x_{\Tcal_{[N]}}$, conditioned on the tree structure and node times $\Tcal_{[N]}$.
Generalizing \cref{eq:prob_tree}, we can see that each branch contributes a Gaussian factor, resulting in an overall factor of
\[
\prod_{[uv] \in \Bcal (\Tcal_{[N]})} 
	\Ncal( \bm x_v ; \bm x_u, \sigma_X^2 ( t_v - t_u) \bm I_D )
	.
	\label{eq:prob_locations}
\]
Again, this term does not depend on the ordering with which the objects were considered.
\end{proof}

The stochastic process $(\Tcal_{[N]}, \bm x_{\Tcal_{[N]}})$ is therefore \emph{exchangeable}, and because the ordering of the sequence $[N]$ is irrelevant, we will henceforth simply write $\Tcal_N$.
Let $\Tcal_N^{(n)}$ be the \emph{subtree structure} of $\Tcal_N$ associated with object $n$, i.e., the collection of nodes and branches in $\Tcal_N$ along which an $n$-particle has traversed.
Note that $\Tcal_N = \bigcup_{n \le N} \Tcal_N^{(n)}$ and that the generative process characterizes the distribution of $\Tcal_N^{(1)}$, followed by the conditional distribution of $\Tcal_N^{(n)}$, given $\Tcal_N^{(n-1)}, \Tcal_N^{(n-2)}, \dotsc, \Tcal_N^{(1)}$, for every $n\ge 1$.
It is therefore clear that the beta diffusion tree is projective, i.e., that the model for $N-1$ objects is the same as a model for $N$ objects with the last object removed.
That is, the density of a beta diffusion tree with $N-1$ objects $( \Tcal_{N-1} , \bm x_{\Tcal_{N-1}} )$ is the same as the density of $$\Bigl ( \bigcup_{n=1}^{N-1} \Tcal_N^{(n)} , \bm x_{ \bigcup_{n = 1}^{N-1} \Tcal_N^{(n)} } \Bigr ),$$ where $( \Tcal_{N} , \bm x_{\Tcal_{N}} )$ is a beta diffusion tree with $N$ objects,
for any $N \in \Nats$, where $\Nats \defas \{ 1, 2, \dotsc \}$ denotes the natural numbers.
We may therefore define a stochastic process by a beta diffusion tree with set of objects $\Nats$, the associated tree structure $\Tcal_{\Nats}$ of which is a tree structure over feature allocations of $\Nats$.
%

\subsection{The infinite limit of a nested feature allocation scheme}
\label{sec:continuum_limit}

We now provide an alternative characterization of the beta diffusion tree as the infinite limit of a finite model.
Let there be $L$ levels in a \emph{nested feature allocation scheme} of $\Nats$ defined as follows:
At the first level, we allocate the natural numbers $\Nats$ to two different sets $f_1$ and $f_2$, which we call \emph{nodes}, independently with probabilities $p_0^{(1)}$ and $p_0^{(2)}$, respectively, where $p_0^{(1)}$ and $p_0^{(2)}$ are independent random variables with
\[
\label{eq:level_probs}
\begin{split}
p_0^{(1)} & \dist \betadist \Bigl ( \, \soffset \Bigl ( 1 - \frac{\sfn}{L} \Bigr ) ,\, \soffset \frac{\sfn}{L}\, \Bigr ) 
	,
	\\
p_0^{(2)} & \dist \betadist \Bigl (\, \boffset \frac{\bfn}{L} ,\, \boffset \Bigl ( 1 - \frac{\bfn}{L} \Bigr )\, \Bigr )
	.
\end{split}
\]
That is, for every $n \in \Nats$, we have that $n\in f_1$ with probability $p_0^{(1)}$ and $n\in f_2$ with probability $p_0^{(2)}$.
At the next level, we allocate the elements in $f_1$ to two different nodes $f_{11}$ and $f_{12}$ at level two, independently with probabilities $p_{f_{1}}^{(1)}$ and $p_{f_1}^{(2)}$, respectively, where $p_{f_1}^{(1)} \equaldist p_0^{(1)}$ and $p_{f_1}^{(2)} \equaldist p_0^{(2)}$.
The elements in $f_2$ are likewise allocated to two nodes $f_{21}$ and $f_{22}$ at level two independently with probabilities $p_{f_2}^{(1)}$ and $p_{f_2}^{(2)}$, with $p_{f_2}^{(1)} \equaldist p_0^{(1)}$ and $p_{f_2}^{(2)} \equaldist p_0^{(2)}$.
Because each node is a subset of a node in the level above, we can treat the nodes as the internal vertices in a binary branching tree structure.
We show a diagram of this scheme with two levels along with the associated tree structure in \cref{fig:finite_tree_small}.
Note that the subscript of each node's label, which is a sequence of the digits $1$ and $2$ with length equal to the level associated with the node, uniquely characterizes the path through the ancestors of the node in the tree.
Each node in the tree is a subset of all of its ancestors in the tree.  Therefore, if any node is equal to the empty set $\emptyset$, then all of its descendants in the tree are equal to the empty set.
For example, a node labelled $f_{21}$ (at level two) has parent node $f_2$ (at level one), and we have that $f_{21} \subseteq f_{2} \subseteq \Nats$.
\begin{figure}
\centering
\subfigure[Nested feature allocations]{
                \includegraphics[scale=.35]{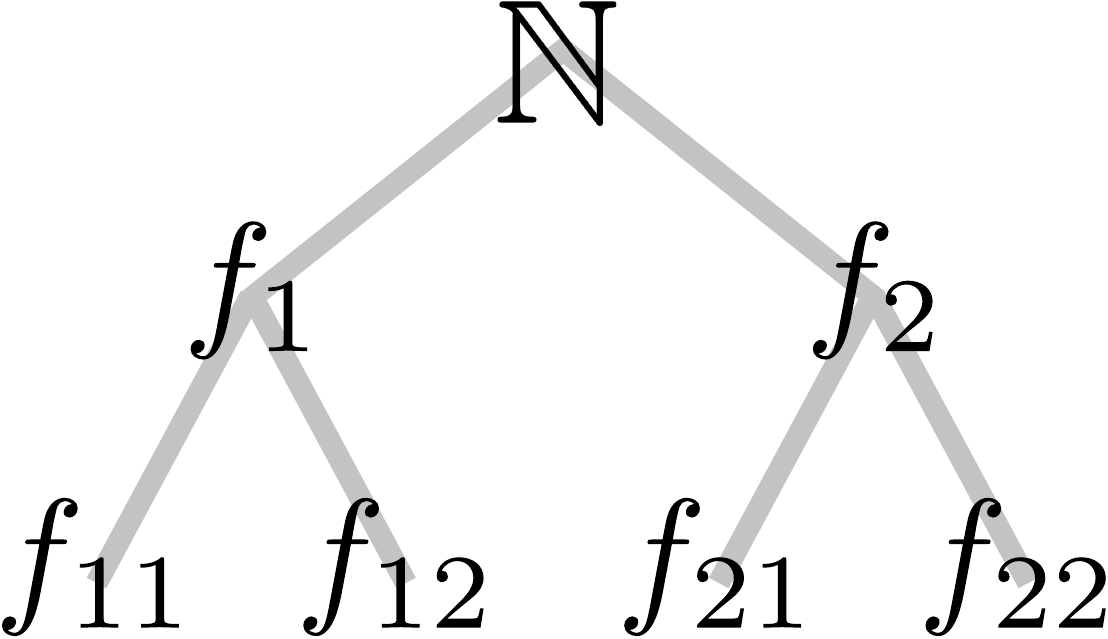}
                \label{fig:finite_tree_small}
                }
        \qquad \qquad
\subfigure[The tree structure associated with the non-empty features]{
                \includegraphics[scale=.3]{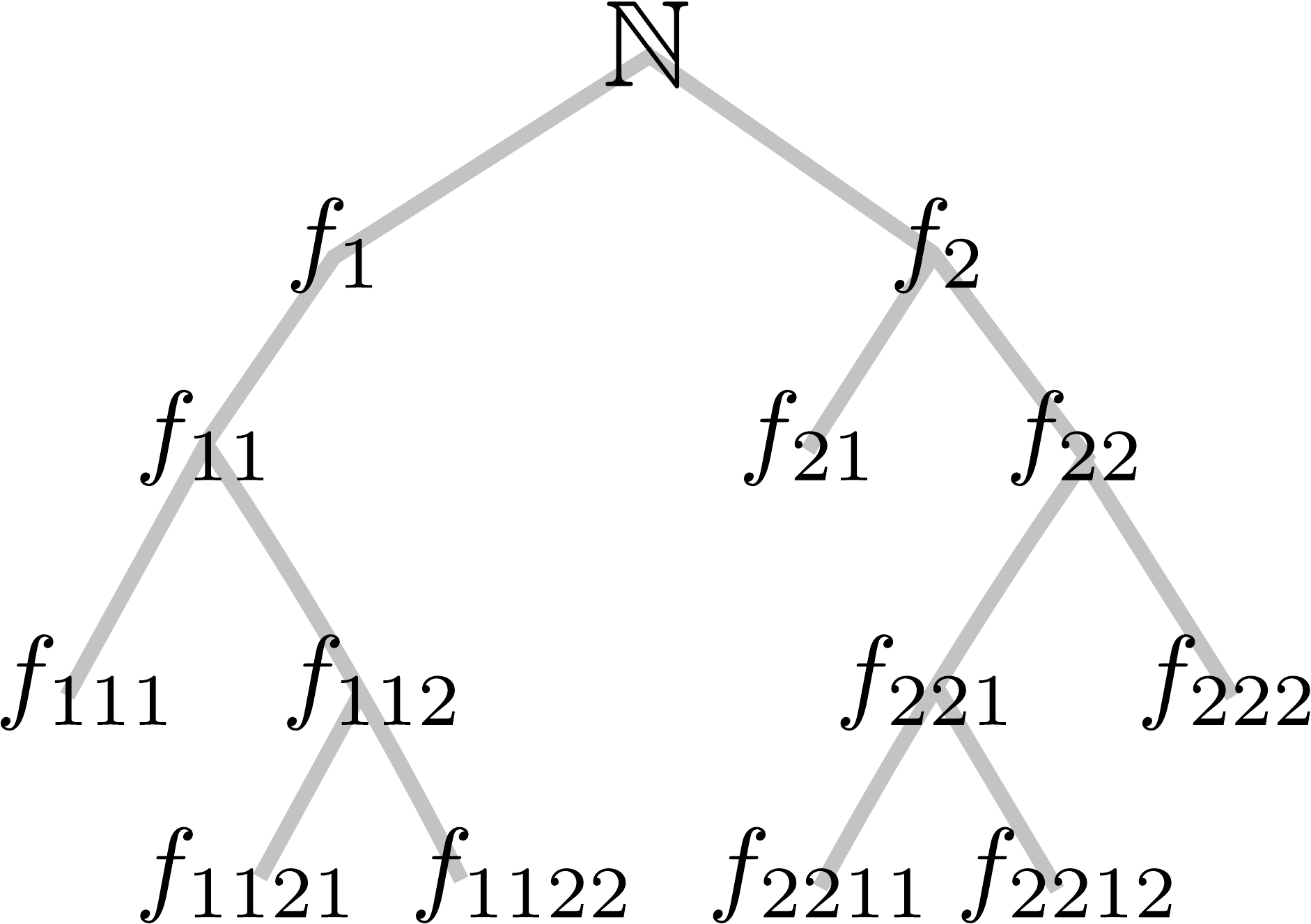}
                \label{fig:finite_tree_large}
        }
       	\caption{Depictions of a nested feature allocation scheme.  In (a) we show the nested scheme for $L=2$ levels, where the elements in each node at level one are allocated to two nodes at level two.  In (b) we show the tree structure corresponding to an $L=4$ level deep scheme, where the vertices in the tree are non-empty nodes.}
	\label{fig:finite_tree}
\end{figure}

We may continue this scheme recursively for $L$ levels, where we allocate the elements in every (non-empty) node $f_{\bullet}$ at level $\ell-1$ to two nodes $f_{\bullet 1}$ and $f_{\bullet 2}$ in level $\ell$, independently with probabilities $p_{f_{\bullet 1}}^{(1)}$ and $p_{f_{\bullet 2}}^{(2)}$, with $p_{f_{\bullet 1}}^{(1)} \equaldist p_0^{(1)}$ and $p_{f_{\bullet 2}}^{(2)} \equaldist p_0^{(2)}$.
Here the prefix $\bullet$ is some length--($\ell-1$) sequence of the digits $1$ and $2$ identifying the ancestral path of node $f_{\bullet}$ through the tree.
Associate each node at level $\ell\le L$ of the scheme with the time index
\[
t_\ell = (\ell-1)/L
	.
	\label{eq:pseudotime}
\]
Let $\Tcal_{\Nats,L}$ denote the tree structure and associated node times defined on the non-empty nodes, i.e., each node that is not equal to the empty set is a vertex in the tree, and the branches are the segments between nodes.
For example, we show the tree structure $\Tcal_{\Nats,L}$ for a $L=4$ level scheme in \cref{fig:finite_tree_large}.
Taking the limit $L\rightarrow \infty$ of this model, we obtain the tree structure of a beta diffusion tree:
\begin{thm}[infinite limit] \label{thm:continuum_limit}
In the limit $L \rightarrow \infty$, the sequence of tree structures $\Tcal_{\Nats,L}$ recovers the beta diffusion tree with set of objects $\Nats$.
\end{thm}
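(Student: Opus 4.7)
My plan is to prove convergence of the finite-dimensional distributions: by the projectivity and exchangeability established in \cref{thm:exchangeability}, it suffices to fix $N$ and show that the density of the projection of $\Tcal_{\Nats,L}$ onto the first $N$ objects converges to the beta diffusion tree density for $N$ objects, as derived in the proof of \cref{thm:exchangeability} (equations \eqref{eq:prob_nodes}, \eqref{eq:prob_branches}, and \eqref{eq:prob_locations}). The node-location factor \eqref{eq:prob_locations} does not involve the tree generating mechanism, so the main content lies in matching the contributions of the tree structure and node times.

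The key tool is beta--Bernoulli conjugacy. At each node $f_\bullet$ containing $m$ previous particles, the probability that the next particle is allocated to $f_{\bullet 1}$ given $k_1$ of the previous $k$ did is $(\soffset(1-\sfn/L) + k_1)/(\soffset+k)$, and analogously for $f_{\bullet 2}$ with parameters $(\boffset\bfn/L + k_2)/(\boffset+k)$. First I would identify each level-transition in the finite scheme with the infinitesimal time interval $[t_{\ell-1}, t_\ell]$ of width $\dee t \defas 1/L$. With this identification, a particle that goes to $f_{\bullet 1}$ only continues without event; going to $f_{\bullet 2}$ corresponds to a replicate; going to neither corresponds to a stop; and going to both is an $O(1/L^2)$ event that vanishes in the limit.

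Next I would compute the density of $\Tcal_{\Nats,L}$ for $N$ objects by collecting the beta--Bernoulli predictive factors along each path. For a maximal run of $\Delta\ell$ consecutive levels along which all $m$ particles stay on the same side (no stop, no replicate), the accumulated probability is a product of factors of the form $(1 - \soffset\sfn/(L(\soffset+i)))$ and $(1 - \boffset\bfn/(L(\boffset+i)))$ for $i=0,\dotsc,m-1$; using $(1-c/L)^{L\Delta t}\to e^{-c\Delta t}$ and the identity $H_m^\alpha = \sum_{i=0}^{m-1}(\alpha+i)^{-1}$, this product converges to the branch factor $\exp\{-(t_v-t_u)(\bfn\boffset H_m^{\boffset} + \sfn\soffset H_m^{\soffset})\}$ of \eqref{eq:prob_branches}. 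At a level where a new divergent branch is created (first particle chooses $f_{\bullet 2}$) and $\numbranch-1$ subsequent particles follow it while $m-\numbranch$ do not, the accumulated beta--Bernoulli probability telescopes via $\Gamma$-function identities into $\boffset\bfn\, B(\boffset+m-\numbranch,\numbranch)\cdot \dee t$, matching the replicate-node factor in \eqref{eq:prob_nodes}, and analogously for stop nodes.

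The main obstacle is bookkeeping: verifying that (i) the probabilities of ``type collisions'' at a single level (simultaneous stop and replicate, or two distinct new replicates in the same level) are $O(L^{-2})$ and can be absorbed into a vanishing remainder after summing over the $O(L)$ levels in a fixed interval; (ii) the factor of $\dee t$ produced by each event combines correctly with the level-time spacing so that the discrete sum over level-indices converges to the continuous density (a Riemann-sum argument); and (iii) the jump-time distribution, which is geometric over levels with success rate $O(1/L)$, converges to the exponential distribution with the rates specified by \cref{eq:stop_prob_general,eq:branch_prob_general}. Once these three ingredients are in place, the finite-scheme density converges term-by-term to the beta diffusion tree density, and Scheff\'e's lemma upgrades this pointwise convergence of densities to convergence in distribution, which combined with exchangeability and projectivity yields convergence of $\Tcal_{\Nats,L}$ to $\Tcal_{\Nats}$.
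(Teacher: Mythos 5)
Your proposal is correct, and it relies on the same core computations as the paper's proof---the beta--Bernoulli predictive probabilities \cref{eq:feat1_predictive,eq:feat2_predictive}, the geometric-to-exponential limit for the waiting times, and the $O(L^{-2})$ bound on stop/replicate co-occurrence---but it is organized around a different target. The paper never assembles the joint density of $\Tcal_{\Nats,L}$: it works purely at the level of the sequential generative process, showing that each conditional decision in the nested scheme (stop at a new point, stop at an existing point with probability $(m-n_1)/(\soffset+m)$ matching \cref{eq:stopprob}, replicate at a new point, follow an existing divergent branch with probability $n_2/(\boffset+m)$ matching \cref{eq:branchprob}) converges to the corresponding conditional decision in the beta diffusion tree, which identifies the limit with the process as \emph{defined} without any further convergence machinery. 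You instead match the joint density factors \cref{eq:prob_nodes,eq:prob_branches} node-by-node and branch-by-branch, which forces you to redo the telescoping of the predictive probabilities into beta functions (work the paper has already done once, in the proof of \cref{thm:exchangeability}) and to invoke Scheff\'e's lemma plus projectivity to convert density convergence into convergence of $\Tcal_{\Nats,L}$. What your route buys is a genuinely cleaner statement of \emph{what kind} of convergence is being claimed---the paper's argument is really a pathwise identification of conditional laws and leaves the mode of convergence implicit---at the cost of more bookkeeping (your items (i)--(iii)), and of needing to be careful about the reference measure on the mixed discrete/continuous space of tree structures before Scheff\'e applies. Both arguments are sound; neither has a gap the other fixes.
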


This result is analogous to previous results that derive the Dirichlet and Pitman--Yor diffusion trees as the infinite limits of nested partitioning models \citep{bertoin2006random, teh2011modelling, KnowlesThesis2012, knowlesTPAMIPYDT}.
To gain some intuition, consider the tree structure $\Tcal_{1,L}$ allocating only the element $\{1\}$ in the nested scheme.
At any level, we may integrate out the beta random variables $p_{f_\bullet}^{(1)}$ and $p_{f_\bullet}^{(2)}$ associated with node $f_\bullet$ from the model, so that $\{1\}$ is allocated to the first node at the next level $f_{\bullet 1}$ with probability $1 - \sfn / L \rightarrow 1$, as $L\rightarrow \infty$, and the second node $f_{\bullet 2}$ with probability $\bfn/L \rightarrow 0$, as $L\rightarrow \infty$.
Note that the probability of these two events co-occurring is $(1-\sfn/L)(\bfn/L) = \bfn L^{-1} - \bfn\sfn L^{-2} \rightarrow 0$, as $L\rightarrow \infty$.
Therefore, in the limit, the number of levels until $\{1\}$ is allocated to either zero or two nodes will be exponentially-distributed with rates $\sfn$ and $\bfn$, respectively.
We can see that this model is equivalent to the beta diffusion tree, where the event that an element is allocated to zero or two nodes at some level $\ell$ in the nested scheme is associated with the event that a particle stops or replicates at time $t_\ell$ in the beta diffusion tree.
With this interpretation, an element being allocated to exactly one node at any level in the nested scheme (which in the limit will be node $f_{\bullet 1}$ almost surely) is equivalent to a particle diffusing along an existing path in the beta diffusion tree.

\vspace{1em}
\begin{proof}(of \cref{thm:continuum_limit}).
Let $f_\bullet = \{n_1, n_2, \dotsc, n_m\}$ be a non-empty node (with $m$ elements) at some level in the nested scheme and consider sequentially allocating its elements to the nodes $f_{\bullet 1}$ and $f_{\bullet 2}$ at the subsequent level with probabilities $p_{f_\bullet}^{(1)}$ and $p_{f_\bullet}^{(2)}$, respectively, where $p_{f_\bullet}^{(1)} \equaldist p_0^{(1)}$ and $p_{f_\bullet}^{(2)} \equaldist p_0^{(2)}$, with $p_0^{(1)}$ and $p_0^{(2)}$ specified as in \cref{eq:level_probs}.
We will call $f_{\bullet 1}$ the ``first node'' and $f_{\bullet 2}$ the ``second node''.

\vspace{.5em}
\noindent {\bf (i) Probability of stopping:}
The conditional distribution of $p_{f_\bullet}^{(1)}$, given the allocations of $m$ elements is
\[
p_{f_\bullet}^{(1)} \given n_1
	\dist
	\betadist \Bigl (  \soffset \Bigl ( 1 - \frac{\sfn}{L} \Bigr ) + n_1, \soffset \frac{\sfn}{L} + m - n_1 \Bigr )
	,
\]
where $n_1 \defas \cardsharp f_{\bullet 1}$ is the number of elements allocated to $f_{\bullet 1}$.
Marginalizing over $p_{f_\bullet}^{(1)}$, it follows that element $n_{m+1}$ is allocated to $f_{\bullet 1}$ with probability
\[
\Pr \{ n_{m+1} \in f_{\bullet 1} \given n_1 \}
	= \frac{ \soffset (1-\sfn/L) + n_1}{ \soffset+m }
	.
	\label{eq:feat1_predictive}
\]
First consider when $n_1=m$, i.e., all previous elements were allocated to $f_{\bullet 1}$.
Then this probability becomes $1-\frac{ \soffset \sfn/L } {\soffset+m}$, which approaches one as $L\rightarrow \infty$.
Then the element $n_{m+1}$ is \emph{not} allocated to $f_{\bullet 1}$ with probability $\frac{\soffset \sfn/L}{\soffset+m}$.
In the language of the beta diffusion tree, an object ``stops'' at time $t_\ell$ if it is not allocated to the ``first node'' at level $\ell$.

More precisely, begin with a node at level $\ell$ that contains the element $n_{m+1}$ and consider following the allocation of this object to the ``first node'' in each subsequent level.
The number of levels $k$ until $n_{m+1}$ is not allocated to the first node is distributed as
\[ 
\frac{ \soffset \sfn/L }{ \soffset+m } \
		\Bigl (
			1 - \frac{ \soffset \sfn/L }{ \soffset+m }
		\Bigr )^{k-1}
	.
	\label{eq:stop_prob_L}
\]
This is a geometric distribution with parameter $\frac{ \soffset \sfn/L }{ \soffset+m }$.
Then $t_s \defas (b+k-1)/L - t_b = k/L$ is the time until $n_{m+1}$ is not allocated to the first node (i.e., until stopping), and it is a standard result that, in the limit $L\rightarrow \infty$, the distribution of $t_s = k/L$ converges to an exponential distribution with rate
\[
\frac{ \soffset }{ \soffset+m } \sfn
	.
\]
Now return to \cref{eq:feat1_predictive} and consider the case when $n_1 < m$, i.e., one or more elements (before the element $n_{m+1}$) were not allocated to $f_{\bullet 1}$.
Then the conditional probability that element $n_{m+1}$ is not allocated to $f_{\bullet 1}$, given the allocations of the previous $m$ elements, is given by
\[
1 - \frac{\soffset+n_1}{\soffset+m} 
	= \frac{m-n_1}{\soffset+m}
	.
\]
Noting that $m-n_1$ is the number of previous elements not allocated to $f_{\bullet 1}$, this is the probability of stopping at a previous stop point in the beta diffusion tree (c.f. \cref{eq:stopprob}).

\vspace{.5em}
\noindent {\bf (ii) Probability of replicating:}
Now consider allocating element $n_{m+1}$ to $f_{\bullet 2}$.  Given the allocations of $m$ elements, the distribution of $p_{f_\bullet}^{(2)}$ is
\[
p_\bullet^{(2)} \given n_2
	\dist \betadist \Bigl ( \boffset \frac{\bfn}{L} + n_2, \boffset \Bigl ( 1-\frac{\bfn}{L} \Bigr ) + m - n_2 \Bigr )
	,
\]
where $n_2 \defas \cardsharp f_{\bullet 2}$ is the number of elements allocated to $f_{\bullet 2}$.
Then, given the allocations of the previous $m$ elements, element $n_{m+1}$ is allocated to $f_{\bullet 2}$ at level $\ell$ with probability
\[
\Pr \{ n_{m+1} \in f_{\bullet 2} \given n_2 \}
	= \frac{\boffset \bfn/L + n_2}{\boffset+m}
	.
	\label{eq:feat2_predictive}
\]
Consider when $n_2=0$, i.e., no elements were previously allocated to $f_{\bullet 2}$.
Then this probability becomes $\frac{\boffset \bfn/L}{\boffset+m}$, which approaches zero in the limit $L\rightarrow \infty$.
Because an element is allocated to the first node with probability one (in the limit $L\rightarrow\infty$), then we can say that the corresponding particle replicates at time $t_\ell$ if the element is allocated to the second node at level $\ell$.
We can follow the derivation above to show that the time until the element $n_{m+1}$ is allocated to a ``second node'' at some level is exponentially-distributed with rate
\[
\frac{ \boffset }{ \boffset+m } \bfn
	.
\]
Returning to \cref{eq:feat2_predictive}, if $n_2 > 0$, i.e., one or more previous elements were allocated to $f_{\bullet 2}$, then in the limit $L\rightarrow \infty$, \cref{eq:feat2_predictive} becomes $n_2/(\boffset+m)$.  Indeed, this is the probability of replicating at a previous replicate point in the beta diffusion tree (c.f. \cref{eq:branchprob}).

\vspace{.5em}
\noindent {\bf (i) Probability of co-occurrence:}
Finally, consider the probability that $f_\bullet$ allocates $n_{m+1}$ to $f_{\bullet 2}$ but not to $f_{\bullet 1}$.
In the limit $L\rightarrow \infty$ this corresponds to the event that $n_{m+1}$ creates a replicate and stop node simultaneously.
By \cref{eq:feat1_predictive} and \cref{eq:feat2_predictive}, this occurs with probability
\[
\Bigl (
\frac{ \soffset \sfn }{ \soffset + m }
	\Bigr )
	\Bigl (
	\frac{ \boffset \bfn }{ \boffset + m }
	\Bigr )
	L^{-2}
	\rightarrow 0
	\quad \text{as} \quad
	L\rightarrow \infty
	,
\]
as desired.
\end{proof}

We note that the result above can be generalized to arbitrary replicate and stop functions, $\bfn(t)$ and $\sfn(t)$, by following the approach taken by \citet{KnowlesThesis2012} and \citet{knowlesTPAMIPYDT}.

\subsection{Properties}
\label{sec:properties}

The nested feature allocation scheme perspective in the previous section enables us to give one further characterization of the beta diffusion tree, derived in \cref{sec:galtonwatson}, as a continuous-time, \emph{multitype Markov branching process} \citep{mode1971, athreya1972branching, harris2002theory}.
In this characterization, branches in the tree structure of the beta diffusion tree are interpreted as the lifetimes of individuals in a population, where each individual is labeled as one of $N$ \emph{types} determined by the number of particles along the branch.
Replicate nodes are interpreted as points when the individual gives birth to another, and stop nodes are interpreted as points when the individual either changes type or dies.

Several established results on these stochastic processes reveal characteristics of the beta diffusion tree that make it a useful model in applications.
For example, we show that the branching process defined by the beta diffusion tree satisfies the so-called \emph{non-explosion hypothesis}, which states that the population will never become infinite in a finite amount of time.
This translates into the following result for the beta diffusion tree:
%
%
\begin{thm} \label{result:finiteleaves}
If $\Tcal_N$ is the tree structure of a beta diffusion tree with a finite set of $N$ objects and finite parameters $\sfn, \bfn, \soffset, \boffset$, then the number of leaves in $\Tcal_N$ is almost surely finite.
\end{thm}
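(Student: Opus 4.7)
The plan is to prove the stronger claim that the \emph{total} number of nodes in $\Tcal_N$ is almost surely finite, which immediately implies the stated result for leaves. I would proceed by induction on $N$. For the base case $N=1$, the sole $1$-particle and its descendants form a linear birth--death process on the finite interval $[0,1]$ with \emph{constant} per-particle birth rate $\bfn$ and death rate $\sfn$, since \cref{eq:stop_prob,eq:branch_prob} are independent of $m$. Standard non-explosion results for Markov birth--death processes with bounded per-particle rates then guarantee that a.s.\ only finitely many events occur in $[0,1]$; one can also verify this directly by computing $E[M(t)]=e^{(\bfn-\sfn)t}$, where $M(t)$ is the expected number of alive branches at time $t$, and noting that the expected total number of stop and replicate events is bounded by $(\bfn+\sfn)\int_0^1 e^{(\bfn-\sfn)s}\,\dee s<\infty$.

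For the inductive step, I would fix $\Tcal_{N-1}$ (assumed a.s.\ finite) and decompose the increment added by object $N$ into (a) new stop/replicate nodes placed on existing branches of $\Tcal_{N-1}$, and (b) fresh subtrees rooted at each new replicate node of type (a), one per new divergent branch. For (a), the $N$-particle and any of its copies traverse existing branches for a total tree-time bounded by $T_{N-1}\defas\sum_{[uv]\in\Bcal(\Tcal_{N-1})}(t_v-t_u)$, which is a.s.\ finite by hypothesis. Along each traversed branch the per-particle event rates are uniformly bounded above by $\sfn$ and $\bfn$ via \cref{eq:stop_prob_general,eq:branch_prob_general}, so the count of new events on existing branches is stochastically dominated by a $\mathrm{Poisson}((\sfn+\bfn)T_{N-1})$ variable and is a.s.\ finite. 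For (b), each new divergent branch is first traversed by a single fresh $N$-particle whose descendants evolve as an independent copy of the $N=1$ birth--death process restricted to $[t^{*},1]$; by the base case each such subtree has a.s.\ finitely many nodes.

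The main technical obstacle is that the total contribution from object $N$ is a random sum, indexed by an a.s.-finite random set, of a.s.-finite random variables, and one must still verify the sum itself is a.s.\ finite. I would handle this by conditioning on $\Tcal_{N-1}$: the conditional expected count of type-(a) events is at most $(\sfn+\bfn)T_{N-1}$, and the conditional expected size of each type-(b) subtree is bounded by a finite constant $C(\sfn,\bfn)$ depending only on the rate parameters (obtained from the base-case generating-function calculation at $t^{*}=0$, which maximizes the subtree size). Hence the conditional expected total contribution from object $N$ is a.s.\ finite, forcing the contribution itself to be a.s.\ finite. Combining with the inductive hypothesis closes the induction. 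I should note that the analogous assertion for arbitrary replicate/stop rate functions $\bfn(t),\sfn(t)$ would follow from the same argument provided these functions are integrable on $[0,1]$, so boundedness is not essential, only finiteness of the rate integrals.
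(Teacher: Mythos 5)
Your proof is correct, but it takes a genuinely different route from the paper's. The paper embeds the entire $N$-object tree into a single continuous-time \emph{multitype} Markov branching process (types indexed by the number of particles on a branch, so a type-$n$ individual gives birth at rate $\bfn\boffset H_n^{\boffset}$ and dies or changes type at rate $\sfn\soffset H_n^{\soffset}$), computes the offspring mean functions explicitly, checks they are all finite, and invokes the standard non-explosion hypothesis for such processes. You instead induct on $N$: the base case is the single-type linear birth--death process with constant rates $\bfn,\sfn$, and the inductive step decomposes object $N$'s contribution into events on existing branches (dominated by a Poisson count, using that each branch of $\Tcal_{N-1}$ is traversed by at most one copy of the $N$-particle, so the total exposure time is at most $\sum_{[uv]}(t_v-t_u)$) plus fresh subtrees that are independent copies of the base case. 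Both arguments are sound; a finite collection of a.s.-finite contributions is automatically a.s. finite, so your conditional-expectation step at the end is more than is needed, though harmless. What each approach buys: yours is more elementary (no multitype machinery or offspring distributions needed) and, as you note, extends directly to integrable time-varying rates $\bfn(t),\sfn(t)$; the paper's approach does more work up front but the same infinitesimal generator $\bm G$ then delivers the expected leaf counts of \cref{result:expectedleaves} for free, which your argument does not.
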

%
%
The practical interpretation of this result being that the number of latent components in a model will be (almost surely) finite for a finite amount of data points.

Furthermore, we may characterize the expected number of leaves in the tree structure.
In \cref{sec:galtonwatson} we show that the \emph{infinitesimal generating matrix} of this stochastic process, where the state of the process is given by the number of individuals (of each type) in the population, is characterized by the $N\times N$ lower triangular matrix $\bm G = (g_{i,j})_{i,j\le N}$ with 
\[
\label{eq:generatormatrixbrief}
g_{i,j}
	=
	\begin{cases}
	\binom{i}{j} \Bigl [
		\soffset \sfn B(\soffset+j, i-j)
		+ \boffset \bfn B(\boffset + i-j, j )
		\Bigr ]
		,
		\qquad
		& 1\le j < i
		\\
	\boffset \bfn B(\boffset, i) - \soffset \sfn H_i^{\soffset}
	,
	\qquad
	& j=i
	,
	\\
	0 , \qquad & j > i
	,
	\end{cases}
\]
where, as in the proof of \cref{thm:exchangeability}, $H_n^{\alpha} \defas \psi(\alpha+n) - \psi(\alpha)$ and $B(\alpha,\beta) \defas \Gamma(\alpha) \Gamma(\beta) / \Gamma(\alpha+\beta)$.
In the theory of branching processes, it is well known that the matrix exponential of $\bm G t$ characterizes the expected population at time $t$.
Because we have associated the leaves in a beta diffusion tree with time $t=1$, we have the following:
%
%
\begin{thm} \label{result:expectedleaves}
Let $N, \sfn, \bfn, \soffset, \boffset$, and $\Tcal_N$ be as in \cref{result:finiteleaves}.
For every $j\le N$, let $K_{N,j}$ denote the number of leaves in $\Tcal_N$ with exactly $j$ objects, and let $\numfeat{N} = \sum_{j=1}^N \numfeat{N,j}$ denote the total number of leaves in $\Tcal_N$.
Let $\bm M \defas \exp( \bm G )$ denote the matrix exponential of $\bm G$ with elements $\bm M = (m_{i,j})_{i,j\le N}$.
Then
\[
\EE [\numfeat{N,j}]
	= m_{N,j}
	,
	\qquad
	j\le N
	,
	\label{eq:expectedleafdensity}
\]
and so
\[
\EE [ \numfeat{N} ]
	=
	\sum_{j=1}^N m_{N,j}
	.
	\label{eq:expectedleaves}
\]
\end{thm}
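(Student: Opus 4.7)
The plan is to identify the beta diffusion tree with a continuous-time multitype Markov branching process, as will be done in \cref{sec:galtonwatson}, and then invoke the classical matrix-exponential formula for expected type counts in such a process. First I would recall the correspondence: a branch along which $i$ particles travel corresponds to an individual of type $i$ alive during the associated time interval; a stop node on that branch at which $i - j$ particles halt is the event at which the individual transitions to type $j$ (or dies, when $j = 0$); and a replicate node is a birth event. Under this dictionary, the leaves of $\Tcal_N$ at time $t = 1$ are exactly the individuals alive at time $1$, and a leaf containing $j$ objects is a type-$j$ individual. The generative description forces the initial condition to be a single individual of type $N$ at time $0$, since the branch emerging from the root carries all $N$ objects until the first stop or replicate event occurs.

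Next I would apply the standard result for continuous-time multitype Markov branching processes (see, e.g., \citet{athreya1972branching} or \citet{harris2002theory}): if $\bm G$ is the infinitesimal generator and the non-explosion hypothesis is satisfied, then the expected number of type-$j$ individuals at time $t$, given a single type-$i$ individual at time $0$, equals $(\exp(\bm G t))_{i,j}$. Non-explosion is precisely \cref{result:finiteleaves}. Taking $i = N$ and $t = 1$ yields $\EE[\numfeat{N,j}] = (\exp(\bm G))_{N,j} = m_{N,j}$, which is \cref{eq:expectedleafdensity}. Summing over $j = 1, \dots, N$ and using linearity of expectation then gives \cref{eq:expectedleaves}.

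The main obstacle is conceptual rather than computational, and two things need to be checked carefully. First, one must verify that the branching-process correspondence is tight enough that a leaf of $\Tcal_N$ containing $j$ objects is literally a type-$j$ individual alive at time $1$, and that the off-diagonal and diagonal entries of the generator really take the form displayed in \cref{eq:generatormatrixbrief}; in particular, the binomial factor $\binom{i}{j}$ must emerge from summing over which subset of the $i$ particles participates in each stop or replicate event, and the diagonal entry must correctly combine the total outgoing rate with the contribution $\boffset \bfn B(\boffset, i)$ from replicate events whose divergent offspring has zero descendants at $t = 1$. All of this is the content of \cref{sec:galtonwatson}. Second, the applicability of the matrix-exponential formula requires the Kolmogorov forward equations to yield a finite, honest mean semigroup, which is secured by \cref{result:finiteleaves} together with the finiteness of $\sfn, \bfn, \soffset, \boffset$. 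Once those ingredients are in hand, the theorem reduces to a one-line invocation of a textbook identity.
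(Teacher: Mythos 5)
Your proposal is correct and follows essentially the same route as the paper: \cref{sec:galtonwatson} establishes the identification of the beta diffusion tree with a continuous-time multitype Markov branching process started from a single type-$N$ individual, verifies that the generator takes the form \cref{eq:generatormatrixbrief}, and then reads off $\EE[\numfeat{N,j}]$ from the $N$-th row of $\bm M(1) = \exp(\bm G)$ via the standard mean-semigroup identity, with non-explosion supplied by \cref{result:finiteleaves}. No gaps.
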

%
Analytically computing these expectations from the matrix exponential of a triangular matrix is systematic, though in practice we will usually resort to numerical evaluations.
As one would expect, the replicate rate parameter $\bfn$ has the greatest influence on the expected number of features in \cref{eq:expectedleaves}, as depicted in \cref{fig:expectedleaves}.
\begin{figure}[!t]
\centering
\includegraphics[scale=.5]{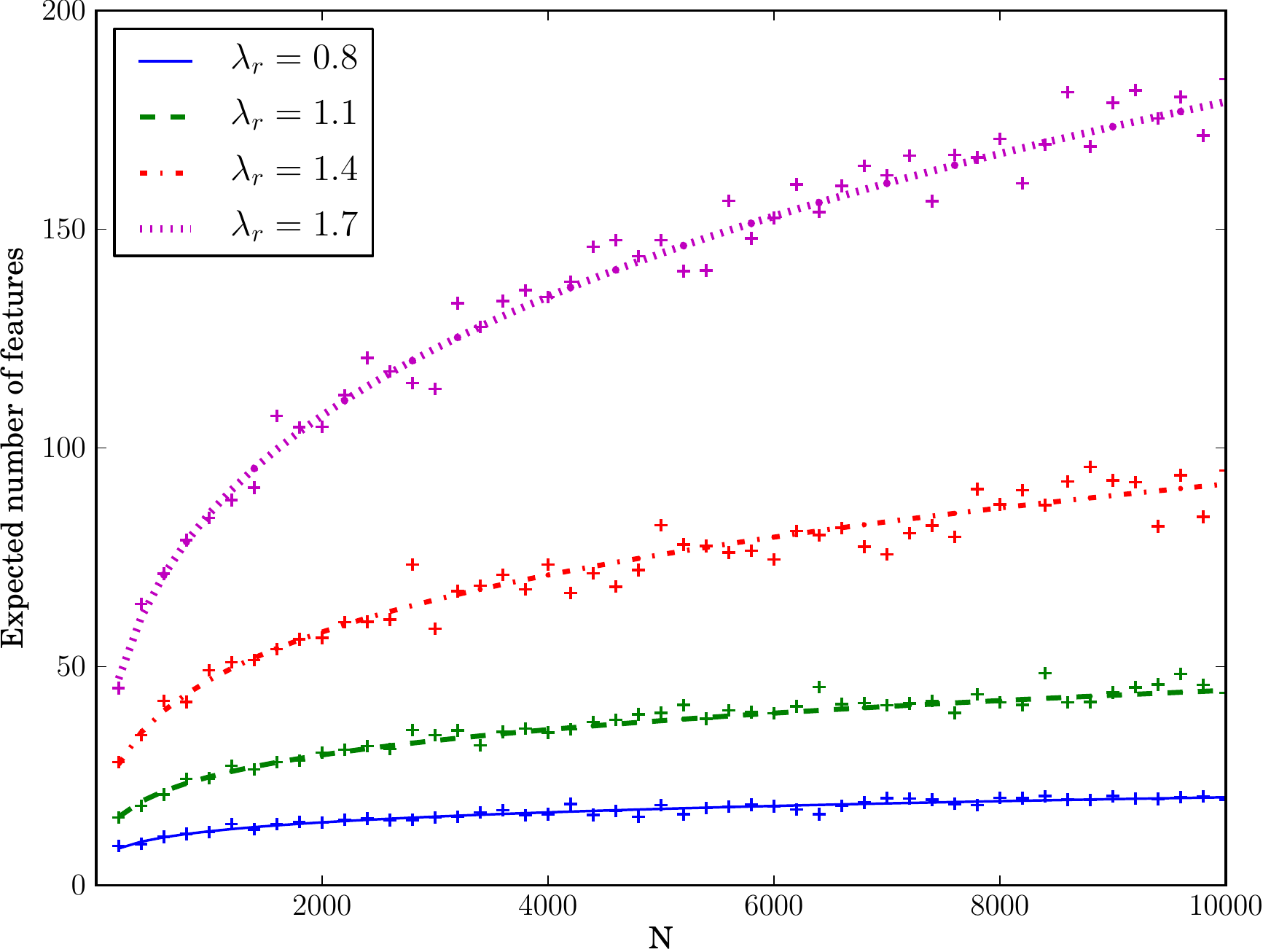}
\caption{Expected number of features (plotted as lines) in the beta diffusion tree for different values of $\bfn$, along with the number of leaves in trees simulated from the model (plotted as points).}
\label{fig:expectedleaves}
\end{figure}
%
%

\section{Application: A hierarchical factor analysis model}
\label{sec:LG}

We have seen that the leaves of a beta diffusion tree determine a latent feature allocation, and that the tree structure defines a hierarchical clustering of the features.
We show another simulated beta diffusion tree with $N=150$ objects in \cref{fig:large_demo_tree}.
\begin{figure}[!t]
\centering
\subfigure[Tree structure]{
                \includegraphics[scale=.4]{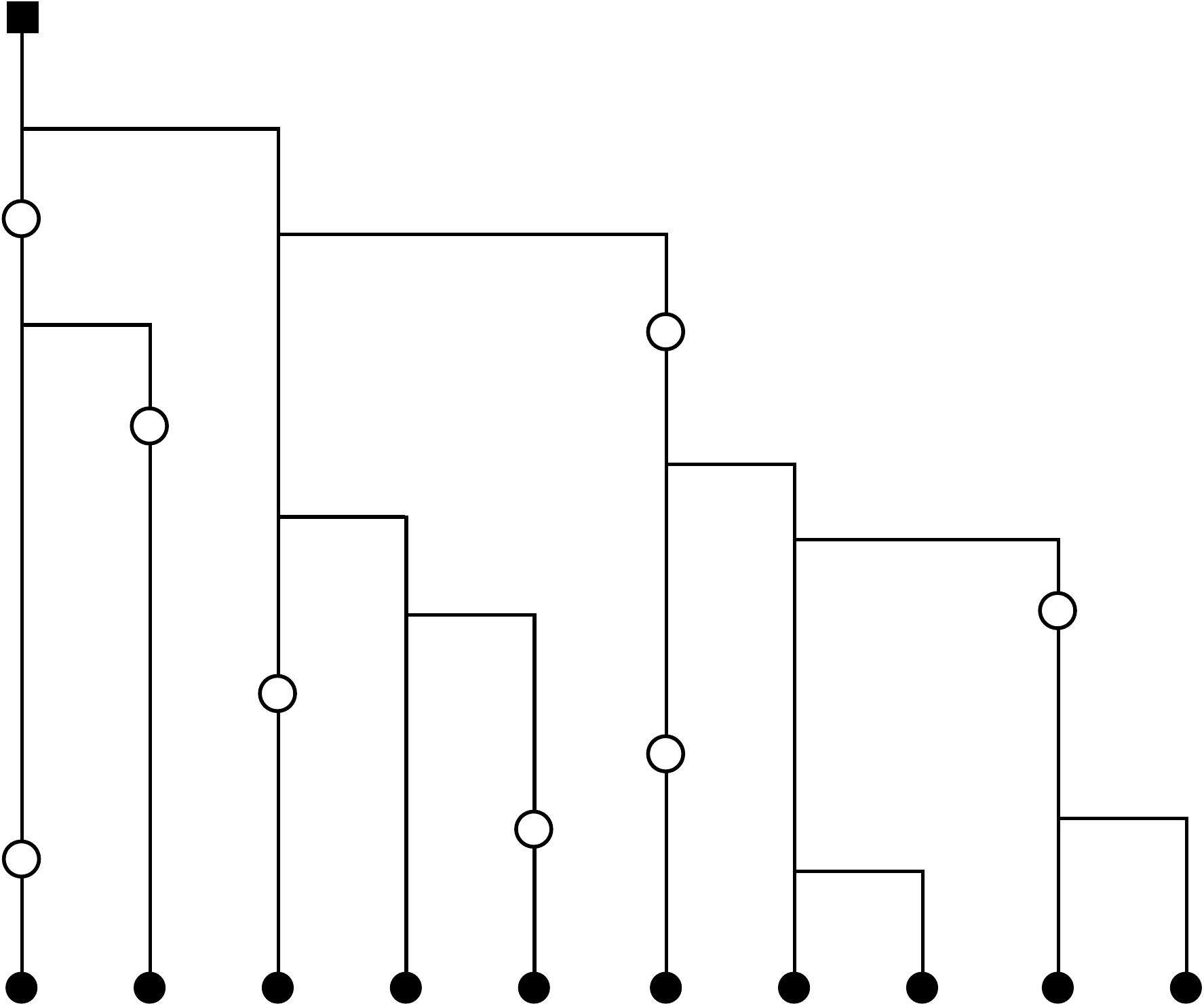}
                \label{fig:large_demo_tree}
                }
        \qquad
\subfigure[Latent feature matrix]{
                \includegraphics[scale=.3]{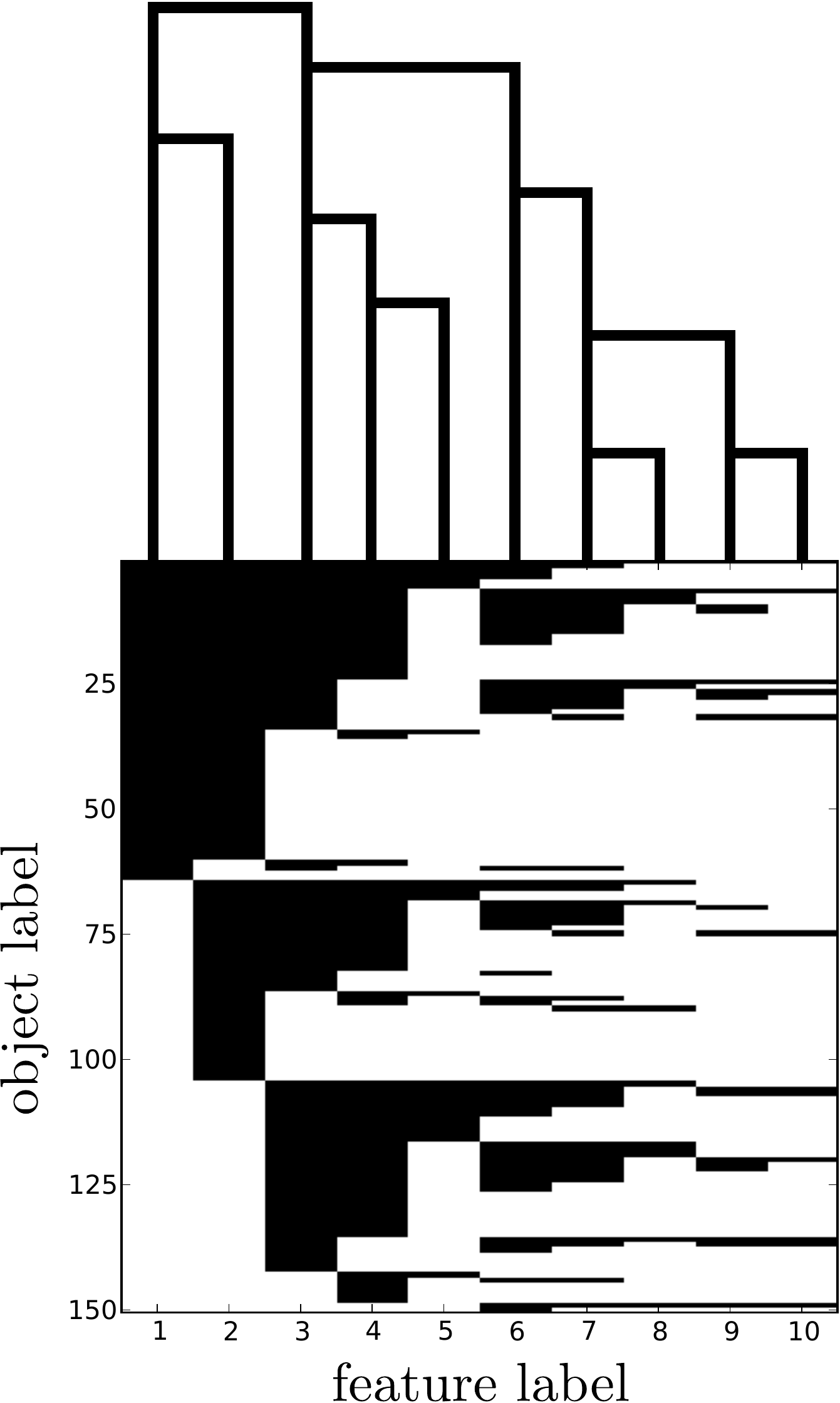}
                \label{fig:large_demo_Zmat}
        }
       	\caption{(a) A simulated beta diffusion tree with $N=150$ objects, where an open circle represents a stop node.  (b) The corresponding representation as a binary feature matrix, where the columns are interpreted as the features.  The hierarchical clustering defined by the tree structure is shown over the columns.  For visualization, the rows have been sorted according to their binary representations.}
	\label{fig:large_demo}
\end{figure}
Let $\numfeat{N}$ denote the number of leaf nodes (features), which we have seen is unbounded yet almost surely finite (e.g., $\numfeat{150} = 10$ in \cref{fig:large_demo_tree}).
In this example, it is convenient to represent the feature allocation as a binary matrix, which we will denote as $\bm Z$, where the $n$-th row $\bm z_n \in \{0,1\}^{\numfeat{N}}$ indicates the features to which object $n$ is allocated, i.e., $z_{n,k} = 1$ indicates object $n$ is allocated to feature $k$ and $z_{n,k}=0$ otherwise.
Then each column of $\bm Z$ represents a feature, and the tree structure defines a hierarchical clustering of the columns, depicted in \cref{fig:large_demo_Zmat}.

In applications, we typically associate the objects allocated to a feature with a set of feature-specific latent parameters.
The objects can be observed data that depend on the latent parameters, or the objects can themselves be additional unobserved variables in the model.
A convenient choice for a set of continuous-valued latent parameters associated with each feature (i.e., each leaf node in the beta diffusion tree) are the locations of the leaf nodes in $\mathcal X$.
Consider the following example:
Let $\bm Z$ be the binary matrix representation of the feature allocation corresponding to a beta diffusion tree with $\numfeat{N}$ leaf nodes, and let $\mathcal X = \Reals^D$.
Recall that the $k$-th column of $\bm Z$ corresponds to a leaf node in the tree with diffusion location $\bm x_k$ in $\Reals^D$ at time $t=1$ (c.f. \cref{fig:demo_tree}).
We model a collection of $N$ data points $\bm y_1, \dotsc, \bm y_N$ in $\Reals^D$ by
\[
\bm y_n = \bm z_n^T \bm X + \bm \varepsilon_n
	,
	\qquad n\le N
	,
	\label{eq:LG_model}
\]
where $\bm X$ is a $\numfeat{N} \times D$ factor loading matrix whose $k$-th row is given by $\bm x_k$, and $\bm \varepsilon_1, \dotsc, \bm \varepsilon_n$ are \iid\ Gaussian noise vectors with zero mean and covariance $\sigma_Y^2 \bm I_D$.
Here $\sigma_Y$ is a noise parameter.
Let $\bm Y$ be the $N\times D$ matrix with its $n$-th row given by $\bm y_n$.
Then $\bm Y$ is matrix Gaussian and we may write $\EE [ \bm Y \given \mathcal T_N, \bm X ] = \bm Z \bm X$.
This is a factor analysis model that generalizes the models studied by \citet{GG2011} and \citet{DG2009}. 
In the former, the latent features (columns of $\bm Z$) are independent and in the latter, the features are correlated via a flat clustering.  In both models, the factor loadings $\bm x_1, \dotsc, \bm x_{\numfeat{N}}$ are mutually independent.
With the beta diffusion tree, however, \emph{both} the latent features and factor loadings are hierarchically-related through the tree structure.
A graphical model comparison of these three models is shown in \cref{fig:LG_GM}.
\begin{figure}[ht]
\begin{minipage}[b]{1\linewidth}
\centering
\begin{tabular}{ccc}
\begin{tikzpicture}[scale=1, transform shape]  
\node [matrix,matrix anchor=mid, column sep=30pt, row sep=30pt] {
&\node (Z) [black, latent] {$\bm Z$}; \\
\node (X) [black, latent] {$\bm X$}; & \node (Y) [black, obs] {$\bm Y$};  \\
};
    
\draw [->]  (X) to (Y) ;
\draw [->]  (Z) to (Y) ;

\end{tikzpicture}
\quad  \quad \qquad & \quad \quad
\begin{tikzpicture}[scale=1, transform shape]  
\node [matrix,matrix anchor=mid, column sep=5pt, row sep=10pt] {
&\node (C) [black, latent] {$\bm C$}; & & \node (M) [black, latent] {$\bm M$}; \\
& &\node (Z) [black, latent] {$\bm Z$}; \\
& [2pt] & \\
\node (X) [black, latent] {$\bm X$}; & & \node (Y) [black, obs] {$\bm Y$};  \\
};
    
\draw [->]  (X) to (Y) ;
\draw [->]  (Z) to (Y) ;
\draw [->]  (C) to (Z) ;
\draw [->]  (M) to (Z) ;

\end{tikzpicture}
\quad \quad & \quad \quad
\begin{tikzpicture}[scale=1, transform shape]  
\node [matrix,matrix anchor=mid, column sep=30pt, row sep=30pt] {
&\node (T) [black, latent] {$\mathcal T_N$}; \\
\node (X) [black, latent] {$\bm X$}; & \node (Y) [black, obs] {$\bm Y$};  \\
};
    
\draw [->]  (X) to (Y) ;
\draw [->]  (T) to (Y) ;
\draw [->]  (T) to (X) ;

\end{tikzpicture}
\\
(a) IBP
\quad \quad \qquad & \quad \quad
(b) Correlated IBP
\quad \quad & \quad \quad
(c) Beta diffusion tree
\end{tabular}
\end{minipage}
\caption{A graphical model comparison of a latent feature linear Gaussian model, where the feature allocation matrix $\bm Z$ is modeled by (a) the IBP, (b) a correlated IBP, and (c) the beta diffusion tree.
In (a), the columns of the matrix $\bm Z$ are independent.
In (b), two clustering matrices $\bm C$ and $\bm M$ are combined to induce dependencies between the columns of $\bm Z$ (see \cite{DG2009} for examples).  In (c), the matrix $\bm Z$ is a deterministic function of the tree structure $\mathcal T_N$, and the factor loadings $\bm X$ are also hierarchically related through $\Tcal_N$.  For simplicity, the hyperparameters in each model are not displayed.}
\label{fig:LG_GM}
\end{figure}

For the remainder, we let the diffusion paths in the beta diffusion tree be Brownian motions with variance $\sigma_X^2$ (c.f. \cref{eq:brownian_motion}).
In this case, we may analytically integrate out the specific paths that were taken by the particles, along with the locations of the internal (non-leaf) nodes in the tree structure.
It is easy to show that, conditioned on $\mathcal T_N$, the distribution of $\bm X$ is matrix Gaussian with conditional density
\[ 
p( \bm X \given \mathcal T_N )
	= \frac 1 { (2\pi)^{D\numfeat{N}/2} \sigma_X^{\numfeat{N} D} \vert \bm V \vert^{D/2} }
		\exp \left \{
		- \frac 1 {2 \sigma_X^2} \text{tr} [
			\bm X^T \bm V^{-1} \bm X
		]
	\right \}
	,
	\label{eq:GP_likel}
\]
where $\bm V$ is a $\numfeat{N} \times \numfeat{N}$ matrix with entries given by
\[
\nu_{\ell,k} = 
\begin{cases}
	t_{a(\ell,k)} , \qquad &\text{ if } \ell \ne k ,
	\\
	1 , \qquad &\text{ if } \ell=k ,
\end{cases}
\]
and $t_{a(\ell,k)}$ is the node time of the most recent common ancestor to leaf nodes $\ell$ and $k$ in the tree.
Because $\bm Y$ and $\bm X$ are both Gaussian, we may then
analytically integrate out the factor loadings $\bm X$ from the model, giving the resulting conditional density of the data
\[
\label{eq:likel}
\begin{split}
p(\bm Y \given \mathcal T_N )
		&= \frac 1 {(2\pi)^{ND/2} \sigma_Y^{(N-\numfeat{N})D} \sigma_X^{D\numfeat{N}} \bigl \vert \bm V \bm Z^T \bm Z + \frac{\sigma_Y^2}{\sigma_X^2} \bm I_{\numfeat{N}} \bigr \vert^{D/2} }
		\\
		&\qquad \times
		\exp \Bigl \{
				- \frac 1 {2 \sigma_Y^2} 
					\tr \Bigl [
					 \bm Y^T \Bigl ( 
					 \bm I_N
					 - \bm Z \Bigl ( \bm V \bm Z^T \bm Z + \frac{\sigma_Y^2}{\sigma_X^2} \bm I_{\numfeat{N}} \Bigr )^{-1} 
					 \bm V \bm Z^T 
					 \Bigr ) \bm Y
					 \Bigr ]
			\Bigr \}
		.
\end{split}
\]
Finally, by \cref{eq:likel,eq:GP_likel}, the conditional distribution of $\bm X$, given $\Tcal_N$ and $\bm Y$, is matrix Gaussian and has the conditional density
\[ \label{eq:posteriorloadings}
\begin{split}
p( \bm X \given \Tcal_N, \bm Y )
	&=
	\frac 1 { (2\pi)^{D\numfeat{N}/2} \sigma_Y^{D\numfeat{N}} \det{ \bm Q }^{D/2} } 
	\\
	&\qquad \qquad \times
	\exp \Bigl \{
		-\frac 1 {2 \sigma_Y^2} \text{tr}
		\Bigl [
			(\bm X - \bm Q \bm Z^T \bm Y)^T \bm Q^{-1} (\bm X - \bm Q \bm Z^T \bm Y)
		\Bigr ]
	\Bigr \}
	,
\end{split}
\]
where
$\bm Q \defas 
\bigl ( \bm V \bm Z^T \bm Z + \frac{\sigma_Y^2}{\sigma_X^2} \bm I_{\numfeat{N}} \bigr )^{-1} \bm V$.

It is straightforward to modify this model for different applications.
For example, if we require the factor loadings to be non-negative, we may replace the model in \cref{eq:LG_model} with
\[
\bm y_n &= \bm z_n^T \bm H + \bm \varepsilon_n
	,
	\qquad n \le N ,
	\\
h_{k,d} &= \exp \{ x_{k,d} \}
	,
	\qquad d\le D, \: \forall k
	,
	\label{eq:link_fn}
\]
where $h_{k,d}$ are the entries of the factor loading matrix $\bm H$.
In this case, because the factor loadings are no longer Gaussian, we cannot analytically integrate them out of the model as in \cref{sec:LG}, and we must numerically integrate over their values during inference.
Because $\bm H$ is a deterministic function of the diffusion locations $\bm X$, this task is equivalent to integrating over $\bm X$, which requires sampling from its conditional distribution given by \cref{eq:posteriorloadings}.

Another simple extension of our model could have diffusion in $D$ dimensions and data in $p$ variates, obtained with a model such as
\[
\bm y_n = \bm z_n^T \bm X \bm W + \bm \varepsilon_n
	,
	\qquad
	n\le N
	,
\]
where $\bm y_n$ is a $p$-vector representing the $n$-th measurement, as before $\bm X$ is the $\numfeat{N}\times D$ matrix of diffusion locations representing the factors, $\bm W$ is a $D\times p$ matrix of variate-specific factor weights, and $\bm \varepsilon_n$ is Gaussian noise.
The interpretation here is that the latent factors are hierarchically-related and each have different effects across the observed covariates.
The covariates themselves could be assumed {\it a priori} independent with independent Gaussian priors on $\bm W$, for example, or dependencies can be induced with an appropriate prior \citep[e.g., see the model by][]{palla2012nonparametric}.

\section{Inference}
\label{sec:inference}

We construct a Markov chain Monte Carlo procedure to jointly integrate over the tree structures $\Tcal_N$.
The sampler iterates between a series of moves: resampling subtrees, adding and removing replicate and stop nodes, and resampling the configurations at the replicate and stop nodes.
Each move proposes a sample of $\mathcal T_N$ (the tree structure and node times), which is accepted or rejected with a Metropolis--Hastings step.
We verify our procedure correctly targets the posterior distribution over tree structures with several joint distribution tests \citep{Geweke2004}, the results for which we supply in \cref{sec:Geweke}.

\subsection{Resampling subtrees}
\label{sec:resample_subtree}

%
The following move resamples \emph{subtrees} rooted at some internal node in the tree structure:
First select a subtree with probability proportional to the number of particles that have traversed the subtree.
Uniformly at random, select one of the particles that traversed the subtree and remove the particle \emph{and all replicates} from the subtree.
Resample the paths the particle takes down the remaining subtree according to the prior, conditioned on the paths of all other particles.
Once the particle and any created replicates have stopped or reached time $t=1$, then we obtain a new tree structure $\Tcal_N^*$.
A depiction of this procedure making one subtree proposal is shown in \cref{fig:inference}.
We can see that the larger the selected subtree, the larger the proposed move in the latent parameter space.

We accept or reject the proposal $\mathcal T_N^*$ with a Metropolis--Hastings step.
Because we sampled the proposed subtree from the prior, we do not need to include terms for the probability of the tree structures in the Metropolis--Hastings ratio.
Recall that $\Vcal(\Tcal_N)$ denotes the set of non-root nodes in the tree structure $\Tcal_N$, and that $m(u)$ denotes the number of particles through node $u$.
Let $v^* \in \Vcal(\Tcal_N)$ be the first node following the root node of the selected subtree.
Note then that this subtree was selected with probability proportional to $m(v^*)$, and the acceptance probability in the case of the factor analysis model from \cref{sec:LG} reduces to
\[ 
A_{\mathcal T_N \rightarrow \mathcal T_N^*}
	&\defas
	\max \left \{
		1 , 
		\frac { p ( \bm Y \given \Tcal_N^* ) \: m(v^*) / \sum_{u \in \Vcal(\Tcal_N^*)} m(u) }
				{ p ( \bm Y \given \Tcal_N ) \: m(v^*) / \sum_{u \in \Vcal(\Tcal_N)} m(u) }
		\right \}
		\\
	&\: =
	\max \left \{
		1 , 
		\frac { p ( \bm Y \given \Tcal_N^* ) \: \sum_{u \in \Vcal (\Tcal_N)} m(u) }
				{ p ( \bm Y \given \Tcal_N ) \: \sum_{u \in \Vcal (\Tcal_N^*)} m(u) }
		\right \}
		,
		\label{eq:MH_accept}
\]
where $p ( \bm Y \given \Tcal_N )$ is the likelihood function given by \cref{eq:likel}.
\begin{figure*}[t!]
\begin{minipage}[b]{1\linewidth}
\begin{center}
\begin{tabular}{cccc}
\includegraphics[scale=.35]{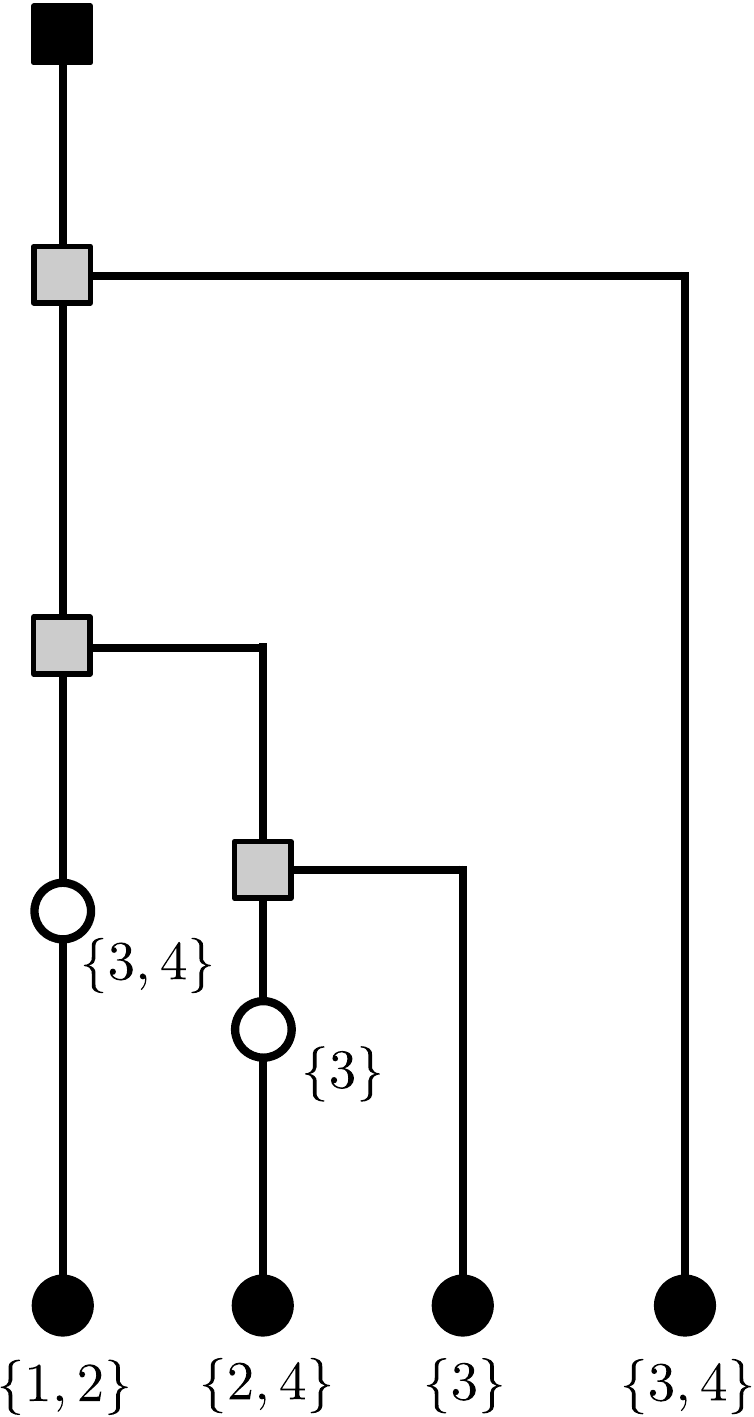}
&
\quad
\includegraphics[scale=.35]{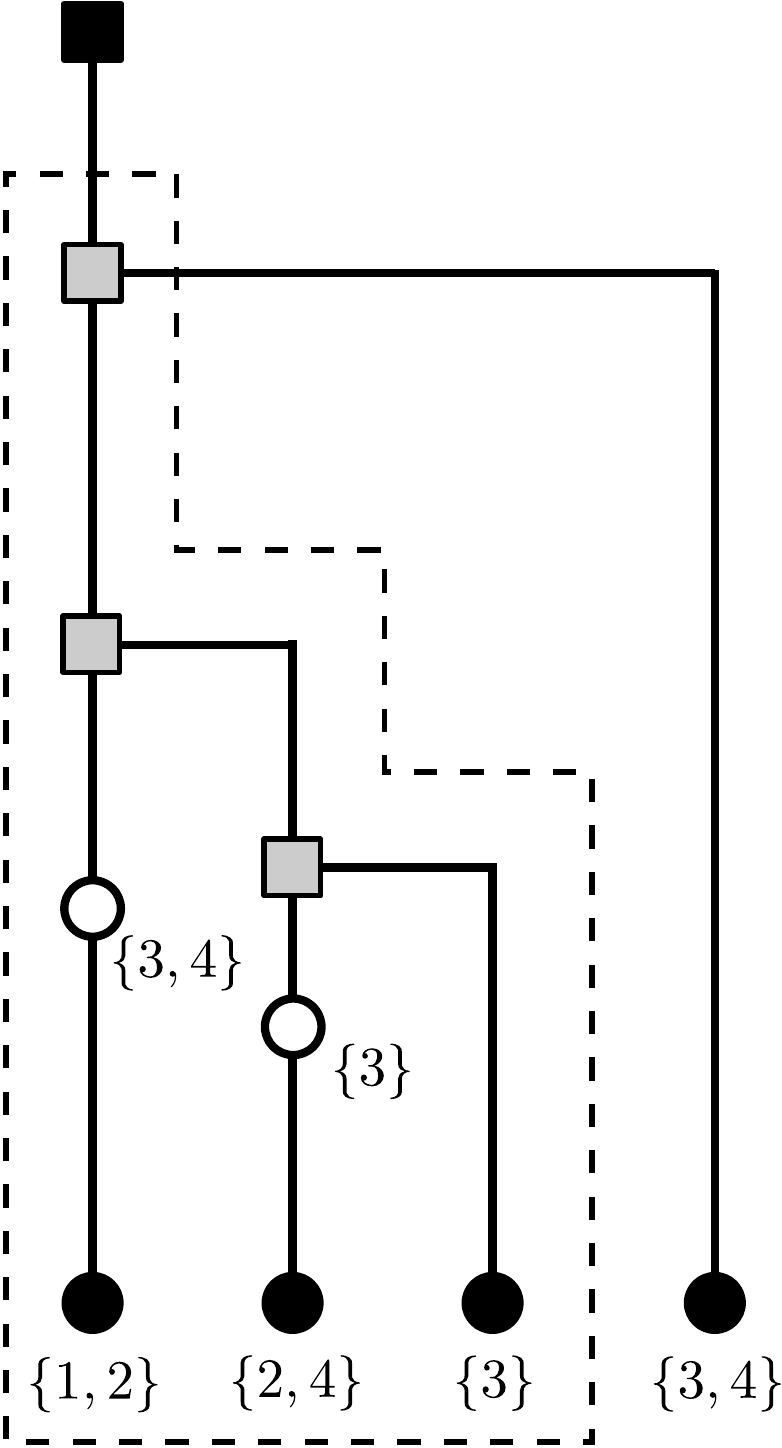}
&
\quad
\includegraphics[scale=.35]{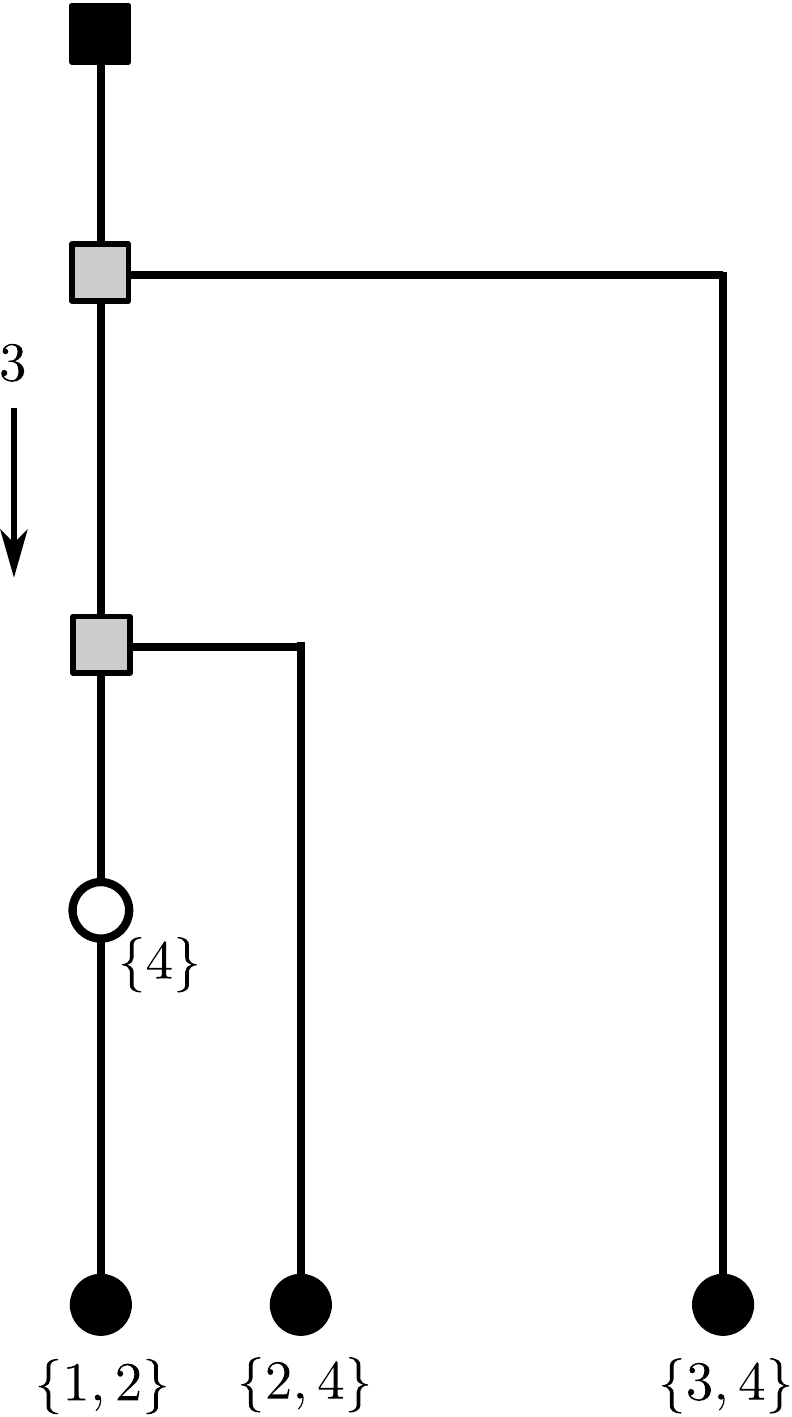}
&
\quad
\includegraphics[scale=.35]{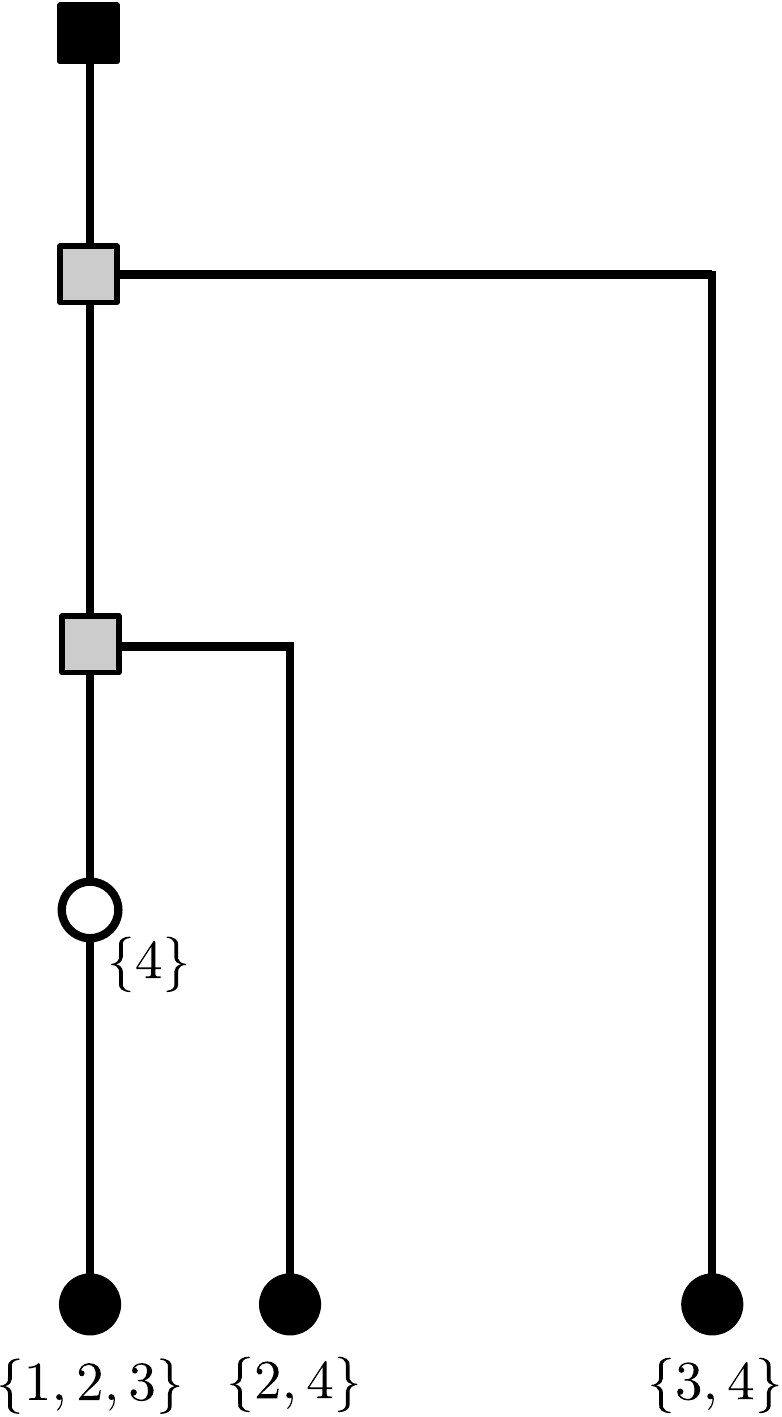}
\\
\quad
\\
\includegraphics[scale=.2]{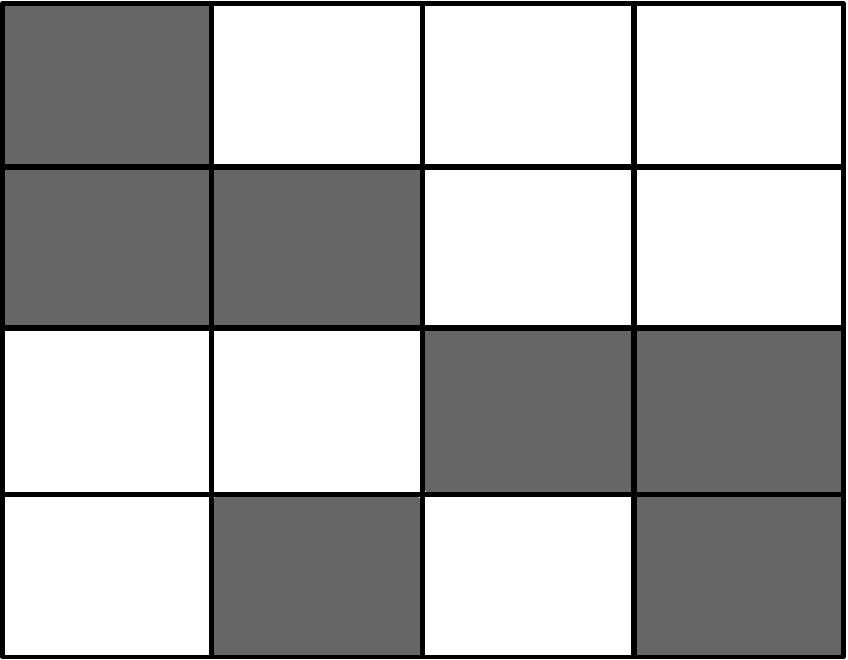}
&
\qquad
\includegraphics[scale=.2]{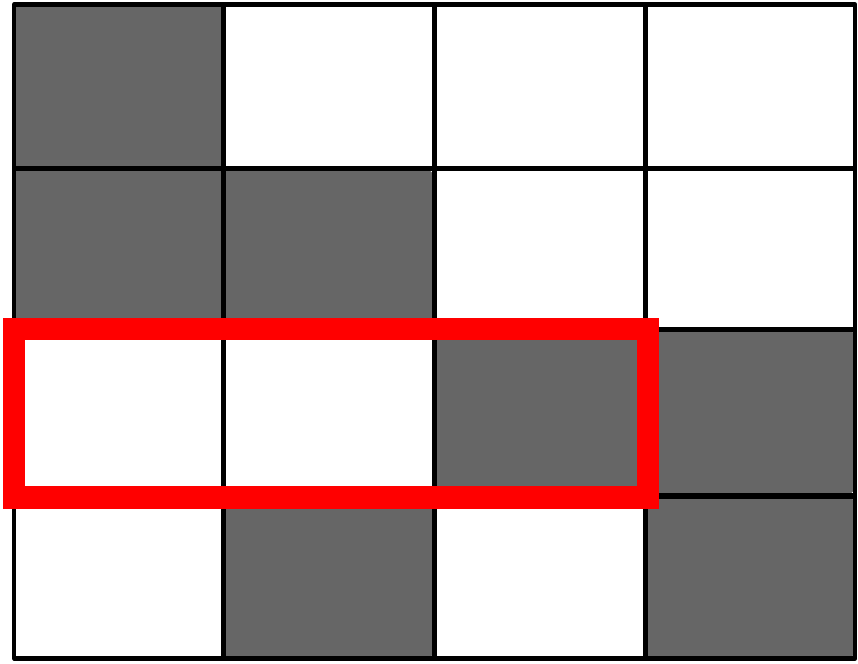}
&
\qquad
\includegraphics[scale=.2]{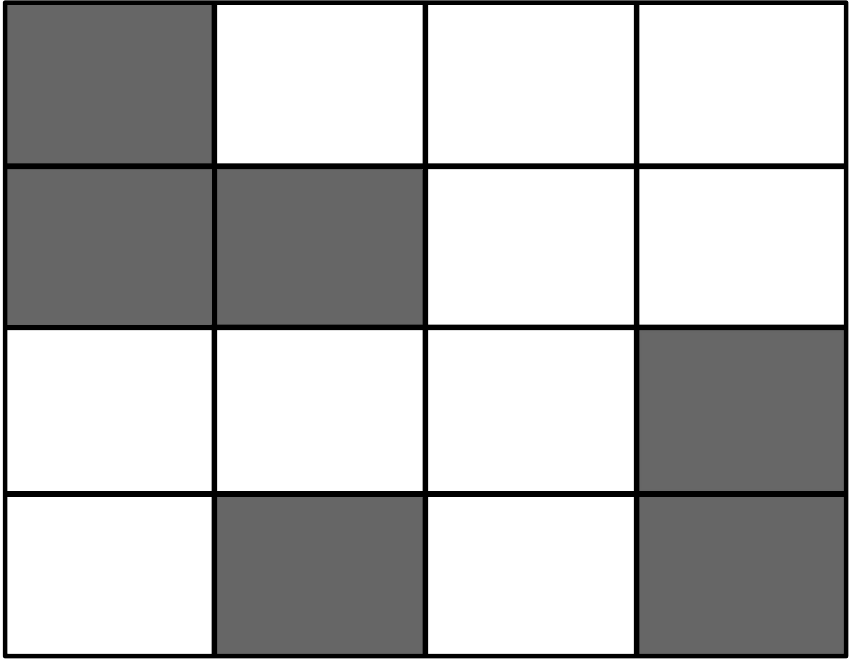}
&
\qquad
\includegraphics[scale=.2]{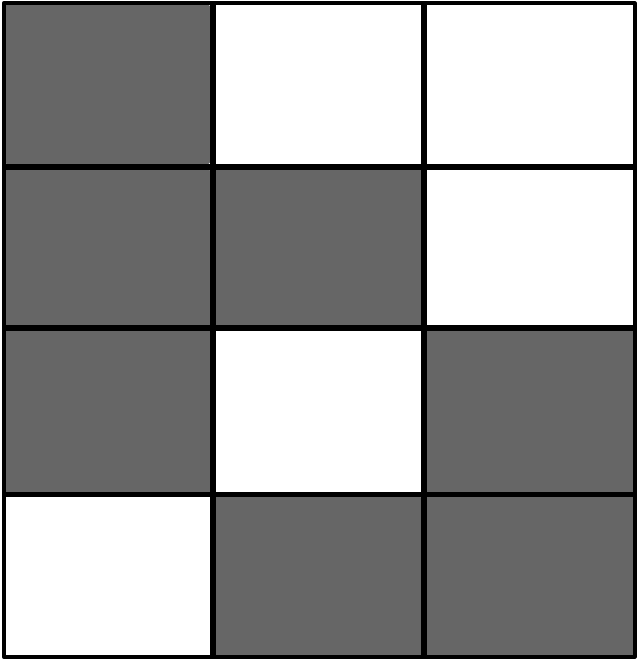}
\\
(a) & \qquad \quad (b) & \qquad \quad (c) & \qquad \quad (d)
\end{tabular}
\end{center}
\end{minipage}
\caption{An example of a proposal that resamples a subtree.  The original tree structure and corresponding feature matrix are shown in (a).  In (b), we select a subtree (outlined) with probability proportional to the number of particles traversing the subtree (four particles, in this example).  Uniformly at random, we select a particle down the branch (a 3-particle, in this example) that will be removed from the subtree.  The corresponding entries in the feature matrix that may be affected by this proposal are boxed in red.  In (c), we remove the 3-particle and resample its paths down the tree according to the prior, conditioned on the paths of all other particles.  In (d), the proposed tree and corresponding feature matrix (where an empty column has been removed) are shown.  The required terms for the acceptance ratio in \cref{eq:MH_accept} are $\sum_{u\in\Vcal(\Tcal_N)} m(u) = 25$ and $\sum_{u\in \Vcal(\Tcal_N^*)} = 19$.}
\label{fig:inference}
\end{figure*}

\subsection{Additional proposals}
\label{sec:advancedinference}

We make several additional proposals to improve mixing, which we summarize here.  Details are left to \cref{sec:furtherinference}.
\begin{itemize}
\item {\bf Multiple subtrees:}
We consider simultaneously resampling the paths of multiple particles down subtrees.
This is done by selecting the subtree as in the previous section, followed by selecting a subset of the particles down that subtree uniformly at random and resampling each of their paths down the subtree according to the prior.
The acceptance probability is again given by \cref{eq:MH_accept}.
As the number of resampled particles increases, so do the sizes of the proposed moves in the state space, and we therefore expect the acceptance probability to be low when many particles are resampled.
In our experiments, we limit the number of resampled particles to $\ceil{N/10}$.

\item {\bf Adding and removing replicate and stop nodes:}
A sampler based on only resampling subtrees will be inefficient at proposing the removal of replicate nodes, because such a proposal is only possible by first thinning the number of particles that take the divergent branch, and then proposing to resample a new subtree without a replicate point (via the moves in \cref{sec:resample_subtree}).
We therefore make proposals to remove replicate nodes from the tree structure, as well as proposals to add replicate nodes in order to leave the steady state distribution invariant. 
Similarly, we propose adding and removing stop nodes to branches in the tree.

\item {\bf Resample replicate and stop node configurations:}
We propose changing the decisions that particles take at replicate and stop nodes.
In particular, for a particle that replicates and takes the divergent branch at a replicate node, we propose removing the particle from the divergent subtree.
Alternatively, particles that do not replicate at branch points may be allowed to replicate and diffuse along the divergent branch (according to the prior).
Similarly, proposals to reverse the decisions at stop nodes are made.

\item {\bf Heuristics to prune and thicken branches:}
In order to specifically target unpopular replicate or stop nodes, we propose removing a replicate or stop node that is selected with probability inversely proportional to the number of particles that decided to replicate or stop at the node, respectively.
However, it is not clear how to propose adding unpopular nodes to the tree structure in an analogous manner, and because such reverse proposals are not made, these moves do not leave the stationary distribution of the Markov chain invariant.
Following an appropriate burn-in period, a practitioner may therefore remove these steps from the sampler in order to produce samples from the correct stationary distribution.
These moves are not included in the joint distribution tests in \cref{sec:Geweke}.

\end{itemize}

\subsection{Scheduling and computational complexity}
\label{sec:complexity}

In our experiments, we iterate between resampling the tree structure $\Tcal_N$ and the other latent variables and parameters in the model.
During each iteration, we make $2N$ proposals of resampling single subtrees, $N$ proposals resampling multiple subtrees simultaneously, $N$ proposals that change the decisions particles take at replicate and stop nodes, and $\mathcal I/4$ proposals to add and remove replicate and stop nodes, each, where $\mathcal I$ is the number of internal nodes in the tree structure at the beginning of the iteration.
Every five iterations, we replace the last proposal with those that specifically target unpopular replicate and stop nodes.
The mixing times did not appear to be sensitive to different settings of this schedule.

Every proposal requires one evaluation of the likelihood function in \cref{eq:likel}, the matrix inversion for which costs $O(\numfeat{N}^3)$, where $\numfeat{N}$ is the number of latent features.
In factor analysis applications, we typically expect $\numfeat{N} \ll N$.
We note that this is significantly cheaper than similar computations in density modeling applications where the leaves of a Dirichlet diffusion tree, for example, are associated with each data point, thereby costing $O(N^3)$ for the matrix inversion in a Gaussian density.
In order to overcome such difficulties, implementations of these models resort to message passing techniques that bring this computation down to $O(N)$ (e.g., see \citet{knowles2011message} and \citet{teh2007treecoalescent}).
These same techniques may be applied during inference with the beta diffusion tree.
Further suggestions to improve the efficiency of inference are discussed in \cref{sec:conclusion}.

\subsection{Sample hyperparameters}
\label{sec:hyperparameters}

Without good prior knowledge of what the tree structure hyperparameters $\soffset$, $\boffset$, $\sfn$, and $\bfn$ should be, we give them each a broad $\gammadist(\alpha,\beta)$ prior distribution and resample their values during inference.
We resample the concentration parameters $\soffset$ and $\boffset$ with slice sampling \citep{Neal2003}. 
Sampling the stop and replicate rate parameters $\sfn$ and $\bfn$ from their posterior distributions is straightforward by noting that
\[
\sfn \given \Tcal_N
	&\dist 
	\gammadist \Bigl (
		\alpha + \vert \mathcal S (\Tcal_N) \vert , \: \:
		\beta + \sum_{[ef] \in \mathcal B(\Tcal_N)} (t_f - t_e) H_{m(f)}^{\soffset} 
	\Bigr )
	,
\]
where as before $S (\Tcal_N)$ denotes the set of stop nodes and $H_n^\alpha = \psi(\alpha+n) - \psi(\alpha)$. 
A similar expression can be found for the posterior distribution of $\bfn$.
In all of our experiments, we set $\alpha = \beta = 1$.
For the factor analysis model, the noise hyperparameters $\sigma_X$ and $\sigma_Y$ are also resampled with slice sampling, for which we use the broad prior distributions $1/\sigma_Y^2 \dist \gammadist(1, 1)$ and $1/\sigma_X^2 \dist \gammadist(1,1)$.
%

\section{Numerical comparisons on test data}
\label{sec:experiments}

%
We implement the inference procedure presented in \cref{sec:inference} on the factor analysis model described in \cref{sec:LG}, which jointly models the binary matrix of factors $\bm Z$ and the matrix of factor loadings $\bm X$ with the beta diffusion tree.
We measure the quality of the model by evaluating the log-likelihood of the inferred model given a test set of held-out data (details below). %
We compare this performance against three baseline methods, each of which models the binary factors $\bm Z$ differently, however, in each case the number of latent features (and thus columns and rows of $\bm Z$ and $\bm X$, respectively) are unbounded and automatically inferred during inference.
The ``IBP'' method models $\bm Z$ with the two-parameter Indian buffet process \citep{GGS2007}.
The ``DP-IBP'' and ``IBP-IBP'' methods model $\bm Z$ with the two correlated latent feature models introduced by \citet{DG2009}, the first of which correlates features by partitioning them into blocks according to the Dirichlet process, and the second of which correlates features by allocating them to yet another underlying feature allocation.\footnote{For both correlated IBP methods, we use the two-parameter IBP.}
All three of these baseline methods model the factor loadings (independently from the factors) as mutually independent Gaussian vectors $\bm x_k \dist \Ncal ( \bm 0, \sigma_X^2 \bm I_D )$ for $k \le \numfeat{N}$, where $\numfeat{N}$ is the number of non-empty features and $N$ is the number of data points.
The factor loadings for all three models were analytically integrated out. 
The inference procedure described by \citet{meeds2007} was used to sample the IBP matrices, and the Metropolis--Hastings moves and partial Gibbs sampling steps for the Dirichlet process described by \citet{neal2000markov} were implemented for the DP-IBP method.
All hyperparameters were given broad prior distributions and sampled with slice sampling. 
\begin{figure}[t!]
\centering
\subfigure[{\it E. Coli} dataset]{
                \includegraphics[scale=.35]{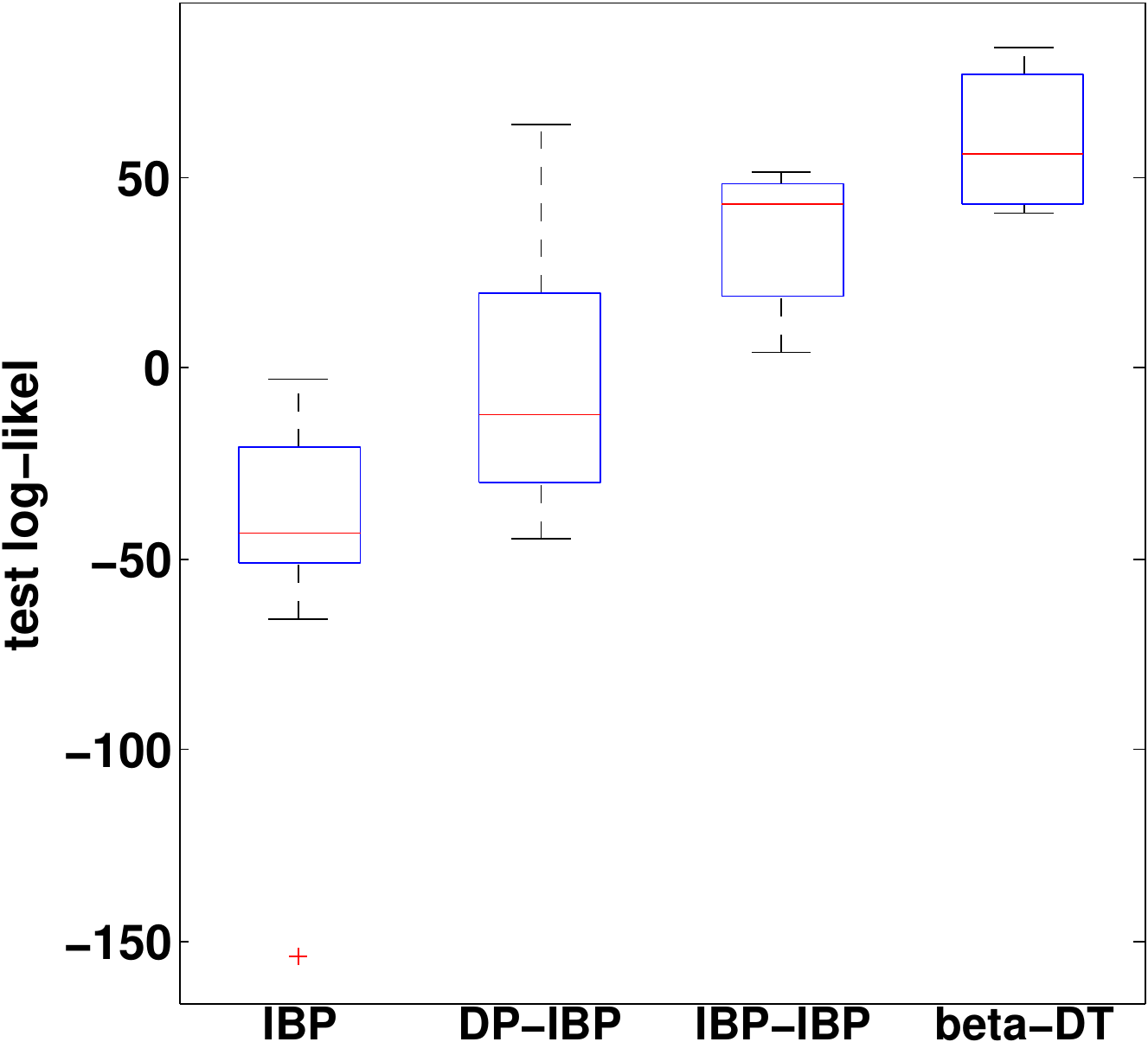}
                \label{fig:ecoli_LL}
                }
\subfigure[UN dataset]{
                \includegraphics[scale=.35]{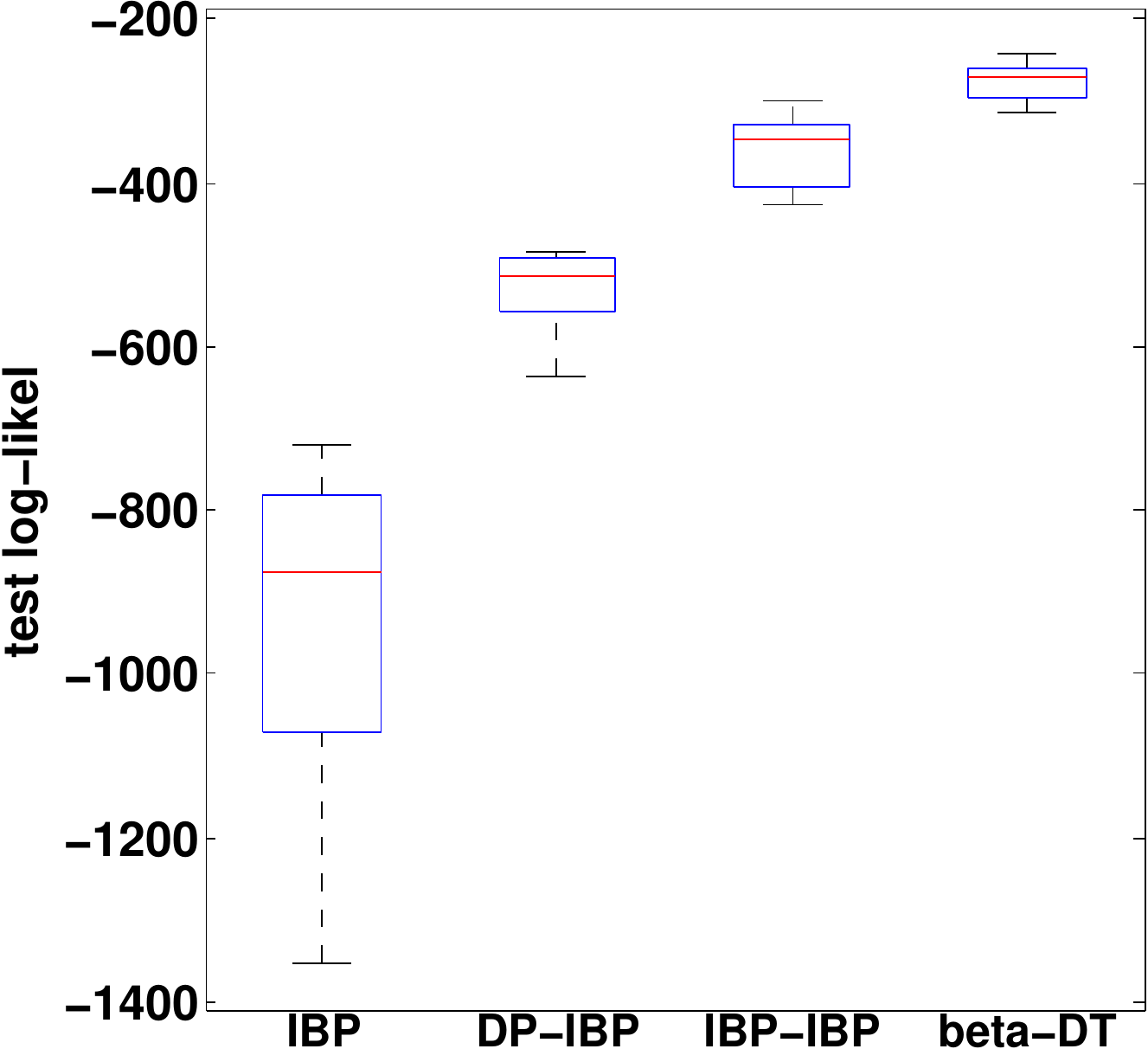}
                \label{fig:UN_LL}
        }
\subfigure[India dataset]{
                \includegraphics[scale=.35]{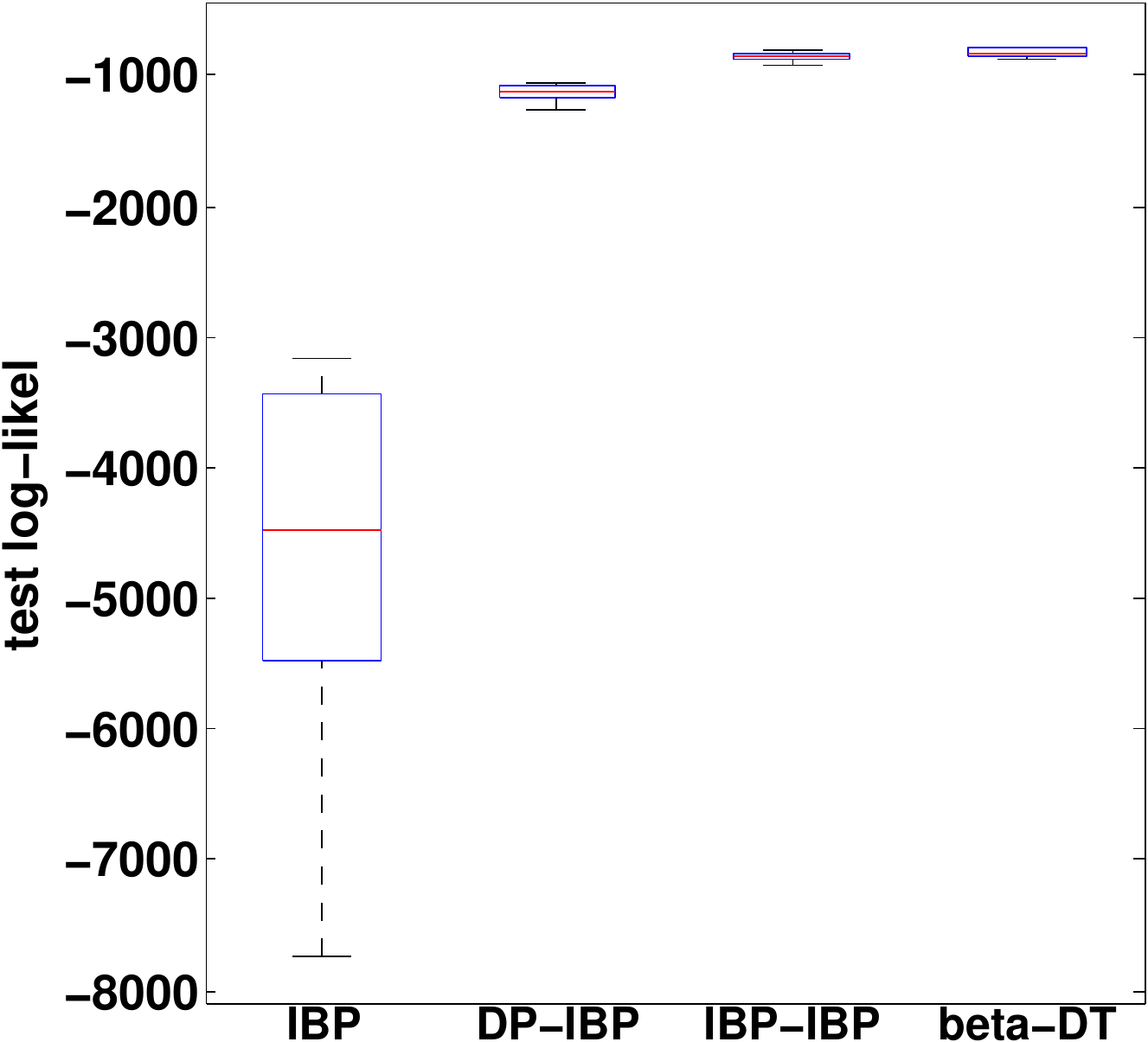}
                \label{fig:india_LL}
        }
       	\caption{Box plots of the test log-likelihoods for the four different models on three different data sets.  See the text for descriptions of the data sets and methods.  The beta diffusion tree achieves the best performance in each case.}
	\label{fig:exp_LL}
\end{figure}
\begin{figure}[!]
\centering
\subfigure[{\it E.~Coli} dataset]{
                \includegraphics[scale=.35]{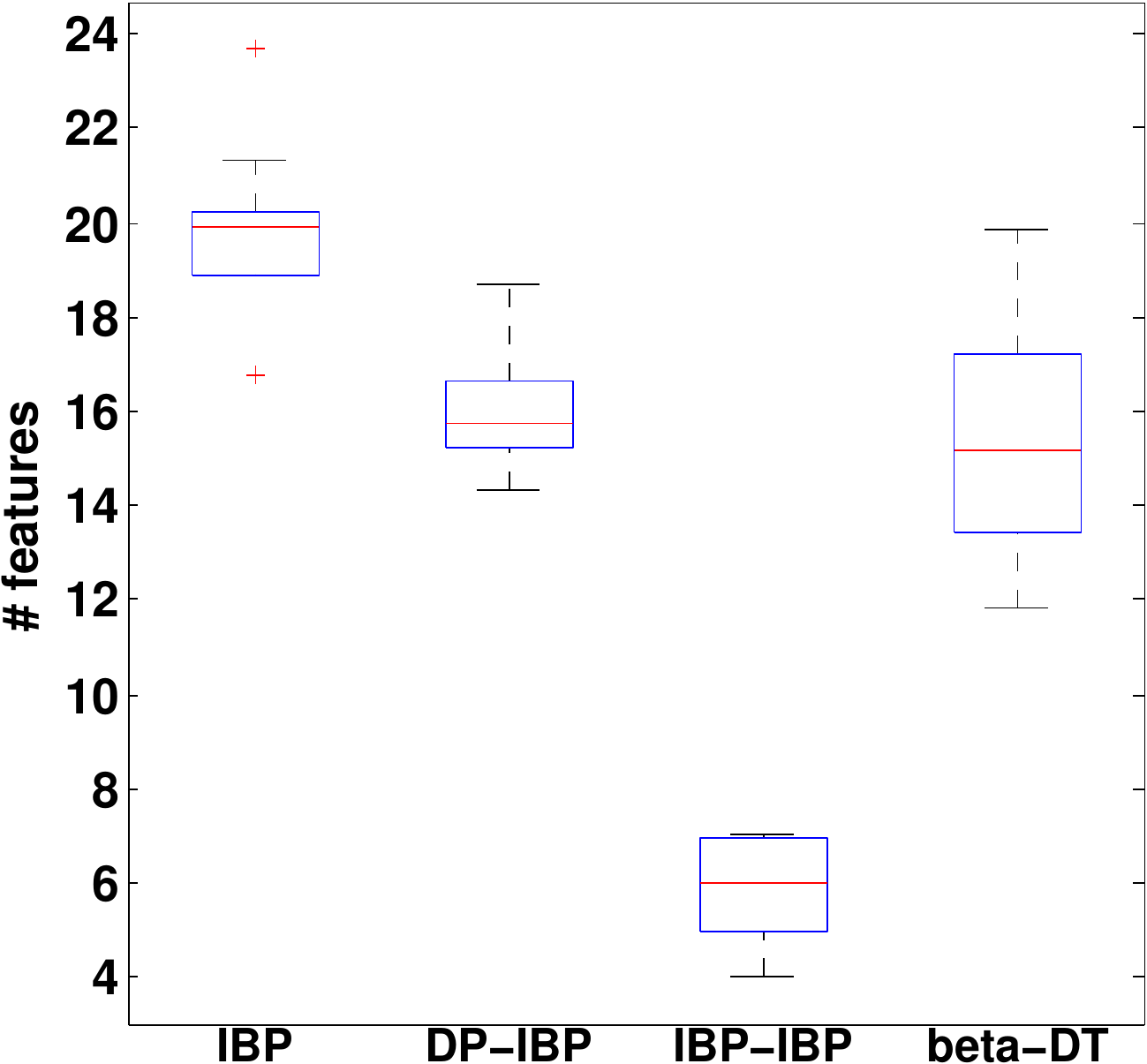}
                \label{fig:ecoli_K}
                }
\subfigure[UN dataset]{
                \includegraphics[scale=.35]{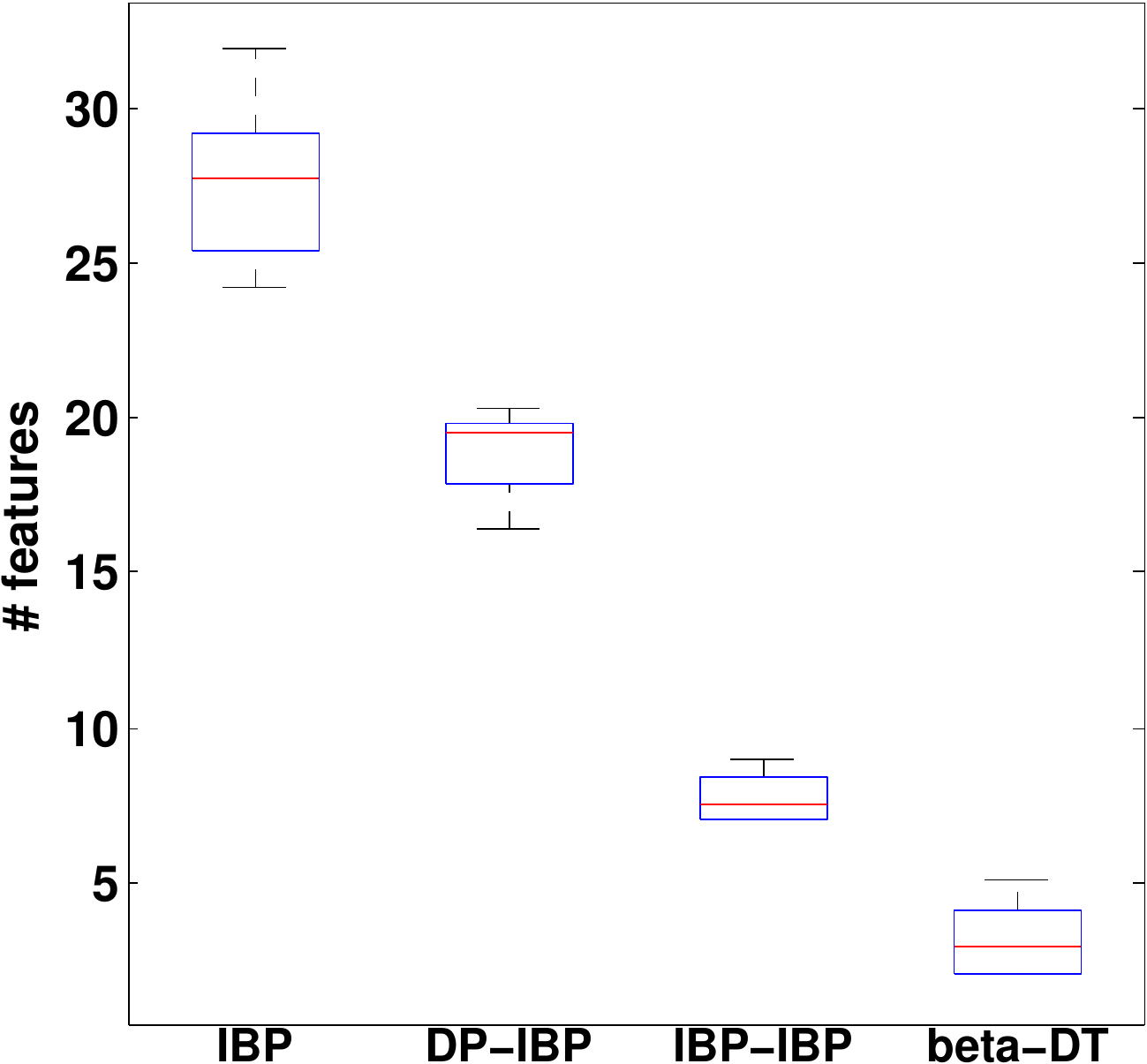}
                \label{fig:UN_K}
        }
\subfigure[India dataset]{
                \includegraphics[scale=.35]{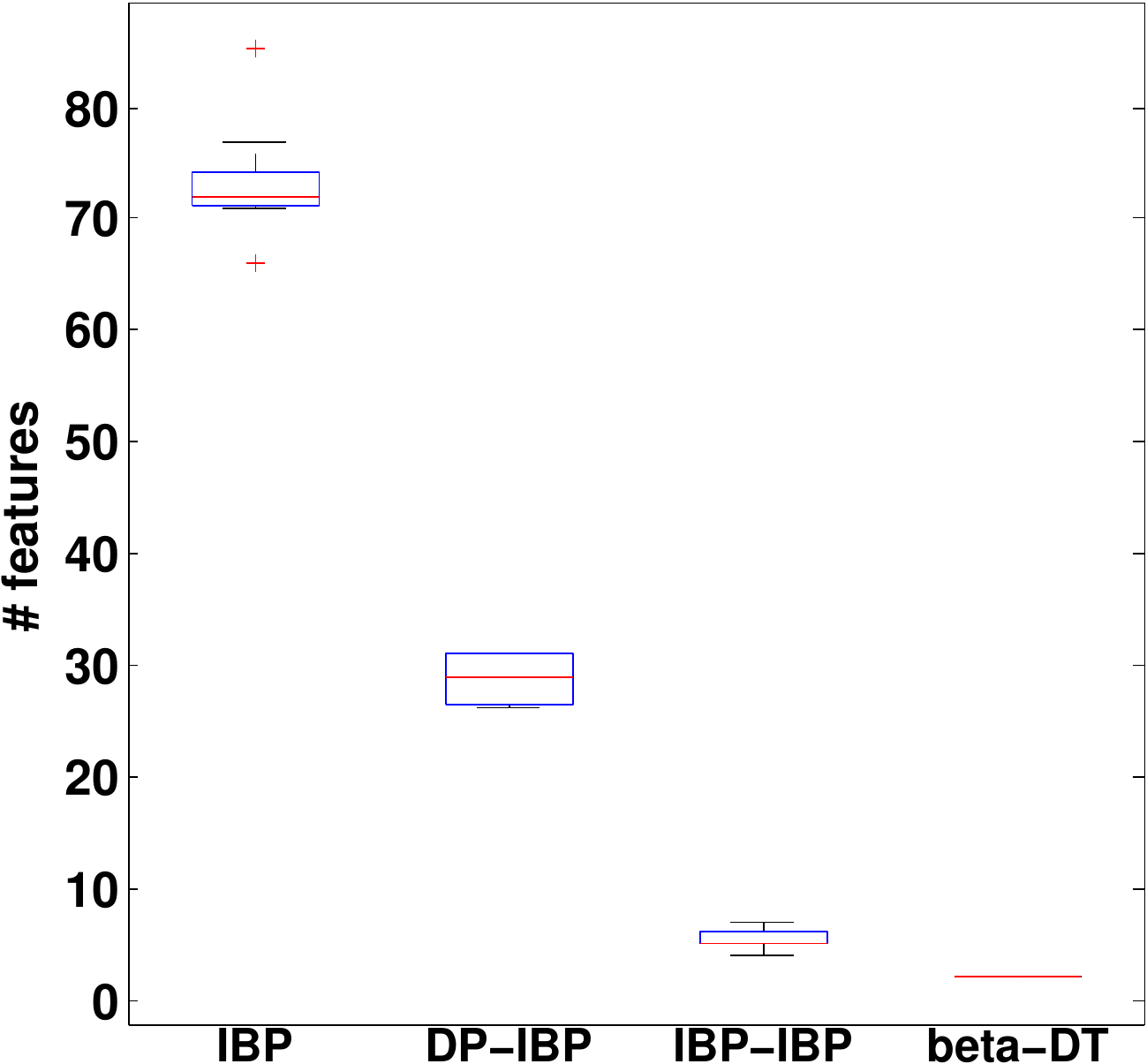}
                \label{fig:india_K}
        }
       	\caption{Box plots of the average number of inferred features in the numerical experiments.}
	\label{fig:exp_K}
\end{figure}

We studied three different data sets.
For the sake of comparison, we created two data sets, labeled ``UN'' and ``India'' below, to be comparable to the identically named data sets studied by \citet{DG2009}.  These were selected to be of approximately the same sizes as each of their counterparts and from similar databases.

\vspace{.5em}
\noindent
{\bf \emph{E.~Coli}:} Expression levels of $N=100$ genes measured at $D=24$ time points in an \emph{E.~Coli} culture obtained from the study in \citet{kao2004transcriptome}.

\vspace{.5em}
\noindent
{\bf UN:} Statistics from the United Nations for $N=161$ countries on $D=15$ human development measurements \citep{UNdevel2013}.
The variables correspond to measurements such as GDP, education levels, and life expectancies.
The 161 countries selected correspond to countries with no missing data among the 15 collected variables.
Because of this sampling bias, the selected countries are largely those with (relatively) high \emph{human development indices}, as measured by the UN.

\vspace{.5em}
\noindent
{\bf India:} Statistics for $N=400$ Indian households on $D=15$ socioeconomic measurements \citep{indiaICPSR}.
The data were collected across all Indian states and both urban and rural areas.
The variables correspond to measurements such as value of household assets, number of household members, consumption on food, etc.
There are no missing data entries among the selected households.

\vspace{.5em}
\noindent
For each data set, we created 10 different test sets, each one holding out a different 10\% of the data.
In \cref{fig:exp_LL}, we display the box-plots of the test log-likelihood scores over the 10 test sets, where the score for a single set is averaged over 3,000 samples (of the latent variables and parameters of the model) collected following the burn-in period of each method.
The beta diffusion tree achieved the highest median score in every experiment, with the IBP-IBP achieving the second best performance in each instance.
The difference between these two sets of scores is statistically significant in each case, based on a t-test at a 0.05 significance level.  The p-values for the null hypothesis that the means of the two sets are the same were $5.30\times 10^{-03}$, $1.50\times 10^{-04}$, and $7.85\times 10^{-03}$ for the {\it E. Coli}, UN, and India data sets, respectively.
In \cref{fig:exp_K}, we display box plots of the number of features inferred for each test set (averaged over the 3,000 samples following the burn-in).
On the UN and India data sets, the number of features decreases as the chosen model becomes more expressive.
Most notably, the beta diffusion tree model robustly infers only two features on every test set of the India data set.
This behavior is expected if hierarchical structure is truly present in the data sets, because the baseline methods, without the ability to model such hierarchies, will need to instead suggest additional features in order to account for the complexity of the data.
The superior performance on the test log-likelihood metric therefore suggests that such hierarchical latent structure is an appropriate model for these data sets.
On the {\it E.~Coli} data set, the IBP-IBP method learns a much lower number of features than the beta diffusion tree, however, the higher test log-likelihood scores for the beta diffusion tree suggest that a hierarchical feature allocation still provides a better model for the data.

It is also interesting to study the inferred latent variables directly.
For example, we can extend the analysis by \citet{DG2009} on the UN development statistics.  As opposed to the methods that model a flat clustering of the features, such as the DP-IBP and IBP-IBP, the beta diffusion tree clearly captures a hierarchical structure underlying the data, as visible in \cref{fig:UN_demo}.
Here we display the maximum \emph{a posteriori} probability sample (among 2,000 samples collected after a burn-in period on the data set with no missing entries) of the feature matrix.
The inferred tree structure for this sample is also displayed over the features.
For visualization, the rows (corresponding to different countries) are sorted from highest human development index (HDI -- a score computed by the UN based on many human development statistics) to lowest.
For reference, we also display the HDI scores for five ranges of equal sizes, along with the names of the top and bottom 10 countries in each range.  
We can see that a hierarchical structure is present; on the one hand, many highly developed countries are assigned to the third feature, with a more refined set belonging to the fourth feature.
An even finer subset belongs to the fifth.
\clearpage
\begin{sidewaysfigure}
\centering
      \includegraphics[scale=.85]{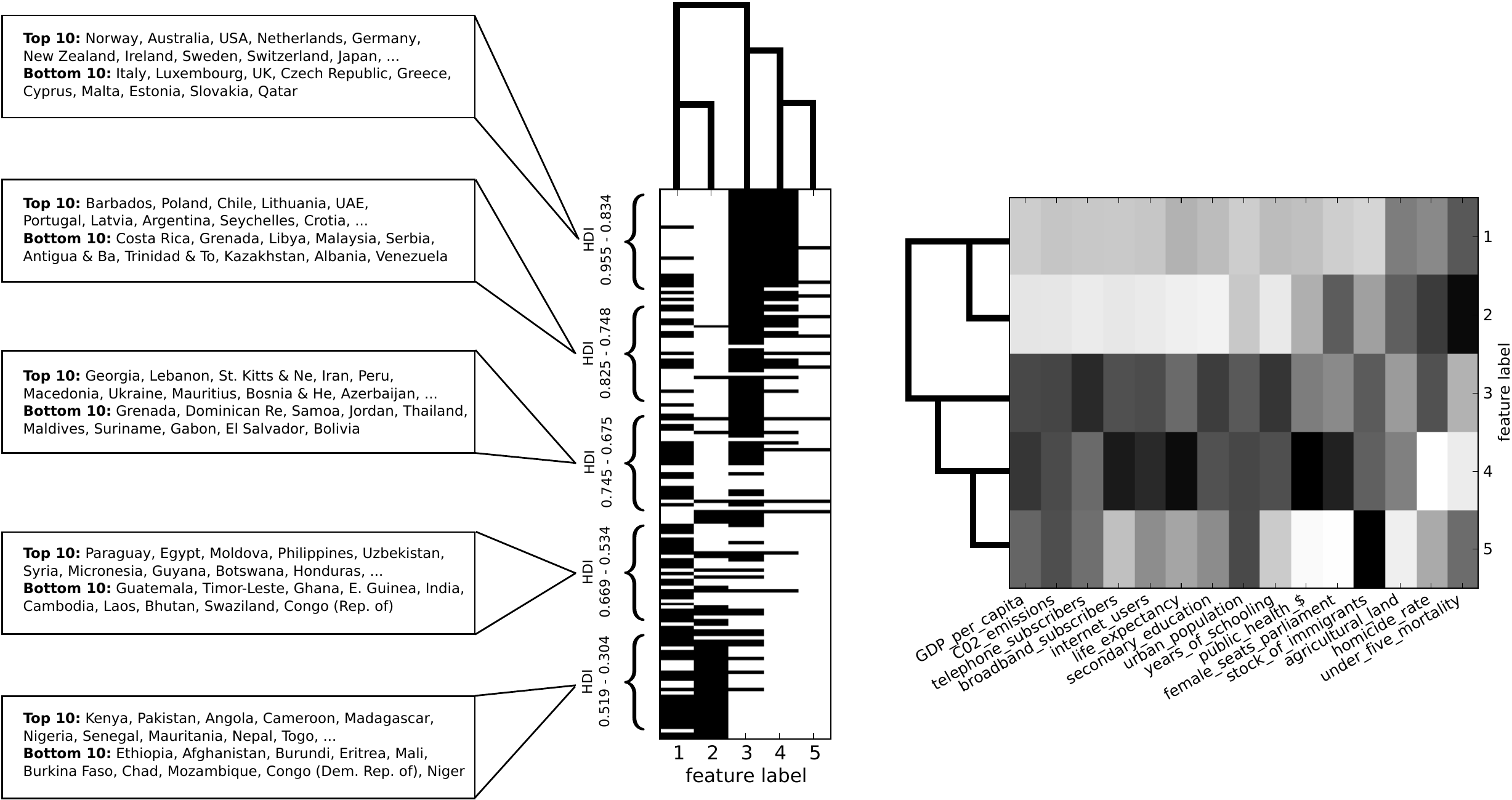}
	\caption{Left: Inferred factor matrix and corresponding hierarchy over the features for the UN data set.  Black entries correspond to a one, and white entries correspond to a zero.  The rows correspond to countries, which are ranked by their Human Development Indices (HDI).  The names of the top and bottom 10 countries in five different ranges of the HDIs are displayed.  Right: The posterior mean of the factor loading matrix, along with the hierarchy over the features.  The rows correspond to the latent features and the columns correspond to the different observed $D=15$ variables.  Darker colors correspond to larger values.}
	\label{fig:UN_demo}
\end{sidewaysfigure}
\clearpage
\noindent
On the other hand, the less developed countries have high prevalence in the second feature, with a broader set belonging to the first.
This subset is not strict; interestingly, many countries belonging to the second feature do not belong to the first.
We have also displayed the posterior mean of the factor loading matrix (the expression for which can be obtained from \cref{eq:posteriorloadings}).
The third feature places higher weight on the variables we expect to be positively correlated with the highly developed countries, for example, GDP per capita, the number of broadband subscribers, and life expectancy.
These features place lower weight on the variables we expect to be negatively correlated with the highly developed countries, notably, the rates for homicide and infant mortality.
The first and second features are the reverse: they place higher weight on variables such as homicide and infant mortality rates, and lower weight on the variables such as GDP and life expectancy.
Each of these pairs of factor loadings reflect the same hierarchical relationship seen in their corresponding factors, and this interpretation of both the factors and factor loadings are consistent with the inferred tree structure (displayed) over the feature labels.
Similarly, in \cref{fig:ecoli_demo} we display the maximum {\it a posteriori} probability feature matrix and corresponding hierarchy over the features for the {\it E.~Coli} data set when no data is held out.  
We note that, in this figure, the features are not necessarily ordered with the divergent branches to the right like in the previous figures in the document.
In this case, the individual genes are not as interpretable as the countries in the UN data set, however, the hierarchical structure is reflected in the feature allocation matrix.
\begin{figure}[!]
	\centering
      \includegraphics[scale=.4]{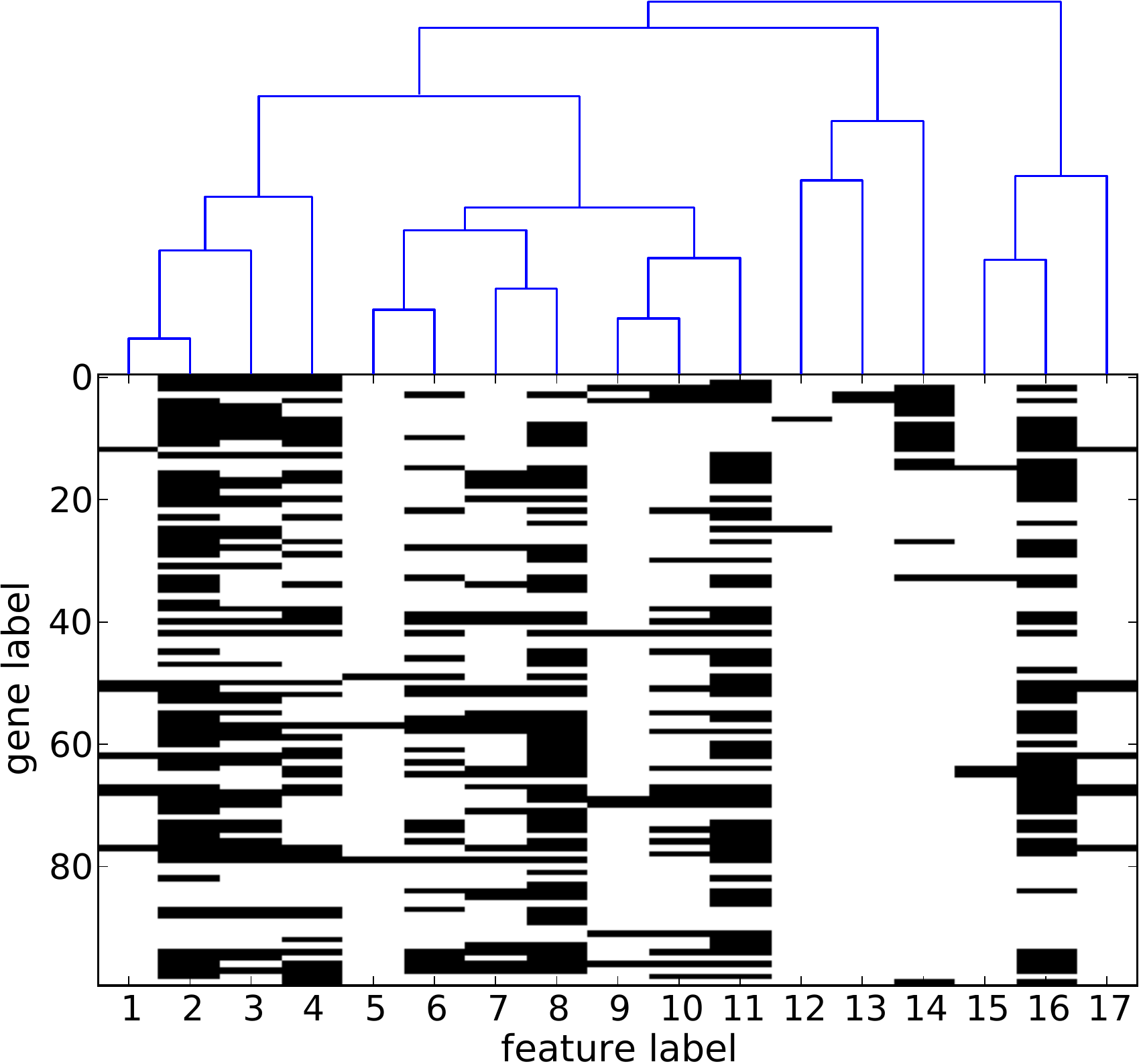}
	\caption{The inferred factor matrix and hierarchy over the features for the \emph{E.~Coli} data set.  The features are not necessarily ordered with the divergent branches to the right, as in previous figures in the document.}
	\label{fig:ecoli_demo}
\end{figure}
%
%

\section{Conclusion and future work}
\label{sec:conclusion}

The beta diffusion tree is an expressive new model class of tree-structured feature allocations of $\Nats$, where the number of features are unbounded.
We showed that the beta diffusion tree is exchangeable with respect to the ordering of $\Nats$, and by characterizing it as the infinite limit of a nested feature allocation scheme, we revealed useful insights into the properties of the model.
The superior performance of this model class in our numerical experiments, compared to independent or flatly-clustered features, provides evidence that hierarchically-structured feature allocations are appropriate for a wide range of statistical applications and that the beta diffusion tree can successfully capture this latent structure.

There are many future directions to be explored, a few of which we now briefly discuss.
As noted in \cref{sec:related_work}, the Pitman--Yor diffusion tree \citep{KnowlesGhahramani2011,knowlesTPAMIPYDT,KnowlesThesis2012} extended the Dirichlet diffusion tree beyond binary branching to an arbitrary branching degree.
In particular, when a particle in the diffusion process reaches a previous divergence point (where at least one previous particle diverged from the current path), the particle may decide to go down an existing branch with probability depending on the previous number of particles down the branch, or it may choose to diverge to form a new path.
Connections with this process were made to the \emph{Pitman--Yor} (aka \emph{two-parameter Poisson--Dirichlet}) process \citep{teh2006hierarchical,perman1992size}.
Similarly, the \emph{$\Lambda$-coalescent} \citep{pitman1999coalescents,hu2013binary} generalizes the tree structure of Kingman's coalescent \citep{kingman1982coalescent} to arbitrary branching degrees (as discussed in \cref{sec:related_work}).
Trees with flexible branching degrees may provide more parsimonious explanations for data, and such a modeling assumption is appropriate in many applications.
It would therefore be interesting to see an extension of the beta diffusion tree to arbitrary branching degrees.

As seen in \cref{sec:continuum_limit}, the features in the beta diffusion tree are not exchangeable (at a replicate point, particles are guaranteed to follow an original branch but not necessarily a divergent branch).
The non-exchangeability of the branches is reflected by the fact that the allocation probabilities $p_0^{(1)}$ and $p_0^{(2)}$ (given by \cref{eq:level_probs}) in the nested feature allocation scheme are not exchangeable.
In contrast, the exchangeability of the countable collection of feature allocation probabilities obtained from the beta process \citep{hjort1990nonparametric} implies the exchangeability of the features in the allocations obtained with the Indian buffet process \citep{ThibauxJordan2007}.
One could investigate if there is a variant or generalization of the beta diffusion tree in which the features are exchangeable.
This could be a desirable modeling assumption in some applications and may enable the development of new inference procedures.

In some applications, it may be appropriate to assume that if an object belongs to one feature (or some subset of features), then it \emph{must} belong to another ``super feature''.
For example, consider a ``market basket'' application where objects correspond to sets of purchases by customers, and we want to model a set of features corresponding to unobserved classifications of the purchased goods, such as ``household goods'', ``toiletries'', and ``cleaning supplies''.
If a basket contains items belonging to the category ``toiletries'', then it contains the category ``household goods'', and both of these latent categories could have different effects on the observed data.  A different basket could contain items belonging to the category ``cleaning supplies'' and so contains the category ``household goods'', sharing the effects of this latter category.
This ``strict subsets'' structure can be enforced with the beta diffusion tree by setting the stop rate parameter to be zero $\sfn=0$, and it would be interesting to see the beta diffusion tree applied to such problems.

As discussed in \cref{sec:related_work}, the beta diffusion tree (as well as the Dirichlet and Pitman--Yor diffusion trees) may be viewed as fragmentation processes, which are closely related to a class of stochastic processes called coagulation processes \citep{bertoin2006random}, in the sense that some fragmentation processes are equivalent to a coagulation process upon time reversal, and vice versa.
This so called \emph{duality} characterizes the field of research on fragmentation-coagulation processes in the physics, probability, and statistics communities, and their interpretation as random tree structures are well-studied \citep{pitman2006combinatorial}.
One such process with this property is the Dirichlet diffusion tree, which upon time reversal obtains Kingman's coalescent \citep{teh2011modelling,bertoin2006random}.
The stochastic process introduced by \citet{teh2011modelling} can be viewed as combining these two dual processes in order to model an evolving partition of objects, in which each block of the partition may fragment into two blocks or coagulate with another block over time.
A discrete-time analogue of the process was studied by \citet{elliott2012scalable}.
It is not yet clear that any such coagulation process would result from a time reversal of the beta diffusion tree.
However, such a dual process may exist for some variant or generalization of the beta diffusion tree, and if so, one could obtain a model for time evolving feature allocations from a combination of the two processes.

Finally, new inference procedures should be explored.  Integrating over both the tree structure and the paths of particles down the tree (resulting in the feature allocations) is a challenging inference problem, as both objects live in large, combinatorial spaces.
Sampling moves that avoid a Metropolis--Hastings step and therefore change the states of the latent variables in the model with each proposal would be of immediate benefit to the mixing times of the inference procedure.
For example, a variant of the slice sampling techniques employed by \citet{NealDDT2003} for the Dirichlet diffusion tree and by \citet{knowlesTPAMIPYDT} for the Pitman--Yor diffusion trees may be developed.
These procedures could be combined with belief propagation and sequential Monte Carlo techniques to further improve inference \citep{knowlesTPAMIPYDT, hu2013binary, knowles2011message, teh2007treecoalescent}.

%
%
\acks{We thank anonymous reviewers for feedback.
We also thank Hong Ge, Daniel M. Roy, Christian Steinrueken, and anonymous reviewers for feedback on drafts of a shorter conference version of this paper.
This work was carried out while C. Heaukulani was funded by the Stephen Thomas studentship at Queens' College, Cambridge, with co-funding from the Cambridge Trusts.  Z. Ghahramani is funded by EPSRC grant EP/I036575/1.}
%
%

\appendix

\section{Simulating a beta diffusion tree in practice}
\label{sec:prior_sampler}

Consider simulating a beta diffusion tree with $N$ objects according to the generative process in \cref{sec:BDTmain}.
We first simulate the tree structure $\Tcal_N$, followed by the locations of the nodes in $\mathcal X$, given $\Tcal_N$.
The first object starts at time $t=0$ with one particle at the origin.  We simulate a replicate and a stop time, $t_r$ and $t_s$, respectively, from exponential distributions with rates $\bfn$ and $\sfn$, respectively.
If both times occur after $t=1$, then we create a leaf node with the particle at $t=1$.  If at least one time occurs before $t=1$, then create an internal node of the tree as follows:
\begin{itemize}
\item[(a)] If $t_s$ occurs before $t_r$, then ignore $t_r$ and the particle stops at $t=t_s$ and we create a stop node at $t=t_s$.

\item[(b)] If $t_r$ occurs before $t_s$, then ignore $t_s$ and the particle replicates at $t=t_r$ and we create a replicate node at $t=t_r$.
\end{itemize}
We repeat this procedure for each replicate that is made.  By the memoryless property of the exponential distribution, the waiting time for a particle starting at time $t' > 0$ until either replicating or stopping is still exponentially distributed with rates $\bfn$ and $\sfn$, respectively.

For each subsequent particle $n \ge 2$, we again start with one particle at the origin, which follows the path/branch of the previous particles.
Simulate a replicate and stop time, $t_r$ and $t_s$, respectively, from exponential distributions with rates $\boffset/(\boffset + n -1)$ and $\soffset/(\soffset + n - 1)$, respectively.
If either time occurs before the next node on this branch, then create an internal node on the branch according to the rules (a) and (b) above.
If both occur after the next node on the branch, then the particle has reached that node.
If the node is a replicate node, then it decides to replicate and send a copy of itself down the divergent path with probability given by \cref{eq:branchprob}.  If the node is a stop node, then it stops at the node with probability given by \cref{eq:stopprob}.
Finally, we repeat this procedure for every replicate travelling along a branch, where its replicate and stop times are exponentially distributed with rates $\boffset/(\boffset + m)$ and $\soffset/(\soffset+m)$, respectively, and $m$ is the number of particles that have previously traversed the branch.

The tree structure $\Tcal_N$ has now been simulated.
The root node is deterministically set at the origin, and we can simulate the locations in $\mathcal X$ conditioned on the tree structure.
For example, if we let $\mathcal X = \Reals^D$ and let the diffusion paths be Brownian motions with variance $\sigma_X^2$, then we make a sweep down the tree structure simulating
\[
\bm x_f \given \bm x_e 
	\dist \mathcal{N} ( \bm x_e, \sigma_X^2 (t_f - t_e) \bm I_D )
	,
\]
for each branch $[ef]$ in the set of branches $\mathcal B(\Tcal_N)$.

This procedure can be generalized to the case when the replicate and stop rates vary with time by following the methods described by \citet{KnowlesThesis2012} and \citet{knowlesTPAMIPYDT}.
We note that in this case the times until replicating and stopping on a branch are the waiting times until the first jumps in two (dependent) inhomogeneous Poisson processes with rate functions
\[
\frac{\boffset}{\boffset + m} \bfn(t)
	\quad \text{ and } \quad
	\frac{\soffset}{\soffset+m} \sfn(t)
	,
\]
respectively, where $\bfn(t)$ and $\sfn(t)$ are positive-valued functions, and $m$ is the number of particles that have previously traversed the branch.

\section{Multitype continuous-time Markov branching processes}
\label{sec:galtonwatson}

We show that the beta diffusion tree is characterized by a well-studied type of stochastic process called a \emph{branching process} in order to derive the results in \cref{sec:properties}.
The theory of branching processes has a long history in probability and statistics, and the reader may refer to \citet{harris2002theory} and \citet{athreya1972branching} for background.
For a thorough treatment of the multitype case, the reader may refer to \citet{mode1971}.

We return to the nested feature allocation scheme with $N$ objects and $L$ levels from \cref{sec:continuum_limit}.
An example of the tree structure corresponding to a nested scheme is depicted in \cref{fig:finite_tree_nodes}, where each vertex represents a non-empty node.
Nodes with two children are enlarged, which we call replicate nodes, and the nodes shaded in grey are those that do not assign at least one of their elements to the ``first node'' at the subsequent level, which we call stop nodes.
\begin{figure}
\centering
\includegraphics[scale=.5]{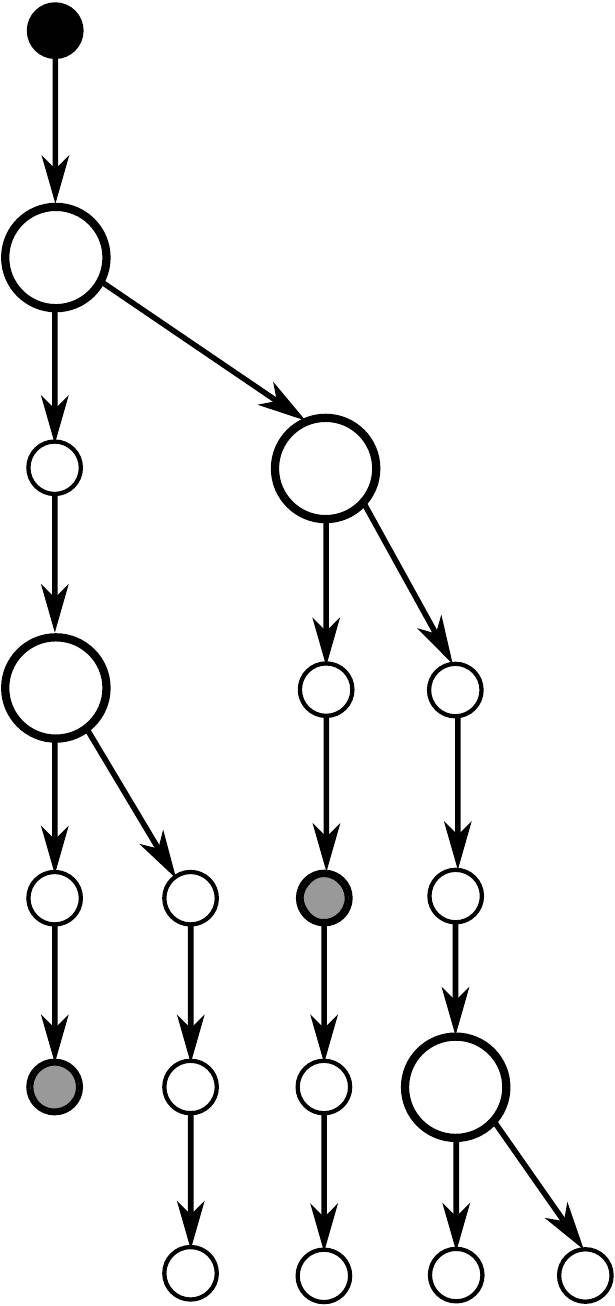}
\caption{Tree structure corresponding to a nested feature allocation scheme, where each vertex corresponds to a non-empty node.  Nodes with two children are enlarged, and nodes that do not allocate all of their elements to the ``first node'' at the subsequent level are shaded grey.}
\label{fig:finite_tree_nodes}
\end{figure}
As we have seen in \cref{sec:continuum_limit}, the probabilities of an element being assigned to the first and second nodes approach one and zero, respectively, as $L\rightarrow \infty$, and producing either a stop or replicate node becomes a rare event.
By \cref{thm:continuum_limit}, this continuous-time process is equivalent to the tree structure of a beta diffusion tree, where the branches correspond to chains of identical (non-stop, non-replicate) nodes, the replicate nodes create new chains, and the stop nodes change the number of particles on a branch, in some cases terminating the branch.  
It is therefore intuitive to think of the chains of identical nodes as individuals in a population.  Each individual is classified as one of $N$ \emph{types} determined by the number of elements allocated to the node represented by that vertex.  The individual gives birth to new individuals at replicate nodes and either changes type or dies at stop nodes.

\subsection{A multitype birth-death process}

We derive the waiting times until a replicate or stop node is produced on a chain of identical nodes (i.e., until the individual gives birth, changes type or dies) in the limit $L\rightarrow\infty$.
Let $(a)_n \defas a (a+1) \cdots (a+n-1) = \Gamma(a+n)/\Gamma(a)$ denote the Pochhammer symbol.
The probability that a node $f_{\bullet}$ with $n$ elements does not allocate every element to the first node (i.e., produces a stop node) at the subsequent level is given by
\[
&1 - \int (p_{f_\bullet}^{(1)})^n \, \betadist \Bigl ( p_{f_\bullet}^{(1)} ;
				\soffset \Bigl ( 1 - \frac{\sfn}{L} \Bigr ) , \soffset \frac{\sfn}{L}
				\Bigr )
			\dee p_{f_\bullet}^{(1)}
			\\
		&\qquad \qquad =
		1 - \frac{ (\soffset (1-\sfn/L))_n }{ (\soffset)_n }
			\\
		&\qquad \qquad =
		\sfn \soffset H_n^{\soffset} / L + O(L^{-2})
 		,
\]
where the final equation is obtained by noting that, for any $a, b>0$,
\[
(b-a^{-1})_n
	= (b)_n ( 1 - a^{-1} H_n^b ) + O(a^{-2})
	,
\]
and we recall from the derivations in \cref{sec:exchangeability} that $H_n^\alpha \defas \sum_{i=0}^{n-1} (\alpha+i)^{-1} = \psi(\alpha+n)-\psi(\alpha)$.
Starting at any level on a chain of identical (non-stop, non-replicate) nodes, it follows that the number of levels $m$ until producing a stop node is distributed as
\[
\frac{\sfn \soffset H_n^{\soffset}}{L}
	 \Bigl (
			1 - \frac{\sfn \soffset H_n^{\soffset}}{L}
		\Bigr )^{m-1}
		,
\]
where we have ignored the terms in $O(L^{-2})$.
This is a geometric distribution in $m$.
Following the proof of \cref{thm:continuum_limit}, the waiting time until producing a stop node is $m/L$, the distribution of which converges in the limit $L\rightarrow \infty$ to an exponential distribution with rate
\[
\mu_n \defas \sfn \soffset H_n^{\soffset}
	= \sfn \soffset [ \psi(\soffset + n) - \psi(\soffset) ]
	.
	\label{eq:stoprate}
\]
This is the waiting time until a type $n$ individual has a child of a different type (or in some cases dies).
Similarly, the probability of \emph{at least one} element being assigned to the second node at the subsequent level is
\[
&1 - \int (1-p_{f_\bullet}^{(2)})^n
	\betadist \Bigl (\, \boffset \frac{\bfn}{L} ,\, \boffset \Bigl ( 1 - \frac{\bfn}{L} \Bigr )\, \Big ) \dee p_{f_\bullet}^{(2)}
	\\
	&\qquad \qquad =
	1 - \frac{ ( \boffset ( 1 - \bfn/L) )_n }{ (\boffset)_n }
	\\
	&\qquad \qquad =
		\bfn \boffset H_n^{\boffset} / L
		+ O( L^{-2} )
	,
	\label{eq:P2_first}
\]
so that in the limit $L\rightarrow\infty$, the waiting time until producing a replicate node (i.e., until a type $n$ individual has two children) is exponentially distributed with rate
\[
\beta_n \defas \bfn \boffset H_n^{\boffset}
	= \bfn \boffset [ \psi(\boffset + n) - \psi(\boffset) ]
	.
	\label{eq:replicaterate}
\]

Note that in the limit $L\rightarrow\infty$, reaching a stop node and a replicate node will never co-occur with probability one.
In particular, the probability that $f_\bullet$ both (1) does not assign all of its elements to $f_{\bullet 1}$ and (2) assigns at least one element to $f_{\bullet 2}$ is given by
\[
&
\Bigl [
\sfn \soffset H_n^{\soffset} / L
		+ O( L^{-2} )
		\Bigr ]
		\Bigl [
\bfn \boffset H_n^{\boffset} / L
		+ O( L^{-2} )
	\Bigr ]
	\\
	&\hspace{1.25in}
	= 
	( \sfn \soffset H_n^{\soffset} )( \bfn \boffset H_n^{\boffset} )
		L^{-2} + O(L^{-3})
		,
\]
which approaches zero as $L\rightarrow \infty$.

\subsection{Multitype continuous-time branching processes}
\label{sec:branchingprocesses}

As in \cref{sec:continuum_limit}, associate the time $t_\ell = (\ell-1)/L \in [0,1]$ with level $\ell$ in the nested scheme.
For every $n\le N$, let $\xi_n(t)$ denote the number of type $n$ individuals in the population at time $t$.
In the limit, we study the continuous-time stochastic process $\bm \xi \defas (\bm \xi (t))_{t\ge 0}$ on $\NNInts^N$, where $\bm \xi(t) \defas (\xi_1(t), \dotsc, \xi_N(t))$, and $\NNInts \defas \{0, 1, 2, \dotsc \}$ denotes the set of non-negative integers.
The interpretation here is that the vector $\bm \xi(t)$ represents the total population of individuals at time $t$. 
From \cref{eq:stoprate,eq:replicaterate}, we see that each type $n$ individual exists for an exponentially distributed length of time with rate $\mu_n + \beta_n$, after which it is replaced by a collection of individuals denoted by $\bm j \in \NNInts^N$ (i.e., by $j_k$ individuals of type $k$, for $k\le N$) with probability $p_n(\bm j)$.
We will soon study the functions $p_1, \dotsc, p_N$ in depth.
Individuals produced from a parent engender independent lines of descent, and by the memoryless property of the exponential distribution, their lifetime and child distributions depend only on their type.
It follows that $\bm \xi$ is a \emph{continuous-time multitype Markov branching process}, a special type of branching process studied thoroughly by \citet{mode1971}.

The functions $p_n$ are called the \emph{offspring probability functions}, and clearly they must satisfy $\sum_{\bm j \in \NNInts^N} p_n(\bm j) = 1$, for every $n\le N$.
In order to characterize these functions, there are three cases to consider: At the end of the individual's lifetime, it is replaced by either (i) one identical individual and one additional individual of a potentially different type (an event we can interpret as the continuation of the original individual and the birth of a new individual), (ii) one individual of a different type (an event we call a type change), or (iii) zero individuals (an event we can interpret as the death of the individual).
Case (i) occurs in the event a replicate node is created, and cases (ii) and (iii) occur in the event that a stop node is created.
Moreover, note that each feature in the nested feature allocation scheme can only have as many elements as its parent node in the tree structure.
It follows that, in case (i), a type $n$ individual can only give birth to an individual of type $k\le n$, and in case~(ii), the individual can only change to an individual of type $k < n$.

\vspace{.5em}
\noindent {\bf Case (i)}:
Let $\phi_{n,k}$ be the conditional probability that, given a replicate node is created, the child node produced is of type $k$.
If $n=1$, then clearly $\phi_{1,1} = 1$ and $\phi_{1,k} = 0$, for every $2 \le k \le N$.
We therefore focus on the case when $2 \le n \le N$, for which we return briefly to the discrete-time setup.
Consider assigning $n$ elements from a node to the ``second node'' at the subsequent level in the nested feature allocation scheme.
Conditioned on $p_{f_\bullet}^{(2)}$, each element is assigned to the second node independently with probability $p_{f_\bullet}^{(2)}$.
Unconditionally, the number of elements assigned to the second node is beta-binomially distributed.
Finally, conditioning on the event a replicate node is created (i.e, at least one element is assigned to the second node) is obtained with a zero-truncated beta binomial distribution.
In particular, for every $1\le k\le n$ we have
\[
\phi_{n,k} &=
	\lim_{L\rightarrow \infty}
		\frac{ \betabinom( k ; n, \boffset \bfn/L, \boffset (1 - \bfn/L) ) }
		{ 1 - \betabinom( 0 ; n, \boffset \bfn/L, \boffset (1 - \bfn/L) ) }
	\\
	&=
	\lim_{L\rightarrow \infty}
	\binom{n}{k} \frac{ ( \boffset \bfn/L)_k ( \boffset (1-\bfn/L) )_{n-k} }{ (\boffset)_n - (\boffset(1-\bfn/L))_n }
	\\
	&=
	\binom{n}{k}  
	\frac{ B(\boffset+n-k, k) }{H_n^{\boffset}}
	,
\]
and $\phi_{n,k}=0$ if $n<k\le N$.

\vspace{.5em}
\noindent {\bf Cases (ii) and (iii)}:
Similarly, let $\eta_{n,k}$ be the conditional probability that, given a stop node is created, a type $n$ node changes to type $k$, where we define $\eta_{n,0}$ as the (conditional) probability that the stop node terminates the chain (i.e., the individual dies).
Clearly if $n=1$, then $\eta_{1,k} = 0$ for $1\le k\le N$ and $\eta_{n,0} = 1$.
We focus on the case $2 \le n \le N$ and return to the nested scheme.
Given that a node with $n$ elements does not assign at least one element to the ``first node'' at the subsequent level (i.e., a stop node is created), the probability that exactly $k$ objects are assigned to the first node is again given by the truncated beta-binomial distribution, and for every $0\le k \le n-1$ we have
\[
\eta_{n,k}
	&=
	\lim_{L\rightarrow\infty}
	\binom{n}{k}
	\frac{ (\soffset \sfn/L)_{n-k} (\soffset(1-\sfn/L))_k }{ (\soffset)_n - (\soffset(1-\sfn/L))_n }
	\\
	&=
	\binom{n}{k}
	\frac{ B(\soffset+k, n-k) }{ H_n^{\soffset} }
	,
\]
and $\eta_{n,k}=0$ for $n\le k\le N$.

An important tool in the study of branching processes are the \emph{offspring generating functions}, defined for each type $n\le N$ by
\[
P_n (\bm s) \defas \sum_{\bm j \in \NNInts^N} p_n (\bm j) s_1^{j_1} \dots s_N^{j_N}
	,
	\qquad
	\bm s \in \Reals^N
	.
	\label{eq:generatingfns}
\]
From our derivations above, it is easy to see that these generating functions may be written, for every type $n \le N$, as
\[
P_n (\bm s) 
	&= 
	\frac{\mu_n}{ \mu_n + \beta_n }
	\eta_{n,0}
	+
	\frac{\mu_n}{ \mu_n + \beta_n }
		\sum_{k=1}^{n-1} \eta_{n,k} s_k
		+ \frac{\beta_n}{\mu_n + \beta_n}
			\sum_{k=1}^n \phi_{n,k} s_n s_k
		,
		\quad
		\bm s \in \Reals^N
		.
\]
The first term on the right hand side corresponds to the event the individual dies, the second term to the event of a type change, and the third term to the event of a birth.
From the offspring generating functions, define the \emph{offspring mean functions} for every $n\le N$ by
\[
f_{n,k}
	&\defas 
	\frac{\partial}{\partial s_k} P_n (\bm s) \Big \vert_{\bm s = \bm 1} 
	\\
	&\: =
	\begin{cases}
	(\mu_n \eta_{n,k} + \beta_n \phi_{n,k})/(\mu_n + \beta_n)
	,
	&\qquad 1 \le k \le n-1
	,
	\\
	\beta_n ( \phi_{n,n} + 1) / (\mu_n + \beta_n)
	,
	&\qquad k = n
	,
	\\
	0
	,
	&\qquad n < k \le N
	,
	\end{cases}
	\label{eq:offspringmeans}
\]
which are interpreted as the mean number of offspring of type $k$ produced by an individual of type $n$.
An important basic assumption in the study of branching processes is that the population never \emph{explodes}, i.e., that there will never be an infinite number of individuals in the population (in finite time).
This condition can be guaranteed if all the offspring means are finite, i.e., if $f_{n,k} < \infty$, for all $n,\, k \le N$.
A quick inspection of \cref{eq:offspringmeans} and its components reveals that this condition is ensured so long as the parameters $\sfn$, $\bfn$, $\soffset$, $\boffset$, and the number of types $N$ are all finite.
This assumption is often called the \emph{non-explosion hypothesis} \citep[][Ch.~7]{athreya1972branching}, an immediate result of which ensures an almost surely finite number of leaves in the beta diffusion tree, stated formally by \cref{result:finiteleaves} in \cref{sec:properties}.
This is a reassuring property for any stochastic process employed as a non-parametric latent variable model, the translation in this case being that the number of latent features will be (almost surely) finite for any finite data set.

Carrying further with our analysis, we next define the $N\times N$ \emph{infinitesimal generating matrix} $\bm G \defas (g_{n,k})_{n,k\le N}$ with entries
\[
\label{eq:generatormatrix}
g_{n,k} \defas ( \mu_n + \beta_n ) ( f_{n,k} - \delta_{\{n=k\}} )
	,
	\qquad n,\, k \le N
	,
\]
where $\delta_{\{n=k\}}$ is equal to one when $n=k$ and zero otherwise.
One may verify that \cref{eq:generatormatrix} agrees with \cref{eq:generatormatrixbrief} in \cref{sec:properties}.
\cref{result:finiteleaves} ensures that this matrix exists,
and it is intimately linked with the population means.
Recall that $\xi_k(t)$ represents the number of type $k$ individuals in the population at time $t$.
Define an $N \times N$ matrix $\bm M (t)$, called the \emph{population mean matrix}, at time $t > 0$ with entries
\[
m_{n,k} (t) \defas \EE [ \xi_k ( t ) \given \bm \xi (0) = \bm e_n ]
	,
	\qquad
	k \le N
	,
	\label{eq:meandefns}
\]
where $\bm e_j$ is the vector with its $j$-th entry equal to one and all other entries equal to zero.
The conditional expectation in \cref{eq:meandefns} treats the initial configuration of the population as random; it may be interpreted as the expected number of type $k$ individuals in the population at time $t$, given that the population started at time $t=0$ with one individual of type $n$.
In our case, however, recall that we deterministically started the process with a single individual of type $N$.
We will therefore only be interested in studying the $N$-th row of the matrix $\bm M(t)$.
It is well known \citep[][Ch.~7]{athreya1972branching} that
\[
\bm M(t) = \exp ( \bm G t )
	,
	\qquad
	t > 0
	,
\]
where $\exp( \bm B )$ for a square matrix $\bm B$ denotes the matrix exponential function.
By recalling that the number of leaves in the beta diffusion tree corresponds to the size of the population $\bm \xi (t)$ at time $t=1$, we see that this result immediately characterizes the expected number of leaves of each cardinality as the entries in the $N$-th row of $\bm M(1)$, as stated by \cref{result:expectedleaves} in \cref{sec:properties}.
%

\subsection{Further considerations}
\label{sec:furtherbranching}

We have only taken advantage of a few elementary results from the literature on branching processes, however, there are a few obstacles that a further analysis in this direction must overcome.
For example, most analyses of the mean matrix $\bm M(t)$ typically involve the assumption that the branching process is \emph{irreducible}, a property that the beta diffusion tree does not satisfy.
In particular, in a multitype branching process, two types $i$ and $j$ are said to \emph{communicate} if it is possible for an individual of type $i$ to be an ancestor to another individual of type $j$ (at any later time), and vice versa.
In the case of an irreducible branching process, the mean matrix $\bm M(t)$ is positive and the \emph{Perron--Frobenius} theorem of positive matrices ensures that there exists a positive real eigenvalue $\lambda_0$ of $\bm M(t)$ that is strictly larger in magnitude than the real part of all other eigenvalues (in particular, the multiplicity of $\lambda_0$ is one). 
The value of the dominant eigenvalue $\lambda_0$ determines whether the process is \emph{sub-critical, critical, or super-critical}, an important classification of the branching process from which almost any analysis must follow.
In our case, it is clear that an individual of type $m$ can only be an ancestor to individuals of type $k$ if $k\le n$.
Our branching process is therefore said to be \emph{decomposable} or \emph{reducible} and there exists an ordering of the $N$ types: an individual of type $N$ can be an ancestor to individuals of all types; an individual of type $n$ for $1 < n < N$ can be an ancestor of individuals of type $k$, for $k\le n$; and an individual of type $1$ can only be an ancestor to individuals of its own type.
Each type only communicates with itself.
Clearly, the corresponding discrete-time process described in \cref{sec:galtonwatson} is also decomposable, a case that has been studied, for example, by \citet{KestenStigum1967} and \citet{foster1978}.  The continuous-time case has been studied by \citet{savin1962}.  A study of both cases was done by \citet{ogura1975}.
Research in this direction could uncover further properties of the beta diffusion tree.
For example, the second moments of $\bm \xi$ are typically analyzed by studying their Chapman--Kolmogorov equations, from which it may be possible to obtain a characterization of the variance on the number of leaves in the beta diffusion tree.

\section{Details on inference}
\label{sec:furtherinference}

In addition to resampling subtrees as described in \cref{sec:inference}, we also make several proposals that resample specific parts of the tree structure in order to improve mixing.
These moves are summarized in \cref{sec:inference} and we now describe them in detail.
Each move is a Metropolis--Hastings proposal, which we verify are targeting the correct posterior distributions with joint distribution tests, as detailed in \cref{sec:Geweke}.

\subsection{Adding and removing replicate and stop nodes}
\label{sec:addremovenodes}

Recall that $\Rcal(\Tcal_N)$ denotes the set of replicate nodes in $\Tcal_N$ and that $\Bcal(\Tcal_N)$ denotes the set of branches.
We propose adding and removing replicate nodes from the tree structure as follows:
\begin{enumerate}
\item {\bf Remove replicate node:} Select a replicate node $v^* \in \Rcal(\Tcal_N)$ with probability \emph{inversely} proportional to the number of particles that traversed the branch ending at $v^*$.
Propose a new tree structure $\Tcal_N^*$ by removing the subtree emanating from the divergent branch at $v^*$, which is accepted with probability
\[ 
A_{\mathcal T_N \rightarrow \mathcal T_N^*}
	= 
	\max \left \{
		1 , 
		\frac { p ( \bm Y \given \Tcal_N^* ) }
				{ p ( \bm Y \given \Tcal_N ) }
		\frac { m(v^*) / \sum_{u \in \Vcal(\Tcal_N^*)} m(u) }
				{ \sum_{u \in \Rcal(\Tcal_N) \setminus \{v^*\} } m(u) / \sum_{u \in \Rcal(\Tcal_N)} m(u) }
		\right \}
		.
\]

\item {\bf Add replicate node:} Select a branch $[ef] \in \Bcal(\Tcal_N)$ in the tree with probability proportional to the number of particles down the branch, and propose adding a replicate node on this branch to form $\Tcal_N^*$ according to the following procedure:

\begin{enumerate}
\item Sample a new replicate time $t^*$ on this branch according to the prior for the very first particle down this branch.  If the sampled replicate time occurs after the end of the branch, resample a new time until this is not the case.

\item Propose adding a replicate node $v^*$ on this branch at time $t^*$.  Note that $m(v^*) = m(f)$.  Uniformly at random, select a particle down the branch to be the one that created the replicate node.  Sample the selected particle's paths down a new subtree (diverging at the replicate node) according to the prior until $t=1$.

\item Sequentially for each remaining particle down the branch, which we index by $j = 1,\dotsc,m(v^*)-1$, decide whether or not the particle sends a replicate down this new subtree with probability $n_r(v^*) / (\boffset + j)$, where $n_r(v^*) \ge 1$ is the number of particles that previously chose to send a replicate down the new subtree.
If the particle replicates, then we resample its paths down the new subtree according to the prior until $t=1$, given the paths of all previous particles that previously travelled down this new subtree.
\end{enumerate}
The new tree structure $\Tcal_N^*$ is accepted with probability
\[ 
A_{\mathcal T_N \rightarrow \mathcal T_N^*}
	= 
	\max \left \{
		1 , 
		\frac { p ( \bm Y \given \Tcal_N^* ) }
				{ p ( \bm Y \given \Tcal_N ) }
		\frac { \sum_{u \in \Rcal(\Tcal_N^*) \setminus \{v^*\} } m(u) / \sum_{u \in \Rcal(\Tcal_N^*)} m(u) }
				{ m(f) / \sum_{u \in \Vcal(\Tcal_N)} m(u) }
		\right \}
		.
\]
\end{enumerate}

Recall that $\Scal(\Tcal_N)$ denotes the set of stop nodes in the tree.  We propose removing and adding stop nodes from the tree structure as follows: 
\begin{enumerate}
\item {\bf Remove stop node:} Select a stop node $v^*$ in the tree with probability inversely proportional to the number of particles down the branch ending at the stop node.  Propose $\Tcal_N^*$ by removing the stop node from the tree as follows:
Sequentially for each particle that stopped at $v^*$, sample the paths of the particle down the subtree rooted at $v^*$.
Then remove $v^*$ from the tree and accept the new tree $\Tcal_N^*$ with probability
\[ 
A_{\mathcal T_N \rightarrow \mathcal T_N^*}
	= 
	\max \left \{
		1 , 
		\frac { p ( \bm Y \given \Tcal_N^* ) }
				{ p ( \bm Y \given \Tcal_N ) }
		\frac { m(v^*) / \sum_{u \in \Vcal(\Tcal_N^*)} m(u) }
				{ \sum_{u \in \Scal(\Tcal_N) \setminus \{v^*\} } m(u) / \sum_{u \in \Scal(\Tcal_N)} m(u) }
		\right \}
		.
\]

\item {\bf Add stop node:} Select a branch $[ef] \in \Bcal(\Tcal_N)$ in the tree with probability proportional to the number of particles down the branch.  Propose adding a stop node on this branch to form $\Tcal_N^*$ as follows:
\begin{enumerate}
\item Sample a new stop time $t^*$ on this branch according to the prior, as if no particles had previously traversed the branch.
If the sampled stop time occurs after the end of the branch, resample a new time until this is not the case.

\item Propose adding a stop node $v^*$ on this branch at $t^*$.  Uniformly at random, select a particle down the branch to be the one that created the stop node.  Remove the particle from the subtree following $v^*$.

\item Sequentially for each remaining particle down this branch, which we index by $j=1,\dotsc,m(v^*)-1$, decide whether or not the particle stops at $v^*$ with probability $n_s / (\soffset + j)$, where $n_s\ge 1$ is the number of particles that previously chose to stop at $v^*$.  If the particle stops, then remove it from the subtree following $v^*$.

\end{enumerate}
Accept the new tree $\Tcal_N^*$ with probability
\[ 
A_{\mathcal T_N \rightarrow \mathcal T_N^*}
	= 
	\max \left \{
		1 , 
		\frac { p ( \bm Y \given \Tcal_N^* ) }
				{ p ( \bm Y \given \Tcal_N ) }
		\frac { \sum_{u \in \Scal(\Tcal_N^*) \setminus \{v^*\} } m(u) / \sum_{u \in \Scal(\Tcal_N^*)} m(u) }
				{ m(f) / \sum_{u \in \Vcal(\Tcal_N)} m(u) }
		\right \}
		.
\]
\end{enumerate}
%

\subsection{Resample replicate and stop node configurations}

We propose changing the decisions that particles take at replicate and stop nodes with the following proposals:
Select an internal node $v^* \in \Rcal(\Tcal_N) \cup \Scal(\Tcal_N)$ in the tree with probability proportional to the number of particles down the branch ending at $v^*$.
Uniformly at random, select one of the particles down the branch ending at $v^*$.
Propose $\Tcal_N^*$ as follows:
\begin{enumerate}

\item {\bf If $v^*$ is a replicate node:}  If the selected particle chose to replicate at $v^*$ and take the divergent path, propose removing the particle from the subtree emanating from the divergent branch at $v^*$.
If the particle did not replicate at $v^*$, propose replicating the particle by sampling the replicate's paths down the subtree emanating from the divergent branch until $t=1$ according to the prior.

\item {\bf If $v^*$ is a stop node:}  If the selected particle chose to stop at $v^*$, propose adding the particle to the subtree rooted at $v^*$ by sampling its paths down the subtree until $t=1$ according to the prior.
If the particle did not stop at $v^*$, propose stopping the particle by removing it from the subtree rooted at $v^*$.

\end{enumerate}
In both cases, accept $\Tcal_N^*$ with probability
\[
A_{\mathcal T_N \rightarrow \mathcal T_N^*}
	=
	\max \left \{
		1 , 
		\frac { p ( \bm Y \given \Tcal_N^* ) \: \sum_{u \in \Rcal (\Tcal_N) \cup \Scal (\Tcal_N)} m(u) }
				{ p ( \bm Y \given \Tcal_N ) \: \sum_{u \in \Rcal (\Tcal_N^*) \cup \Scal (\Tcal_N^*)} m(u) }
		\right \}
		.
\]

\subsection{Heuristics to prune and thicken branches}
\label{sec:heuristicmoves}

In our experiments, we also found the following heuristics for removing poorly populated replicate and stop nodes helpful.

\begin{enumerate}
\item {\bf Prune thin branches}: Select a divergent branch emanating from some replicate node $v^*$ with probability inversely proportional to the \emph{proportion} of particles that decided to replicate and take the divergent branch.
Propose removing the subtree emanating from this branch to form $\Tcal_N^*$.

\item {\bf Thickening branches}: Select a stop node $v^*$ in the tree with probability inversely proportional to the number of particles that stopped at the node.
Propose removing the stop node on this branch as in \cref{sec:addremovenodes} to propose $\Tcal_N^*$.
\end{enumerate}
For both moves we accept $\Tcal_N^*$ with probability
\[
A_{\mathcal T_N \rightarrow \mathcal T_N^*}
	=
	\max \left \{
		1 , 
		\frac { p ( \bm Y \given \Tcal_N^* )\, p( \Tcal_N^*) }
				{ p ( \bm Y \given \Tcal_N )\, p(\Tcal_N) }
		\right \}
		,
\]
where $p(\Tcal_N)$ is given by the product of \cref{eq:prob_nodes,eq:prob_branches}.

These moves specifically target the removal of unpopular branch and stop nodes, which we found worthwhile to exploit.
It is not clear how to appropriately propose adding and subtracting unpopular nodes to the tree structure in an analogous manner.
Because such reverse proposals are not made, these moves do not leave the stationary distribution of the Markov chain invariant, however, in our experiments they shortened the times until burn-in.
Following an appropriate burn-in period, a practitioner may therefore remove these steps from the sampler in order to produce samples from the correct stationary distribution.
These moves are not included in the joint distribution tests in \cref{sec:Geweke}.

\section{Joint distribution tests}
\label{sec:Geweke}

We can verify that our MCMC procedure in \cref{sec:inference} and \cref{sec:furtherinference} (not including the heuristic move described therein) are sampling from the correct posterior distributions (after convergence of the Markov chain) with the joint distribution tests discussed by \citet{Geweke2004}.
In general, there are two methods to sample from the joint distribution $p( Y, \Theta )$ over latent parameters $\Theta$ and data $Y$. 
The first is referred to as a ``marginal-conditional'' sampler, which proceeds as follows:
\begin{flalign*}
&\text{\bf for } m = 1,\dotsc, M \: \text{\bf do:} & \\
&\qquad \Theta^{(m)} \dist p(\Theta) & \\
&\qquad Y^{(m)} \given \Theta^{(m)} \dist p( \, Y \given \Theta^{(m)} ) & \\
&\text{\bf end}
\end{flalign*}
which produces $M$ independent samples of $(Y, \Theta)$. 
The second method is referred to as a ``successive-conditional'' sampler, which proceeds as follows:
\begin{flalign*}
&\Theta^{(1)} \dist p( \Theta ) & \\
&Y^{(1)} \given \Theta^{(1)} \dist p ( \, Y \given \Theta^{(1)} ) & \\
&\text{\bf for } m = 2,\dotsc, M \: \text{\bf do:} & \\
& \qquad \Theta^{(m)} \given \Theta^{(m-1)}, Y^{(m-1)} \dist Q( \Theta \given \Theta^{(m-1)}, Y^{(m-1)} ) & \\
&\qquad Y^{(m)} \given \Theta^{(m)} \dist p( \, Y \given \Theta^{(m)} ) & \\
&\text{\bf end}
\end{flalign*}
where $Q$ represents a single (or multiple) iterations of our MCMC sampler.
We may then compare parameter values and test statistics computed on the two sets of samples and check if they appear to come from the same distribution.
We run these joint distribution tests on our MCMC sampler with $N=5$, $D=2$, and the prior distributions specified in \cref{sec:hyperparameters}, producing 2,000 samples (thinned from 200,000 iterations in order to reduce autocorrelation).
We can visually inspect the histograms of some parameter values from the two different samplers, for example, in \cref{fig:geweke_plots} we display histograms of the value of $\sigma_Y$ and the number of branch nodes in the tree from the two samplers.
While the two distributions for each statistic do indeed look similar, we can check this formally with a two-sample Kolmogorov--Smirnov (KS) test, where the null hypothesis is that the two sets of samples come from the same distribution.
A KS-test comparing the two sets of samples for $\sigma_Y$ produces a p-value of 0.6406, and for the number of branch nodes produces a p-value of 0.1036, so at a 0.05 significance level, we fail to reject the null hypothesis that the two samples come from the same distribution.
In \cref{tab:geweke}, we display the p-values for the model hyperparameters and the following test statistics for the tree structure: The numbers of latent features $K$, branch nodes, stop nodes, and non-zero entries in $\bm Z$.  We also test the density of the matrix $\bm Z$ and the time of the first node in the tree structure following the root.
All tests pass at a 0.05 significance level.
\begin{figure}[h!]
\centering
\subfigure[\# branch nodes -- marginal-conditional]{
                \includegraphics[scale=.3]{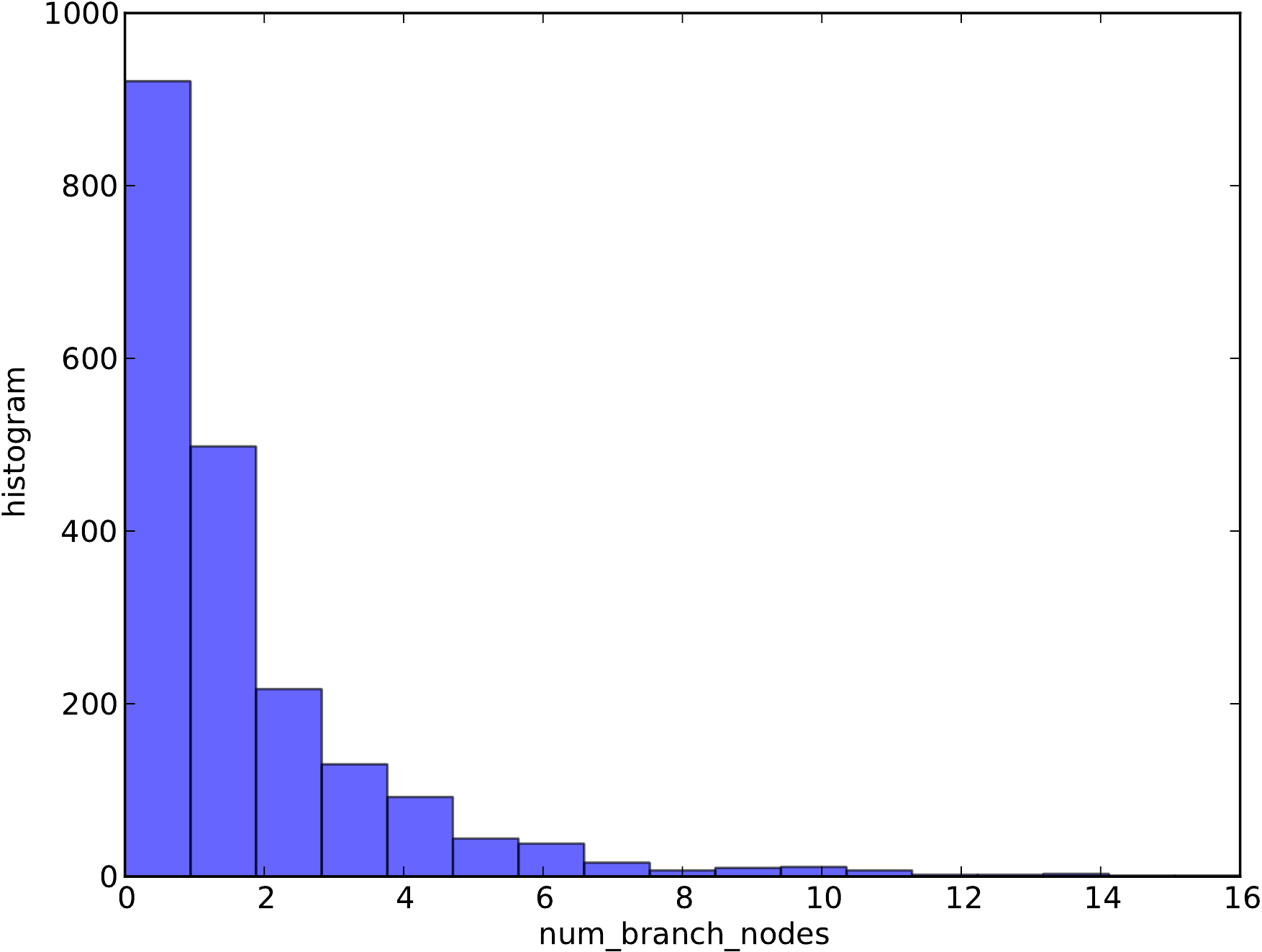}
                }
        \qquad
\subfigure[\# branch nodes -- successive-conditional]{
                \includegraphics[scale=.3]{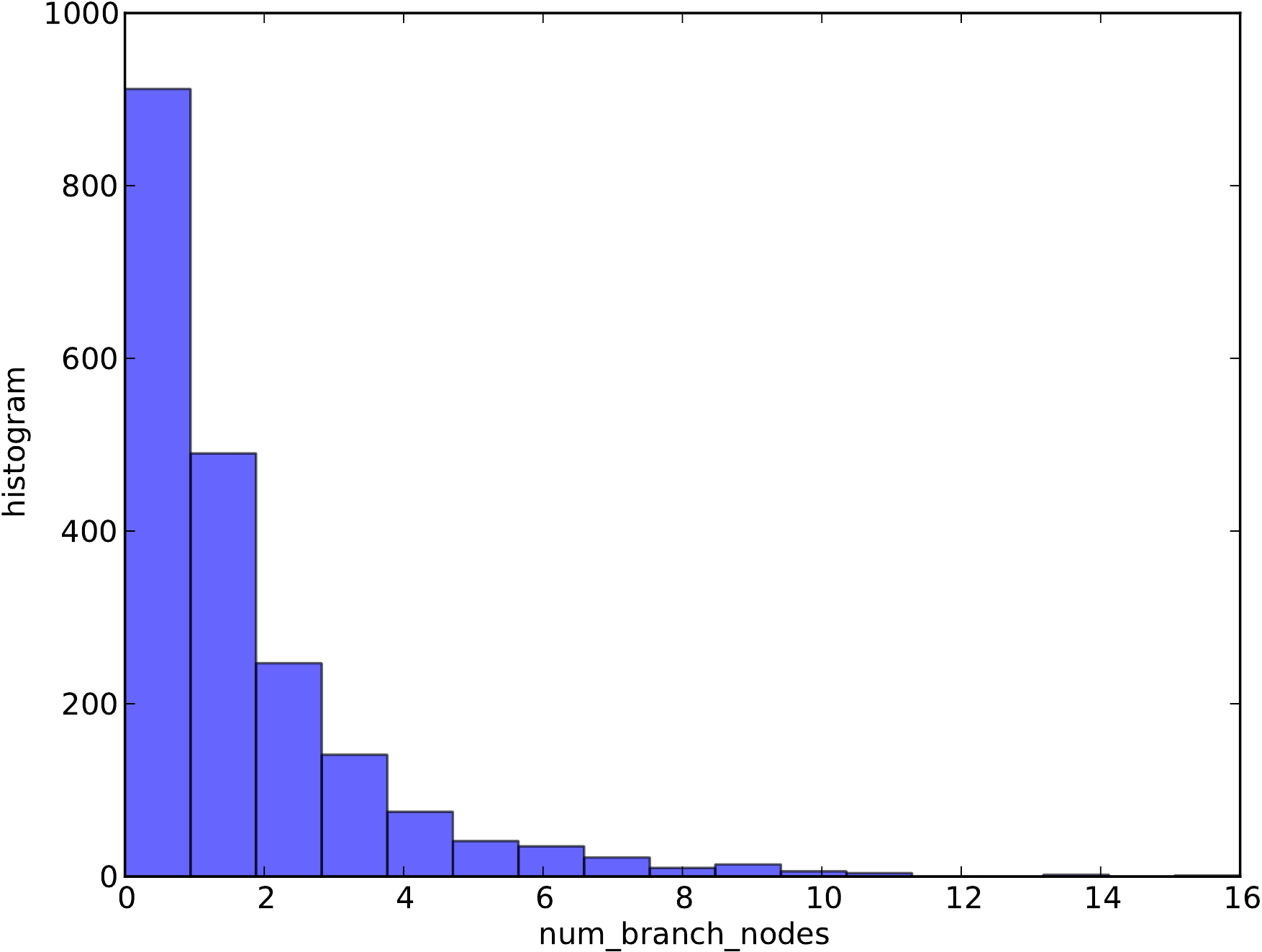}
        }
\subfigure[$\sigma_Y$ -- marginal-conditional]{
                \includegraphics[scale=.3]{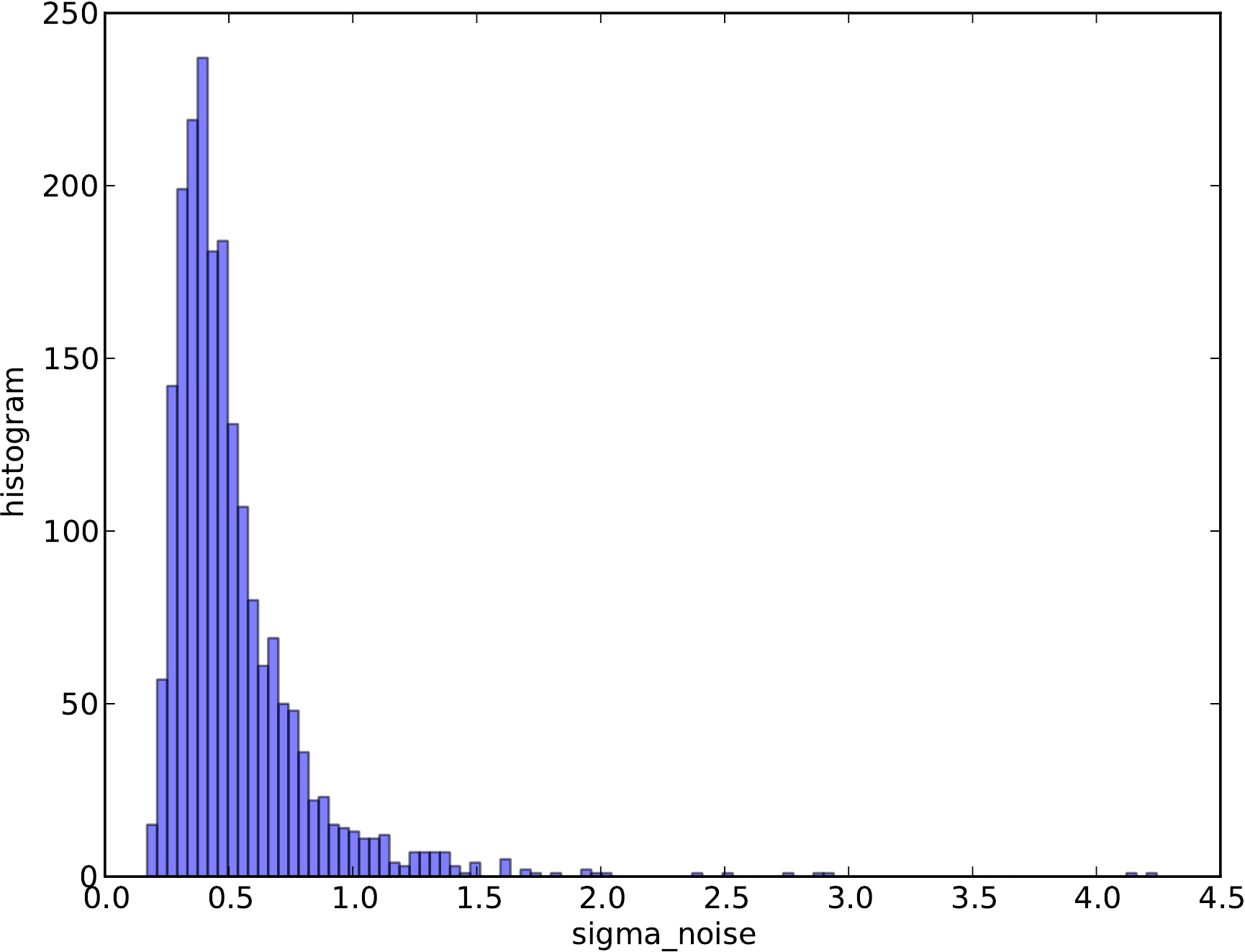}
                }
                \qquad
\subfigure[$\sigma_Y$ -- successive-conditional]{
                \includegraphics[scale=.3]{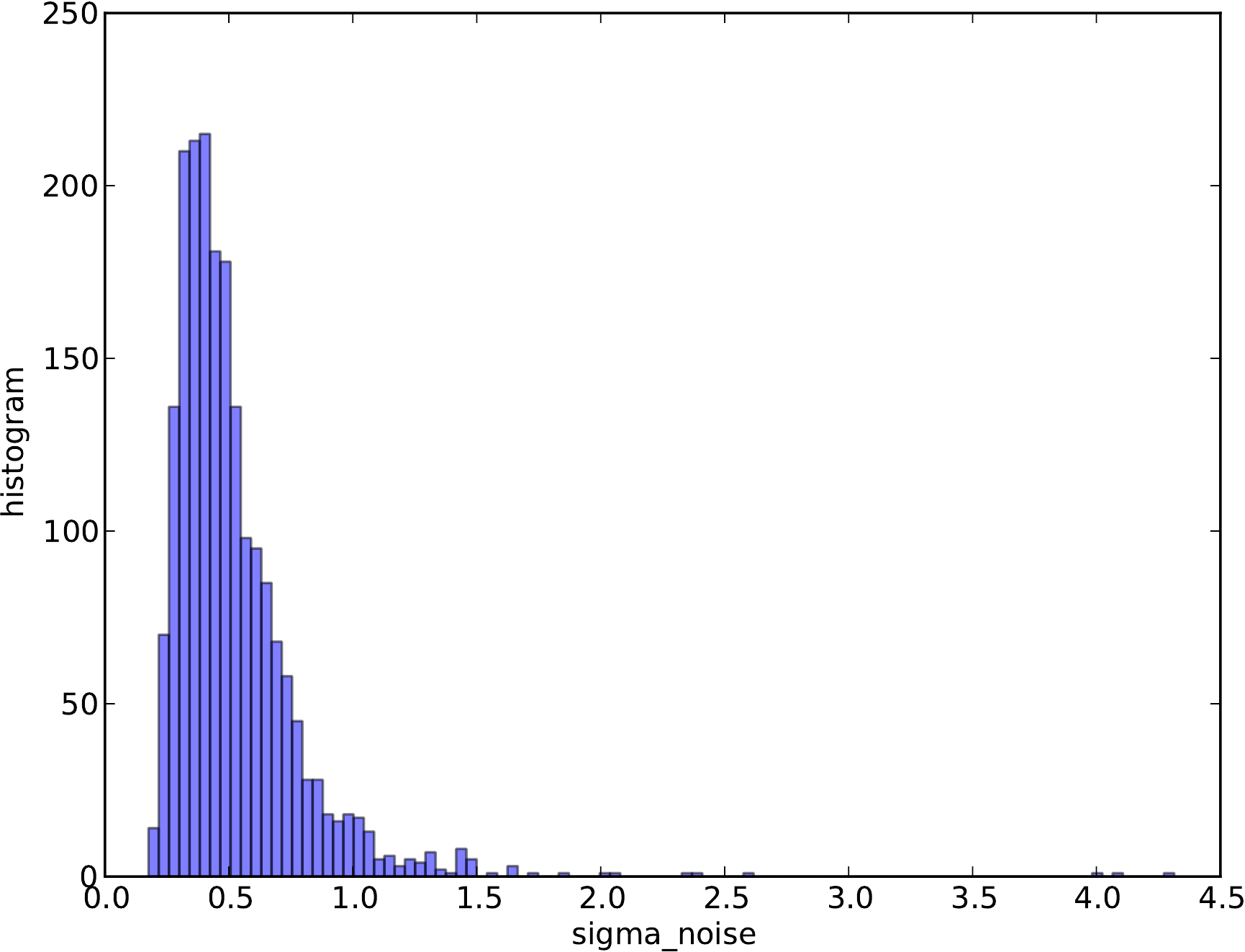}
        }
      	\caption{Histograms of 2,000 independent samples (thinned from 200,000 iterations) from the marginal-conditional and successive-conditional samplers, both of which should sample from the joint distribution over latent parameters and observed data in the linear Gaussian example.  (a) and (b) show the histograms of the number of branch nodes in the tree structure.  (c) and (d) show the histograms for values of the parameter $\sigma_Y$.}
	\label{fig:geweke_plots}
\end{figure}
\begin{table}[h!]
\caption{p-values for two-sample Kolmogorov--Smirnov tests of the null hypothesis that the samples (of various test statistics and latent parameter values) from the marginal-conditional and successive-conditional samplers come from the same distribution.  All tests pass at a 0.05 significance level.}
\label{tab:geweke}
\vskip 0.15in
\begin{center}
\begin{small}
\begin{sc}
\begin{tabular}{lcccccc}
\multicolumn{7}{c}{Tree structure statistics} \\
\hline
 & $K$ & \# branch & \# stop & nnz $\bm Z$ & density $\bm Z$ & first time \\
p-val. & 0.0535 & 0.1036 & 0.9967 & 0.5288 & 0.3698 & 0.0846 \\
\hline
\quad \\
\multicolumn{7}{c}{Hyperparameters} \\
\hline
 & $\soffset$ & $\boffset$ & $\sfn$ & $\bfn$ & $\sigma_X$ & $\sigma_Y$ \\
p-val. & 0.4617 & 0.1298 & 0.5287 & 0.4101 & 0.9967 & 0.6406 \\
\hline
\end{tabular}
\end{sc}
\end{small}
\end{center}
\vskip -0.1in
\end{table}

\bibliography{BDT_refs}

\begin{thebibliography}{47}
\providecommand{\natexlab}[1]{#1}
\providecommand{\url}[1]{\texttt{#1}}
\expandafter\ifx\csname urlstyle\endcsname\relax
  \providecommand{\doi}[1]{doi: #1}\else
  \providecommand{\doi}{doi: \begingroup \urlstyle{rm}\Url}\fi

\bibitem[Adams et~al.(2010{\natexlab{a}})Adams, Ghahramani, and
  Jordan]{adams2010tree}
R.~P. Adams, Z.~Ghahramani, and M.~I. Jordan.
\newblock Tree-structured stick breaking for hierarchical data.
\newblock In \emph{Advances in Neural Information Processing Systems 23},
  2010{\natexlab{a}}.

\bibitem[Adams et~al.(2010{\natexlab{b}})Adams, Wallach, and
  Ghahramani]{AWG2010}
R.~P. Adams, H.~M. Wallach, and Z.~Ghahramani.
\newblock Learning the structure of deep sparse graphical models.
\newblock In \emph{Proceedings of the 13th International Conference on
  Artificial Intelligence and Statistics}, 2010{\natexlab{b}}.

\bibitem[Athreya and Ney(1972)]{athreya1972branching}
K.~B. Athreya and P.~E. Ney.
\newblock \emph{Branching processes}.
\newblock Springer, 1972.

\bibitem[Bertoin(2006)]{bertoin2006random}
J.~Bertoin.
\newblock \emph{Random fragmentation and coagulation processes}.
\newblock Cambridge University Press, 2006.

\bibitem[Blei et~al.(2010)Blei, Griffiths, and Jordan]{blei2010nested}
D.~M. Blei, T.~L. Griffiths, and M.~I. Jordan.
\newblock The nested {C}hinese restaurant process and {B}ayesian nonparametric
  inference of topic hierarchies.
\newblock \emph{Journal of the ACM}, 57\penalty0 (2):\penalty0 7, 2010.

\bibitem[Broderick et~al.(2013)Broderick, Pitman, and
  Jordan]{broderick2013feature}
T.~Broderick, J.~Pitman, and M.~I. Jordan.
\newblock Feature allocations, probability functions, and paintboxes.
\newblock \emph{Bayesian Statistics}, 8\penalty0 (4):\penalty0 801--836, 2013.

\bibitem[Desai and Vanneman(2013)]{indiaICPSR}
S.~Desai and R.~Vanneman.
\newblock India human development survey (ihds), 2005.
\newblock Technical Report ICPSR22626-v8, National Council of Applied Economic
  Research, New Delhi, 2013.
\newblock Report available at: http://ihds.umd.edu/index.html; data available
  at: https://www.icpsr.umich.edu/icpsrweb/DSDR/studies/22626.

\bibitem[Doshi{--}Velez and Ghahramani(2009)]{DG2009}
F.~Doshi{--}Velez and Z.~Ghahramani.
\newblock Correlated non-parametric latent feature models.
\newblock In \emph{Proceedings of the 25th Conference on Uncertainty in
  Artificial Intelligence}, 2009.

\bibitem[Elliott and Teh(2012)]{elliott2012scalable}
L.~T. Elliott and Y.~W. Teh.
\newblock Scalable imputation of genetic data with a discrete
  fragmentation-coagulation process.
\newblock In \emph{Advances in Neural Information Processing Systems 25}, 2012.

\bibitem[Foster and Ney(1978)]{foster1978}
J.~Foster and P.~Ney.
\newblock Limit laws for decomposable critical branching processes.
\newblock \emph{Probability Theory and Related Fields}, 46\penalty0
  (1):\penalty0 13--43, 1978.

\bibitem[Gershman et~al.(2014)Gershman, Frazier, and Blei]{GFB2011}
S.~J. Gershman, P.~I. Frazier, and D.~M. Blei.
\newblock Distance dependent infinite latent feature models.
\newblock \emph{IEEE Transactions on Pattern Analysis and Machine
  Intelligence}, 2014.
\newblock Special issue on Bayesian nonparametrics.

\bibitem[Geweke(2004)]{Geweke2004}
J.~Geweke.
\newblock Getting it right: Joint distribution tests of posterior simulators.
\newblock \emph{Journal of the American Statistical Association}, 99\penalty0
  (467):\penalty0 799--804, 2004.

\bibitem[Ghahramani et~al.(2007)Ghahramani, Griffiths, and Sollich]{GGS2007}
Z.~Ghahramani, T.~L. Griffiths, and P.~Sollich.
\newblock Bayesian nonparametric latent feature models.
\newblock \emph{Bayesian Statistics}, 8:\penalty0 201--226, 2007.
\newblock See also the discussion and rejoinder.

\bibitem[Griffiths and Ghahramani(2006)]{GG2006}
T.~L. Griffiths and Z.~Ghahramani.
\newblock Infinite latent feature models and the {I}ndian buffet process.
\newblock In \emph{Advances in Neural Information Processing Systems 19}, 2006.

\bibitem[Griffiths and Ghahramani(2011)]{GG2011}
T.~L. Griffiths and Z.~Ghahramani.
\newblock The {I}ndian buffet process: An introduction and review.
\newblock \emph{Journal of Machine Learning Research}, 12:\penalty0 1185--1224,
  2011.

\bibitem[Harris(2002)]{harris2002theory}
T.~E. Harris.
\newblock \emph{The theory of branching processes}.
\newblock Courier Dover Publications, 2002.

\bibitem[Hjort(1990)]{hjort1990nonparametric}
N.~L. Hjort.
\newblock Nonparametric {B}ayes estimators based on beta processes in models
  for life history data.
\newblock \emph{The Annals of Statistics}, 18\penalty0 (3):\penalty0 985--1500,
  1990.

\bibitem[Hu et~al.(2013)Hu, Boyd{--}Graber, III, and Ying]{hu2013binary}
Y.~Hu, J.~L. Boyd{--}Graber, H.~Daum{\'e} III, and Z.~I. Ying.
\newblock Binary to bushy: {B}ayesian hierarchical clustering with the beta
  coalescent.
\newblock In \emph{Advances in Neural Information Processing Systems 26}, 2013.

\bibitem[Kao et~al.(2004)Kao, Yang, Boscolo, Sabatti, Roychowdhury, and
  Liao]{kao2004transcriptome}
K.~C. Kao, Y.~Yang, R.~Boscolo, C.~Sabatti, V.~Roychowdhury, and J.~C. Liao.
\newblock Transcriptome-based determination of multiple transcription regulator
  activities in escherichia coli by using network component analysis.
\newblock \emph{Proceedings of the National Academy of Sciences of the United
  States of America}, 101\penalty0 (2):\penalty0 641--646, 2004.

\bibitem[Kesten and Stigum(1967)]{KestenStigum1967}
H.~Kesten and B.~P. Stigum.
\newblock Limit theorems for decomposable multi-dimensional {G}alton--{W}atson
  processes.
\newblock \emph{Journal of Mathematical Analysis and Applications},
  17:\penalty0 309--338, 1967.

\bibitem[Kingman(1982{\natexlab{a}})]{kingman1982coalescent}
J.~F.~C. Kingman.
\newblock The coalescent.
\newblock \emph{Stochastic Processes and Their Applications}, 13\penalty0
  (3):\penalty0 235--248, 1982{\natexlab{a}}.

\bibitem[Kingman(1982{\natexlab{b}})]{kingman1982genealogy}
J.~F.~C. Kingman.
\newblock On the genealogy of large populations.
\newblock \emph{Journal of Applied Probability}, pages 27--43,
  1982{\natexlab{b}}.

\bibitem[Knowles(2012)]{KnowlesThesis2012}
D.~A. Knowles.
\newblock \emph{Bayesian non-parametric models and inference for sparse and
  hierarchical latent structure}.
\newblock PhD thesis, University of Cambridge, 2012.

\bibitem[Knowles and Ghahramani(2011)]{KnowlesGhahramani2011}
D.~A. Knowles and Z.~Ghahramani.
\newblock {P}itman--{Y}or diffusion trees.
\newblock In \emph{Proceedings of the 27th Conference on Uncertainty in
  Artificial Intelligence}, 2011.

\bibitem[Knowles and Ghahramani(2014)]{knowlesTPAMIPYDT}
D.~A. Knowles and Z.~Ghahramani.
\newblock Pitman--{Y}or diffusion trees for {B}ayesian hierarchical clustering.
\newblock \emph{IEEE Transactions on Pattern Analysis and Machine
  Intelligence}, 2014.
\newblock Special issue on Bayesian nonparametrics.

\bibitem[Knowles et~al.(2011)Knowles, Gael, and Ghahramani]{knowles2011message}
D.~A. Knowles, J.~Van Gael, and Z.~Ghahramani.
\newblock Message passing algorithms for the {D}irichlet diffusion tree.
\newblock In \emph{Proceedings of the 28th International Conference on Machine
  Learning}, 2011.

\bibitem[Meeds et~al.(2007)Meeds, Ghahramani, Neal, and Roweis]{meeds2007}
E.~Meeds, Z.~Ghahramani, R.~M. Neal, and S.~T. Roweis.
\newblock Modeling dyadic data with binary latent factors.
\newblock In \emph{Advances in Neural Information Processing Systems 20}, 2007.

\bibitem[Miller et~al.(2008)Miller, Griffiths, and Jordan]{MGJ2008}
K.~T. Miller, T.~L. Griffiths, and M.~I. Jordan.
\newblock The phylogenetic {I}ndian buffet process: A non-exchangeable
  nonparametric prior for latent features.
\newblock In \emph{Proceedings of the 24th Conference on Uncertainty in
  Artificial Intelligence}, 2008.

\bibitem[Mode(1971)]{mode1971}
C.~J. Mode.
\newblock \emph{Multitype branching processes: theory and applications}.
\newblock Elsevier, 1971.

\bibitem[Neal(2000)]{neal2000markov}
R.~M. Neal.
\newblock Markov chain sampling methods for {D}irichlet process mixture models.
\newblock \emph{Journal of Computational and Graphical Statistics}, 9\penalty0
  (2):\penalty0 249--265, 2000.

\bibitem[Neal(2003{\natexlab{a}})]{Neal2003}
R.~M. Neal.
\newblock Slice sampling.
\newblock \emph{The Annals of Statistics}, 31\penalty0 (3):\penalty0 705--741,
  2003{\natexlab{a}}.

\bibitem[Neal(2003{\natexlab{b}})]{NealDDT2003}
R.~M. Neal.
\newblock Density modeling and clustering using {D}irichlet diffusion trees.
\newblock \emph{Bayesian Statistics}, 7, 2003{\natexlab{b}}.

\bibitem[Ogura(1975)]{ogura1975}
Y.~Ogura.
\newblock Asymptotic behavior of multitype {G}alton--{W}atson processes.
\newblock \emph{Journal of Mathematics of Kyoto University}, 15\penalty0
  (2):\penalty0 251--302, 1975.

\bibitem[Paisley et~al.(2014)Paisley, Wang, Blei, and Jordan]{PWBJ2012}
J.~Paisley, C.~Wang, D.~M. Blei, and M.~I. Jordan.
\newblock Nested hierarchical {D}irichlet processes.
\newblock \emph{IEEE Transactions on Pattern Analysis and Machine
  Intelligence}, 2014.
\newblock Special issue on Bayesian nonparametrics.

\bibitem[Palla et~al.(2012)Palla, Knowles, and
  Ghahramani]{palla2012nonparametric}
K.~Palla, D.~A. Knowles, and Z.~Ghahramani.
\newblock A nonparametric variable clustering model.
\newblock In \emph{Advances in Neural Information Processing Systems 25}, 2012.

\bibitem[Perman et~al.(1992)Perman, Pitman, and Yor]{perman1992size}
M.~Perman, J.~Pitman, and M.~Yor.
\newblock Size-biased sampling of {P}oisson point processes and excursions.
\newblock \emph{Probability Theory and Related Fields}, 92\penalty0
  (1):\penalty0 21--39, 1992.

\bibitem[Pitman(1999)]{pitman1999coalescents}
J.~Pitman.
\newblock Coalescents with multiple collisions.
\newblock \emph{Annals of Probability}, 27\penalty0 (4):\penalty0 1870--1902,
  1999.

\bibitem[Pitman(2006)]{pitman2006combinatorial}
J.~Pitman.
\newblock \emph{Combinatorial stochastic processes}.
\newblock Springer--Verlag, 2006.

\bibitem[Rodriguez et~al.(2008)Rodriguez, Dunson, and
  Gelfand]{rodriguez2008nested}
A.~Rodriguez, D.~B. Dunson, and A.~E. Gelfand.
\newblock The nested {D}irichlet process.
\newblock \emph{Journal of the American Statistical Association}, 103\penalty0
  (483), 2008.

\bibitem[Savin and Chistyakov(1962)]{savin1962}
A.~A. Savin and V.~P. Chistyakov.
\newblock Some theorems for branching processes with several types of
  particles.
\newblock \emph{Theory of Probability and Its Applications}, 7:\penalty0
  93--100, 1962.

\bibitem[Steinhardt and Ghahramani(2012)]{steinhardt2012flexible}
J.~Steinhardt and Z.~Ghahramani.
\newblock Flexible martingale priors for deep hierarchies.
\newblock In \emph{Proceedings of the 15th International Conference on
  Artificial Intelligence and Statistics}, 2012.

\bibitem[Teh(2006)]{teh2006hierarchical}
Y.~W. Teh.
\newblock A hierarchical bayesian language model based on {P}itman--{Y}or
  processes.
\newblock In \emph{Proceedings of the 21st International Conference on
  Computational Linguistics and the 44th annual meeting of the Association for
  Computational Linguistics}, 2006.

\bibitem[Teh et~al.(2006)Teh, Jordan, Beal, and Blei]{TehEA2006}
Y.~W. Teh, M.~I. Jordan, M.~J. Beal, and D.~M. Blei.
\newblock Hierarchical {D}irichlet processes.
\newblock \emph{Journal of the American Statistical Association}, 101\penalty0
  (476):\penalty0 1566--1581, 2006.

\bibitem[Teh et~al.(2007)Teh, III, and Roy]{teh2007treecoalescent}
Y.~W. Teh, H.~Daum{\'e} III, and D.~M. Roy.
\newblock Bayesian agglomerative clustering with coalescents.
\newblock In \emph{Advances in Neural Information Processing Systems 20}, 2007.

\bibitem[Teh et~al.(2011)Teh, Blundell, and Elliott]{teh2011modelling}
Y.~W. Teh, C.~Blundell, and L.~T. Elliott.
\newblock Modelling genetic variations with fragmentation-coagulation
  processes.
\newblock In \emph{Advances in Neural Information Processing Systems 24}, 2011.

\bibitem[Thibaux and Jordan(2007)]{ThibauxJordan2007}
R.~Thibaux and M.~I. Jordan.
\newblock Hierarchical beta processes and the {I}ndian buffet process.
\newblock In \emph{Proceedings of the 11th International Conference on
  Artificial Intelligence and Statistics}, 2007.

\bibitem[{UN Development Programme}(2013)]{UNdevel2013}
{UN Development Programme}.
\newblock Human development report 2013: The rise of the south: Human progress
  in a diverse world.
\newblock Technical report, United Nations, 2013.
\newblock Report available at: http://hdr.undp.org/en/2013-report; data
  available at: http://hdr.undp.org/en/data.

\end{thebibliography}

\end{document}